\newif\ifarxiv
\newif\ifappendix
\let\cite\citep
\def\eqref#1{equation~\ref{#1}}
\def\1{\bm{1}}
\def\rr{{\textnormal{r}}}
\def\vone{{\bm{1}}}
\def\vmu{{\bm{\mu}}}
\def\vtheta{{\bm{\theta}}}
\def\va{{\bm{a}}}
\def\vb{{\bm{b}}}
\def\ve{{\bm{e}}}
\def\vf{{\bm{f}}}
\def\vg{{\bm{g}}}
\def\vq{{\bm{q}}}
\def\vs{{\bm{s}}}
\def\vu{{\bm{u}}}
\def\vv{{\bm{v}}}
\def\vw{{\bm{w}}}
\def\vx{{\bm{x}}}
\def\vy{{\bm{y}}}
\DeclareMathAlphabet{\mathsfit}{\encodingdefault}{\sfdefault}{m}{sl}
\SetMathAlphabet{\mathsfit}{bold}{\encodingdefault}{\sfdefault}{bx}{n}
\declaretheorem{theorem}
\declaretheorem{lemma}
\declaretheorem{definition}
\declaretheorem{corollary}
\declaretheorem{assumption}
\def\gatef#1#2{\tilde #1^{#2}}
\def\gate#1{\tilde #1}
\def\cE{\mathcal{E}}
\def\cK{\mathcal{K}}
\def\cS{\mathcal{S}}
\def\cL{\mathcal{L}}
\def\con{\mathbb{C}}
\def\cO{\mathcal{O}}
\def\vtheta{\boldsymbol{\theta}}
\def\aug{\mathrm{aug}}
\def\diag{\mathrm{diag}}
\def\dd{\mathrm{d}}
\def\vol{\mathrm{vol}}
\def\bn{{\mathrm{bn}}}
\def\sg{\mathrm{sg}}
\def\tr{\mathrm{tr}}
\def\var{\mathbb{V}}
\def\ee{\mathbb{E}}
\def\pr{\mathbb{P}}
\def\vmu{\boldsymbol{\mu}}
\def\rr{\mathbb{R}}
\def\lme{LME}
\def\vphi{\boldsymbol{\phi}}
\def\pd{p_\mathrm{D}}
\def\iter#1{(#1)}
\def\gap{\mathrm{gap}}
\newcommand{\yuandong}[1]{\textcolor{red}{TODO: #1}}
\newcommand{\rebuttal}[1]{#1}
\title{Understanding the Role of Nonlinearity in Training Dynamics of Contrastive Learning}
\author{Yuandong Tian \\ 
Meta AI (FAIR) \\
\texttt{yuandong@meta.com}}
\begin{document}
\maketitle

\begin{abstract}
While the empirical success of self-supervised learning (SSL) heavily relies on the usage of deep nonlinear models, existing theoretical works on SSL understanding still focus on linear ones. In this paper, we study the role of nonlinearity in the training dynamics of contrastive learning (CL) on one and two-layer nonlinear networks with homogeneous activation $h(x) = h'(x)x$. We have two major theoretical discoveries. First, the presence of nonlinearity can lead to many local optima even in 1-layer setting, each corresponding to certain patterns from the data distribution, while with linear activation, only one major pattern can be learned. This suggests that models with lots of parameters can be regarded as a \emph{brute-force} way to find these local optima induced by nonlinearity. Second, in the 2-layer case, linear activation is proven not capable of learning specialized weights into diverse patterns, demonstrating the importance of nonlinearity. In addition, for 2-layer setting, we also discover \emph{global modulation}: those local patterns discriminative from the perspective of global-level patterns are prioritized to learn, further characterizing the learning process. Simulation verifies our theoretical findings.  
\end{abstract}

\section{Introduction}
Over the last few years, deep models have demonstrated impressive empirical performance in many disciplines, not only in supervised but also in recent self-supervised setting (SSL), in which models are trained with a surrogate loss (e.g., predictive~\citep{devlin2018bert,he2021masked}, contrastive~\citep{chen2020simclr,caron2020swav,he2020momentum} or noncontrastive loss~\citep{byol,chen2020simsiam})  and its learned representation is then used for downstream tasks. 


From the theoretical perspective, understanding the roles of nonlinearity in deep neural networks is one critical part of understanding how modern deep models work. Currently, most works focus on linear variants of deep models~\citep{jacot2018neural,arora2018a,kawaguchi2016deep,jing2021understanding,tian2021understanding,wang2021towards}. When nonlinearity is involved, deep models are often treated as richer families of black-box functions than linear ones~\citep{arora2019theoretical,haochen2021provable}. The role played by nonlinearity is also studied, mostly on model expressibility   ~\citep{guhring2020expressivity,raghu2017expressive,lu2017expressive} in which specific weights are found to fit the complicated structure of the data well, regardless of the training algorithm. However, many questions remain open: if model capacity is the key, why traditional models like $k$-NN~\citep{fix1951nonparametric} or kernel SVM~\citep{cortes1995support} do not achieve comparable empirical performance, even if theoretically they can also fit any functions~\citep{hammer2003note,devroye1994strong}. Moreover, while traditional ML theory suggests carefully controlling model capacity to avoid overfitting, large neural models often generalize well in practice~\citep{brown2020language,abs-2204-02311}.

In this paper, we study the critical role of nonlinearity in the training dynamics of contrastive learning (CL). Specifically, by extending the recent $\alpha$-CL framework~\citep{tian2022deep} and linking it to kernels~\citep{paulsen2016introduction}, we show that even with 1-layer nonlinear networks, nonlinearity plays a critical role by creating many local optima. As a result, the more nonlinear nodes in 1-layer networks with different initialization, the more local optima are likely to be collected as learned patterns in the trained weights, and the richer the resulting representation becomes. Moreover, popular loss functions like InfoNCE tends to have more local optima than quadratic ones. In contrast, in the linear setting, contrastive learning becomes PCA under certain conditions~\citep{tian2022deep}, and only the most salient pattern (i.e., the maximal eigenvector of the data covariance matrix) is learned while other less salient ones are lost, regardless of the number of hidden nodes. 

Based on this finding, we extend our analysis to 2-layer ReLU setting with non-overlapping receptive fields. In this setting, we prove the fundamental limitation of linear networks: the gradients of multiple weights at the same receptive field are always co-linear, preventing diverse pattern learning. 


Finally, we also characterize the interaction between layers in 2-layer network: while in each receptive field, many patterns exist, those contributing to global patterns are prioritized to learn by the training dynamics. This \emph{global modulation} changes the eigenstructure of the low-level covariance matrix so that relevant patterns are learned with higher probability. 

In summary, through the lens of training dynamics, we discover unique roles played by nonlinearity which linear activation cannot do: (1) nonlinearity creates many local optima for different patterns of the data, and (2) nonlinearity enables weight specialization to diverse patterns. In addition, we also discover a mechanism for how global pattern prioritizes which local patterns to learn, shedding light on the role played by network depth. Preliminary experiments on simulated data verify our findings. 

\textbf{Related works}. \rebuttal{Many works analyze network at initialization~\citep{pmlr-v97-hayou19a,DBLP:journals/corr/abs-2106-10165} and avoid the complicated training dynamics.} Previous works~\citep{wilson1997training,li2017convergence,tian2019luck,tian2017analytical,allen2020backward} that analyze training dynamics mostly focus on supervised learning. Different from~\cite{saunshi2022understanding,ji2021power} that analyzes feature learning process in linear models of CL, we focus on the critical role played by nonlinearity. Our analysis is also more general than~\cite{li2017convergence} that focuses on 1-layer ReLU network with symmetric weight structure trained on sparse linear models. \rebuttal{Along the line of studying dynamics of contrastive learning,~\cite{jing2021understanding} analyzes dimensional collapsing on 1 and 2 layer linear networks.~\cite{tian2022deep} proves that such collapsing happens in linear networks of any depth and further analyze ReLU scenarios but with strong assumptions (e.g., one-hot positive input). Our work uses much more relaxed assumptions and performs in-depth analysis for homogeneous activations.}
\def\aug{\mathrm{aug}}
\def\game{\mathrm{g}}

\section{Problem Setup}
\label{sec:setup}
\textbf{Notation.} In this section, we introduce our problem setup of contrastive learning. Let $\vx_0 \sim \pd(\cdot)$ be a sample drawn from the dataset, and $\vx\sim p_{\aug}(\cdot|\vx_0)$ be a augmentation view of the sample $\vx_0$. Here both $\vx_0$ and $\vx$ are random variables. Let $\vf = \vf(\vx;\vtheta)$ be the output of a deep neural network that maps input $\vx$ into some representation space with parameter $\vtheta$ to be optimized. Given a batch of size $N$, $\vx_0[i]$ represent $i$-th sample (i.e., instantiation) of corresponding random variables, and $\vx[i]$ and $\vx[i']$ are two of its augmented views. Here $\vx[\cdot]$ has $2N$ samples, $1 \le i \le N$ and $N+1 \le i' \le 2N$. 

Contrastive learning (CL) aims to learn the parameter $\vtheta$ so that the representation $\vf$ are distinct from each other: we want to maximize squared distance $d_{ij}^2 := \|\vf[i] - \vf[j]\|_2^2/2$ between samples $i\neq j$ and minimize $d_{i}^2 := \|\vf[i] - \vf[i']\|_2^2/2$ between two views $\vx[i]$ and $\vx[i']$ from the same sample $\vx_0[i]$. 

Many objectives in contrastive learning have been proposed to combine these two goals into one. For example, InfoNCE~\citep{oord2018representation} minimizes the following (here $\tau$ is the temperature): 
\begin{equation}
    \!\!\cL_{nce}\!:=\!-\tau\sum_{i=1}^N\log\frac{\exp(-d^2_i/\tau)}{\epsilon\exp(-d^2_i/\tau)\!+\!\sum_{j\neq i} \exp(-d^2_{ij}/\tau)}
\end{equation}
In this paper, we follow $\alpha$-CL~\citep{tian2022deep} that proposes a general CL framework that covers a broad family of existing CL losses. $\alpha$-CL maximizes an energy function $\cE_\alpha (\vtheta)$ using gradient ascent:
\begin{equation}
    \vtheta_{t+1} = \vtheta_t + \eta\nabla_{\vtheta}\cE_{\sg(\alpha(\vtheta_t))}(\vtheta), \label{eq:max-player}
\end{equation}
where $\eta$ is the learning rate, $\sg{}(\cdot)$ is the stop gradient operator, the \emph{energy} function $\cE_\alpha (\vtheta) := \frac{1}{2}\tr\con_\alpha[\vf, \vf]$ and $\con_\alpha[\cdot,\cdot]$ is the \emph{contrastive covariance}~\citep{tian2022deep,jing2021understanding}~\footnote{Compared to~\cite{tian2022deep}, our $\con_\alpha$ definition has an additional constant term $1/2N^2$ to simply the notation.}: 
\begin{equation}
  \con_\alpha[\va,\vb] := \frac{1}{2N^2}\sum_{i,j=1}^N \alpha_{ij} \left[(\va[i] - \va[j])(\vb[i]-\vb[j])^\top - (\va[i] - \va[i'])(\vb[i]-\vb[i'])^\top\right]
\end{equation}
One important quantity is the \emph{pairwise importance} $\alpha(\vtheta) = [\alpha_{ij}(\vtheta)]_{i,j=1}^N$, which are $N^2$ weights on pairwise pairs of $N$ samples in a batch. Intuitively, these weights make the training focus more on \emph{hard negative pairs}, i.e., distinctive sample pairs that are similar in the representation space but are supposed to be separated away. Many existing CL losses (InfoNCE, triplet loss, etc) are special cases of $\alpha$-CL~\citep{tian2022deep} by choosing different $\alpha(\vtheta)$, e.g., quadratic loss corresponds to $\alpha_{ij} := \mathrm{const}$ and InfoNCE (with $\epsilon = 0$) corresponds to $\alpha_{ij} := \exp(-d_{ij}^2/\tau) / \sum_{j\neq i} \exp(-d_{ij}^2/\tau)$.

For brevity $\con_\alpha[\vx] := \con_\alpha[\vx,\vx]$. \rebuttal{For the energy function $\cE_{\alpha}(\vtheta) := \tr\con_\alpha[\vf(\vx;\vtheta)]$, in this work we mainly study its landscape, i.e., existence of local optima, their local properties and overall distributions, where $\vf$ is a nonlinear network with parameters $\vtheta$. Note that in Eqn.~\ref{eq:max-player}, the stop gradient operator $\sg(\alpha)$ means that while the value of $\alpha$ may depend on $\vtheta$, when studying the local property of $\vtheta$, $\alpha$ makes no contribution to the gradient and should be treated as an independent variable.} 

Since $\con_\alpha$ is an abstract mathematical object with complicated definitions, as the first contribution, we give its connection to regular variance $\var[\cdot]$, if the pairwise importance $\alpha$ has certain \emph{kernel structures}~\citep{ghojogh2021reproducing,paulsen2016introduction}:
\begin{definition}[Kernel structure of pairwise importance $\alpha$]
\label{def:kernelalpha}
There exists a (kernel) function $\cK(\cdot,\cdot)$ so that $\alpha_{ij} = \cK(\vx_0[i], \vx_0[j])$. Here $\cK$ satisfies the decomposition $\cK(\va,\vb) = \vphi^\top(\va)\vphi(\vb) = \sum_{l=0}^{+\infty} \phi_l(\va) \phi_l(\vb)$ with non-negative high-dimensional mapping $\vphi(\cdot)=[\phi_l(\cdot)] \ge 0$. 
\end{definition}
\begin{definition}[Adjusted PDF $\tilde p_l(\vx)$]
\label{def:adjusted-pdf}
For $l$-th component $\phi_l$ of the mapping $\vphi$, we define the \emph{adjusted density} $\tilde p_l(\vx;\alpha) := \frac{1}{z_l(\alpha)} \phi_l(\vx;\alpha) \pd(\vx)$, where $z_l(\alpha) := \int \phi_l(\vx) \pd(\vx) \dd\vx\ge 0$ is the normalizer. 
\end{definition}
Obviously $\alpha_{ij} \equiv 1$ (uniform $\alpha$ corresponding to quadratic loss) satisfies Def.~\ref{def:kernelalpha} with 1D mapping $\vphi \equiv 1$. Here we show a non-trivial case, \emph{Gaussian $\alpha$}, whose normalized version leads to InfoNCE: 
\begin{restatable}[Gaussian $\alpha$]{lemma}{expkernel}
\label{lemma:gaussian-alpha}
For any function $\vg(\cdot)$ that is bounded below, if we use $\alpha_{ij}:=\exp(-\|\vg(\vx_0[i]) - \vg(\vx_0[j])\|_2^2/2\tau)$ as the pairwise importance, then it has kernel structure (Def.~\ref{def:kernelalpha}).
\end{restatable}
Note that Gaussian $\alpha$ computes $N^2$ pairwise distances using \emph{un-augmented} samples $\vx_0$, while InfoNCE (and most of CL losses) uses augmented views $\vx$ and $\vx'$ and normalizes along one dimension to yield asymmetric $\alpha_{ij}$. Here Gaussian $\alpha$ is a convenient tool for analysis. We now show $\con_\alpha$ is a summation of regular variances but with different probability of data, adjusted by the pairwise importance $\alpha$ that has kernel structures. Please check Appendix
\ifappendix
\ref{sec:appendix-problem-setup}
\fi
for detailed proofs. 

\begin{restatable}[Relationship between Contrastive Covariance and Variance in large batch size]{lemma}{intuitioncc}
\label{lemma:intuitioncc}
If $\alpha$ satisfies Def.~\ref{def:kernelalpha}, then for any function $\vg(\cdot)$, $\con_\alpha[\vg(\vx)]$ is asymptotically PSD when $N\rightarrow +\infty$: 
\begin{equation}
    \con_\alpha[\vg(\vx)] \rightarrow \sum_l z_l^2 \var_{\vx_0\sim \tilde p_l(\cdot;\alpha)}\left[\ee_{\vx\sim p_\aug(\cdot|\vx_0)}[\vg(\vx)|\vx_0]\right]
\end{equation}

\end{restatable}
\begin{corollary}[No augmentation and large batchsize]
\label{co:cc-var}
With the condition of Lemma~\ref{lemma:intuitioncc}, if we further assume there is no augmentation (i.e., $p_\aug(\vx|\vx_0) = \delta(\vx-\vx_0)$), then $\con_\alpha[\vg] \rightarrow \sum_l z_l^2 \var_{\tilde p_l}[\vg]$. 
\end{corollary}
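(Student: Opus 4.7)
The plan is to simply specialize the general formula supplied by Lemma~\ref{lemma:intuitioncc} to the degenerate augmentation distribution. First I would invoke Lemma~\ref{lemma:intuitioncc} to obtain, in the regime $\alpha_{ij}=1/2N(N-1)$ and $N\to\infty$,
\[
\con_\alpha[\vg(\vx)] \;\longrightarrow\; \var_{\vx_0\sim p(\cdot)}\!\left[\ee_{\vx\sim p_\aug(\cdot|\vx_0)}[\vg(\vx)\,|\,\vx_0]\right].
\]
This is the one nontrivial input; from here the corollary is essentially a one-line substitution.

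Next I would plug in the no-augmentation assumption $p_\aug(\vx|\vx_0)=\delta(\vx-\vx_0)$. By the sifting property of the Dirac measure, the inner conditional expectation collapses to a deterministic quantity, $\ee_{\vx\sim p_\aug(\cdot|\vx_0)}[\vg(\vx)\,|\,\vx_0] = \vg(\vx_0)$, so the right-hand side becomes $\var_{\vx_0\sim p(\cdot)}[\vg(\vx_0)]$. Finally, because $\vx=\vx_0$ almost surely when the augmentation is the identity, the marginal law of $\vx$ agrees with $p$, so $\var_{\vx_0\sim p}[\vg(\vx_0)] = \var[\vg(\vx)]$, matching the claim.

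The only subtlety worth flagging is making the Dirac substitution rigorous, i.e.\ verifying that the collapse $\ee_{\vx\sim \delta(\cdot-\vx_0)}[\vg(\vx)\,|\,\vx_0] = \vg(\vx_0)$ is justified for the class of maps $\vg$ under consideration. Since Lemma~\ref{lemma:intuitioncc} already presumes that $\vg$ is integrable enough for $\ee[\vg(\vx)\,|\,\vx_0]$ to be well-defined, no additional regularity is needed, and this step reduces to the defining property of the delta measure. Once granted, the proof consists of nothing beyond chaining the two identities above, so there is no substantive obstacle; the corollary serves as a sanity-check interpretation of $\con_\alpha$ under the simplest sampling regime.
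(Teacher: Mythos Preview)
Your proposal is correct and matches the paper's treatment: the corollary is stated immediately after Lemma~\ref{lemma:intuitioncc} without a separate proof, so the paper likewise regards it as the one-line specialization you describe---substitute $p_\aug(\vx|\vx_0)=\delta(\vx-\vx_0)$ into the limit from the lemma so the inner expectation collapses to $\vg(\vx_0)$, and observe that $\vx$ and $\vx_0$ coincide in law.
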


\section{One-layer case}
\label{sec:one-layer-case}
Now let us first consider 1-layer network with $K$ hidden nodes: $\vf(\vx;\vtheta) = h(W\vx)$, where $W = [\vw_1, \ldots, \vw_K]^\top\in \rr^{K\times d}$, $\vtheta = \{W\}$ and $h(x)$ is the activation. The $k$-th row of $W$ is a weight $\vw_k$ and its output is $f_k := h(\vw_k^\top\vx)$. In this case, $\tr\con_\alpha[\vf] = \sum_{k=1}^K \con_\alpha[f_k]$.
We consider per-filter normalization $\|\vw_k\|_2=1$, which can be achieved by imposing BatchNorm~\citep{ioffe2015batch} at each node $k$~\citep{tian2022deep}. In this case, optimization can be decoupled into each filter $\vw_k$:
\begin{equation}
    \max_{\vtheta} \cE_\alpha(\vtheta) = \frac{1}{2}\max_{\|\vw_k\|_2 = 1, 1\le k\le K} \tr\con_{\alpha}[\vf] = \frac{1}{2}\sum_{k=1}^K \max_{\|\vw_k\|_2=1} \con_\alpha[h(\vw_k^\top\vx)] \label{eq:one-layer-objective}
\end{equation}
  
Now let's think about, which parameters $\vw_k$ maximizes the summation? For the linear case, since $\con_\alpha[h(\vw^\top\vx)] = \con_\alpha[\vw^\top\vx] = \vw^\top \con_\alpha[\vx]\vw$, all $\vw_k$ converge to the maximal eigenvector of $\con_\alpha[\vx]$ (a constant matrix), regardless of how they are initialized and what the distribution of $\vx$ is. Therefore, the linear case will only learn the most salient single pattern due to the (overly-smooth) landscape of the objective function, a winner-take-all effect that neglects many patterns in the data.  

\begin{figure}
    \centering
    \includegraphics[width=.50\textwidth]{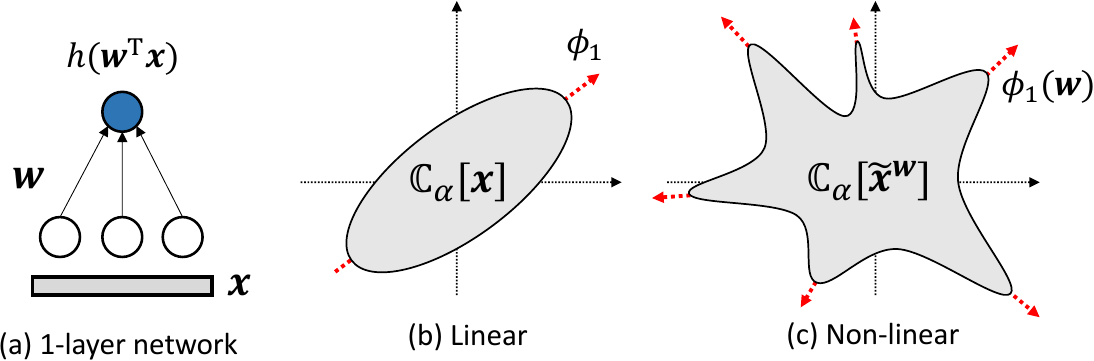}
    \hfill
    \includegraphics[width=.35\textwidth]{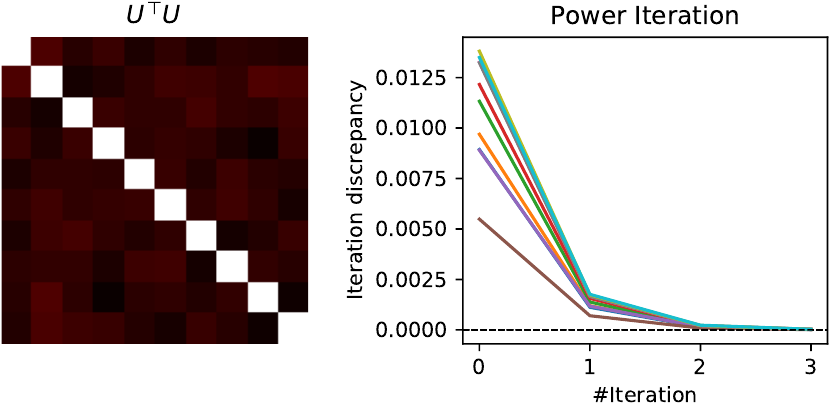}
    \vspace{-0.1in}
    \caption{\small \textbf{Left:} Summary of Sec.~\ref{sec:one-layer-case}. (a) We analyze the dynamics of one-layer network $h(\vw^\top\vx)$ under CL loss (Eqn.~\ref{eq:max-player}). (b) With linear activation $h(x) = x$, then there is only one fixed point (PCA direction). (c) Non-linear activation $h(x)$ creates many critical points and a proper choice of pairwise importance $\alpha$ can make them local optima, enabling learning of diverse features. \textbf{Right:} Convergence patterns (iteration $t$ versus iteration discrepancy $\|\vw\iter{t+1} - \vw\iter{t}\|_2$) of Power Iteration (Eqn.~\ref{eq:power-iteration}) in latent summation models, when $\|U^\top U - I\|_2$ is small but non-zero. In this case, Theorem~\ref{thm:one-layer-find-critical-point} tells there still exist local optima close to each $\vu_m$.}
    \label{fig:ei-fixed-point}
\end{figure}

In contrast, nonlinearity can change the landscape and create more local optima in $\con_\alpha[h(\vw^\top\vx)]$, each capturing one pattern. In this paper, we consider a general category of nonlinearity activations:
\begin{assumption}[Homogeneity~\citep{du2018algorithmic}/Reversibility~\citep{tian2020understanding}]
\label{assumption:homogenity}
The activation satisfies $h(x) = h'(x)x$.
\end{assumption}
Many activations satisfy this assumption, including linear, ReLU, LeakyReLU and monomial activations like $h(x)=x^p$ (with an additional global constant). In this case we have:
\begin{equation}
h(\vw^\top\vx) = \vw^\top h'(\vw^\top\vx) \vx = \vw^\top\gatef{\vx}{\vw}, 
\end{equation}
where $\gatef{\vx}{\vw} := \vx\cdot h'(\vw^\top\vx)$ is the input data after nonlinear gating. When there is no ambiguity, we just write $\gatef{\vx}{\vw}$ as $\gate{\vx}$ and omit the weight superscript. One property is $\con_\alpha[h(\vw^\top\vx)] = \vw^\top\con_\alpha[\gatef{\vx}{\vw}]\vw$.

Now let $A(\vw) := \con_\alpha[\gatef{\vx}{\vw}]$. With the constraint $\|\vw\|_2 = 1$, the learning dynamics is:
\begin{restatable}[Training dynamics of 1-layer network with homogeneous activation in contrastive learning]{lemma}{dynamicsonelayer}
\label{lemma:gradient-dynamics}
The gradient dynamics of Eqn.~\ref{eq:one-layer-objective} is (note that $\alpha$ is treated as an independent variable):  
\begin{equation}
    \dot\vw_k = P_{\vw_k}^{\perp} A(\vw_k)\vw_k \label{eq:gradient-descent-one-node}
\end{equation}
Here $P^\perp_{\vw_k} := I - \vw_k\vw_k^\top$ projects a vector into the complementary subspace spanned by $\vw_k$.
\end{restatable}

See Appendix \ifappendix
\ref{sec:appendix-dynamics}
\fi
for derivations. 
Now the question is that: what is the critical point of the dynamics and whether they are attractive (i.e., local optima). In linear case, the maximal eigenvector is the one fixed point; in nonlinear case, we are looking for \emph{locally} maximal eigenvectors, called \emph{\lme{}}. 
\begin{definition}[Locally maximal eigenvector (\lme)]
\label{def:local-maximal-eigenvector}
$\vw_*$ is \emph{a locally maximal eigenvector} of $A(\vw)$, if $A(\vw_*)\vw_* = \lambda_* \vw_*$, where $\lambda_* = \lambda_{\max}(A(\vw_*))$ is the distinct maximal eigenvalue of $A(\vw_*)$.  
\end{definition}
It is easy to see each \lme{} is a critical point of the dynamics, since $P^\perp_{\vw_*}A(\vw_*)\vw_* = \lambda P^\perp_{\vw_*}\vw_* = 0$. 

\subsection{Examples with multiple \lme{}s in ReLU setting}
\label{sec:two-examples}
To see why the nonlinear activation leads to many \lme{}s in Eqn.~\ref{eq:gradient-descent-one-node}, we first give two examplar generative models of the input $\vx$ that show Eqn.~\ref{eq:gradient-descent-one-node} has multiple critical points, then introduce more general cases. To make the examples simple and clear, we assume the condition of Corollary~\ref{co:cc-var} (no augmentation and large batchsize), and let $\alpha_{ij} \equiv 1$. Notice that $\gatef{\vx}{\vw}$ is a deterministic function of $\vx$, therefore $A(\vw) := \con_\alpha[\gatef{\vx}{\vw}] = \var[\gatef{\vx}{\vw}]$. We also use ReLU activation $h(x) = \max(x,0)$.

Let $U = [\vu_1, \ldots, \vu_M]$ be orthonormal bases ($\vu_m^\top \vu_{m'} = \mathbb{I}(m=m')$). Here are two examples:

\textbf{Latent categorical model}. Suppose $y$ is a categorical random variable taking $M$ possible values, $\pr[\vx|y=m] = \delta(\vx - \vu_m)$. Then we have (see Appendix 
\ifappendix
\ref{sec:appendix-two-examples}
\fi
for detailed steps):  
\begin{equation}
    A(\vw)\big|_{\vw=\vu_m} := \con_\alpha[\gatef{\vx}{\vw}] = \var[\gatef{\vx}{\vw}] = \pr[y=m](1-\pr[y=m])\vu_m\vu_m^\top
\end{equation}
Now it is clear that $\vw = \vu_m$ is an \lme{} for any $m$. 

\textbf{Latent summation model}. Suppose there is a latent variable $\vy$ so that $\vx = U\vy$, where $\vy := [y_1, y_2, \ldots, y_M]$. Each $y_m$ is a standardized Bernoulli random variable: $\ee[y_m] = 0$ and $\ee[y_m^2]=1$. This means that $y_m = y_m^+ := \sqrt{(1-q_m) / q_m}$ with probability $q_m$ and $y_m = y_m^- := -\sqrt{q_m / (1 - q_m)}$ with probability $1 - q_m$. For $m_1\neq m_2$, $y_{m_1}$ and $y_{m_2}$ are independent. Then we have:
\begin{equation}
    A(\vw)\big|_{\vw=\vu_m} :=  \con_\alpha[\gatef{\vx}{\vw}] = \var\left[\gate{\vx}\right] = (1-q_m)^2 \vu_m\vu_m^\top + q_m (I - \vu_m\vu_m^\top) 
\end{equation}
which has a maximal and distinct eigenvector of $\vu_m$ with a unique eigenvalue $(1 - q_m)^2$, when $q_m < \frac{1}{2}(3-\sqrt{5}) \approx 0.382$. Therefore, different $\vw$ leads to different \lme{}s.

In both cases, the presence of ReLU removes the ``redundant energy'' so that $A(\vw)$ can focus on specific directions, creating multiple \lme{}s that correspond to multiple learnable patterns. The two examples can be computed analytically due to our specific choices on nonlinearity $h$ and $\alpha$. 

\def\g{\mathrm{g}}
\def\u{\mathrm{u}}

\subsection{Relate \lme{}s to Local Optima}
\label{sec:lme2localopt}
Once \lme{}s are identified, the next step is to check whether they are attractive, or \emph{stable} critical points, or \emph{local optima}. That is, whether the weights converge into them and stay there during training. For this, some notations are introduced below. 

\textbf{Notations}. Let $\lambda_i(\vw)$ be the $i$-th largest eigenvalue of $A(\vw)$, and $\phi_i(\vw)$ the corresponding unit eigenvector, $\lambda_{\gap}(\vw) := \lambda_1(\vw) - \lambda_2(\vw)$ the eigenvalue gap. Let $\rho(\vw)$ be the \emph{local roughness measure}: $\rho(\vw)$ is the smallest scalar to satisfy $\|(A(\vv) - A(\vw))\vw\|_2 \le \rho(\vw) \|\vv-\vw\|_2 + \cO(\|\vv-\vw\|^2_2)$ in a local neighborhood of $\vw$. The following theorem gives a sufficient condition for stability of $\vw_*$: 
\begin{restatable}[Stability of $\vw^*$]{theorem}{wstability} 
\label{thm:wstability}
If $\vw_*$ is a \lme{} of $A(\vw_*)$ and $\lambda_\gap(\vw_*) > \rho(\vw_*)$, then $\vw_*$ is stable. 
\end{restatable}

This shows that lowering roughness measure $\rho(\vw_*)$ at critical point $\vw_*$ could lead to more local optima and more patterns to be learned. To characterize such a behavior, we bound $\rho(\vw_*)$: 
\begin{restatable}[Bound of local roughness $\rho(\vw)$ in ReLU setting]{theorem}{boundrelu}
\label{thm:boundrelu}
If input $\|\vx\|_2\le C_0$ is bounded, $\alpha$ has kernel structure (Def.~\ref{def:kernelalpha}) and batchsize $N\rightarrow+\infty$, then $\rho(\vw_*) \le \frac{C_0^3\vol(C_0)}{\pi} r(\vw_*,\alpha)$, where $r(\vw,\alpha) := \sum_{l=0}^{+\infty} z_l^2(\alpha) \max_{\vw^\top\vx=0} \tilde p_l(\vx;\alpha)$. 
\end{restatable}
From Thm.~\ref{thm:boundrelu}, the bound critically depends on $r(\alpha)$ that contains the \emph{adjusted density} $\tilde p_l(\vx;\alpha)$ (Def.~\ref{def:adjusted-pdf}) at the plane $\vw_*^\top\vx = 0$. This is because a local perturbation of $\vw_*$ leads to data inclusion/exclusion close to the plane, and thus changes $\rho(\vw_*)$.  Different $\alpha$ leads to different $\tilde p_l(\vx;\alpha)$, and thus different upper bound of $\rho(\vw_*)$, creating fewer or more local optima (i.e., patterns) to learn. Here is an example that shows Gaussian $\alpha$ (see Lemma~\ref{lemma:gaussian-alpha}), whose normalized version is used in InfoNCE, can lead to more local optima than uniform $\alpha$, by lowering roughness bound characterized by $r(\vw_*,\alpha)$: 
\begin{restatable}[Effect of different $\alpha$]{corollary}{differentalpha}
\label{co:different-alpha}
For uniform $\alpha_{\u}$ ($\alpha_{ij} := 1$) and 1-D Gaussian $\alpha_{\g}$ ($\alpha_{ij} := \exp(-\|h(\vw^\top\vx_0[i]) - h(\vw^\top\vx_0[j])\|_2^2/2\tau)$), we have
$r(\vw_*,\alpha_\g) = z_0(\alpha_\g) r(\vw_*,\alpha_\u)$ with $z_0(\alpha_\g) := \int \exp(-h^2(\vw^\top_*\vx)/2\tau) \pd(\vx)\dd\vx \le 1$. As a result, $z_0(\alpha_\g)\ll 1$ leads to $r(\vw_*,\alpha_\g) \ll r(\vw_*,\alpha_\u)$. 
\end{restatable}
In practice, $z_0(\alpha_\g)$ can be exponentially small (e.g., when most data appear on the positive side of the weight $\vw_*$) and the roughness with Gaussian $\alpha$ can be much smaller than that of uniform $\alpha$, which is presumably the reason why InfoNCE outperforms quadratic CL loss~\citep{tian2022deep}. 

\subsection{Finding critical points with initial guess}
\label{sec:critical-point-initial-guess}

In the following, we focus on how can we find an \lme{}, when $A(\vw)$ does not have analytic form. We show that if there is an ``approximate eigenvector'' of $A(\vw) := \con_\alpha[\gatef{\vx}{\vw}]$, then a real one is nearby.  

Let $L$ be the Lipschitz constant of $A(\vw)$: $\|A(\vw)-A(\vw')\|_2 \le L\|\vw-\vw'\|_2$ for any $\vw,\vw'$ on the unit sphere $\|\vw\|_2=1$, and the \emph{correlation function} $c(\vw) := \vw^\top\phi_1(\vw)$ be the inner product between $\vw$ and the maximal eigenvector of $A(\vw)$. We can construct a fixed point using \emph{Power Iteration} (PI)~\citep{golub2013matrix}, starting from initial value $\vw = \vw\iter{0}$:
\begin{equation}
\vw\iter{t + 1} \leftarrow A(\vw\iter{t})\vw\iter{t} / \| A(\vw\iter{t})\vw\iter{t}\|_2 \tag{PI} \label{eq:power-iteration}
\end{equation}
We show that even $A(\vw)$ varies over $\|\vw\|_2=1$, the iteration can still converge to a fixed point $\vw_*$, if the following quantity $\omega(\vw)$, called \emph{irregularity}, is small enough.
\begin{definition}[Irregularity $\omega(\vw)$ in the neighborhood of fixed points]
\label{def:irregularity}
Let $\mu(\vw) := .5(1 + c(\vw))c^{-2}(\vw)\left[1 - \lambda_{\gap}(\vw)/\lambda_1(\vw)\right]^2$ and $\omega(\vw) := \omega(c(\vw), \lambda_\gap(\vw), \lambda_1(\vw), L,\kappa) \ge 0$ defined as
\begin{equation}
    \omega(\vw) := \mu(\vw) + 2\kappa L^2(1 + \mu(\vw) c(\vw)) + 2L\lambda^{-1}_{\gap}(\vw)\sqrt{\mu(\vw)(1 + \mu(\vw) c(\vw))}, \label{eq:omega}
\end{equation}
here $\kappa$ is the high-order eigenvector bound defined in Appendix
\ifappendix
(Lemma~\ref{lemma:eigenvector-perturbation}).
\fi
\end{definition}
Intuitively, when $\vw(0)$ is sufficiently close to any \lme{} $\vw_*$, i.e., $\vw(0)$ is an ``approximate'' \lme{}, we have $\omega(\vw(0)) \ll 1$. In such a case, $\vw(0)$ can be used to find $\vw_*$ using power iteration (Eqn.~\ref{eq:power-iteration}). 

\begin{restatable}[Existence of critical points]{theorem}{onelayercriticalpoint}
\label{thm:one-layer-find-critical-point}
Let $c_0 := c(\vw\iter{0}) \neq 0$. If there exists $\gamma < 1$ so that:
\begin{equation}
    \sup_{\vw \in B_\gamma} \omega(\vw) \le \gamma, \label{eq:power-iteration-condition}
\end{equation}
where $B_\gamma := \left\{\vw: \vw^\top\vw\iter{0} \ge \frac{c_0 - c_\gamma}{1 - c_\gamma}, \ \ c_\gamma := \frac{2\sqrt{\gamma}}{1 +\gamma}\right\}$ is the neighborhood of initial value $\vw(0)$. Then Power Iteration (Eqn.~\ref{eq:power-iteration}) converges to a critical point $\vw_* \in B_\gamma$ of Eqn.~\ref{eq:gradient-descent-one-node}. 
\end{restatable}
See proof in Appendix 
\ifappendix
\ref{sec:appendix-power-iter}
\fi.
Intuitively, with $L$ and $\kappa$ small, $c_0$ close to 1, and $\lambda_{\gap}$ large, Eqn.~\ref{eq:power-iteration-condition} can always hold with $\gamma < 1$ and the fixed point exists. For example, for the two cases in Sec.~\ref{sec:two-examples}, if $U = [\vu_1,\ldots,\vu_M]$ is only approximately orthogonal (i.e., $\|U^\top U - I\|$ is not zero but small), and/or the conditions of Corollary~\ref{co:cc-var} hold roughly, then Theorem~\ref{thm:one-layer-find-critical-point} tells that multiple local optima close to $\vu_m$ still exist for each $m$ (Fig.~\ref{fig:ei-fixed-point}). We leave it for future work to further relax the condition. 

\textbf{Possible relation to empirical observations}. Since there exist many local optima in the dynamics (Eqn.~\ref{eq:gradient-descent-one-node}), even if objective involving $\vw_k$ are identical (Eqn.~\ref{eq:one-layer-objective}), each $\vw_k$ may still converge to different local optima due to initialization. We suspect that this can be a tentative explanation why larger model performs better: more local optima are collected and some can be \emph{useful}. Other empirical observations like lottery ticket hypothesis (LTH)~\citep{frankle2018the,morcos2019one,tian2019luck,Yu2020Playing}, recently also verified in CL~\citep{chen2021lottery}, may also be explained similarly. In LTH, first a large network is trained and pruned to be a small subnetwork $\cS$, then retraining $\cS$ using its original initialization yields comparable or even better performance, while retraining $\cS$ with a different initialization performs much worse. For LTH, our explanation is that $\cS$ contains weights that are initialized \emph{luckily}, i.e., close to useful local optima and converge to them during training. We leave a thorough empirical study to justify this line of thought for future work.

Given this intuition, it is tempting to study the \emph{distribution} of local optima of Eqn.~\ref{eq:gradient-descent-one-node}, their \emph{attractive basin} $\mathrm{Basin}(\vw_*):=\{\vw: \vw(0) = \vw, \lim_{t\rightarrow+\infty} \vw(t) = \vw_*\}$ for each local optimum $\vw_*$, and the probability of random initialized weights fall into them. Interestingly, \rebuttal{\emph{data augmentation} may play an important role, by removing unnecessary local optima with symmetry (see Appendix~\ref{sec:data-aug-local-opt-appendix}), focusing the learning on important patterns}. Theorem~\ref{thm:one-layer-find-critical-point} also gives hints. A formal study is left for future work. 

\def\out{\mathrm{out}}

\section{Two-layer setting}
\label{sec:two-layer}
Now we understand how 1-layer nonlinearity learns in contrastive learning setting. In practice, many patterns exist and most of them may not be relevant for the downstream tasks. A natural question arises: how does the network prioritizes which patterns to learn? To answer this question, we analyze the behavior of 2-layer nonlinear networks with non-overlapping receptive fields (Fig.~\ref{fig:2-layer-setting}(a)). 

\textbf{Setting and Notations}. In the lower layer, there are $K$ disjoint \emph{receptive fields} (abbreviated as \textbf{\emph{RF}}) $\{R_k\}$, each has input $\vx_k$ and $M$ weight $\vw_{km} \in \rr^{d}$ where $m = 1 \ldots M$. The output of the bottom-layer is denoted as $\vf_1$, $f_{1km}$ for its $km$-th component, and $\vf_1[i]$ for $i$-th sample. The top layer has weight $V\in \rr^{d_{\out}\times KM}$. Define $S := V^\top V$. As the $(km,k'm')$ entry of the matrix $S$, $s_{km,k'm'} := [S]_{km,k'm'} = \vv_{km}^\top\vv_{k'm'}$. 

At each RF $R_k$, define $\gate{\vx_{km}}$ as an brief notation of gated input $\gatef{\vx_k}{\vw_{km}} := \vx_k \cdot h'(\vw^\top_{km} \vx_k)$. Define $\gate{\vx_k} := [\gate{\vx_{k1}};\gate{\vx_{k2}};\ldots,\gate{\vx_{kM}}] \in \rr^{Md}$ as the concatenation of $\gate{\vx_{km}}$ and finally $\gate{\vx} := [\gate{\vx_1};\ldots;\gate{\vx_K}] \in \rr^{KMd}$ is the concatenation of all $\gate{\vx_k}$. Similarly, let $\vw_k := [\vw_{km}]_{m=1}^M \in \rr^{Md}$ be a concatenation of all $\vw_{km}$ in the same RF $R_k$, and $\vw := [\vw_k]_{k=1}^K \in \rr^{KMd}$ be a column concatenation of all $\vw_k$. Finally, $P^\perp_\vw := \diag_{km}[P^\perp_{\vw_{km}}]$ is a block-diagonal matrix putting all projections together. 

\begin{figure}
    \centering
    \includegraphics[width=.9\textwidth]{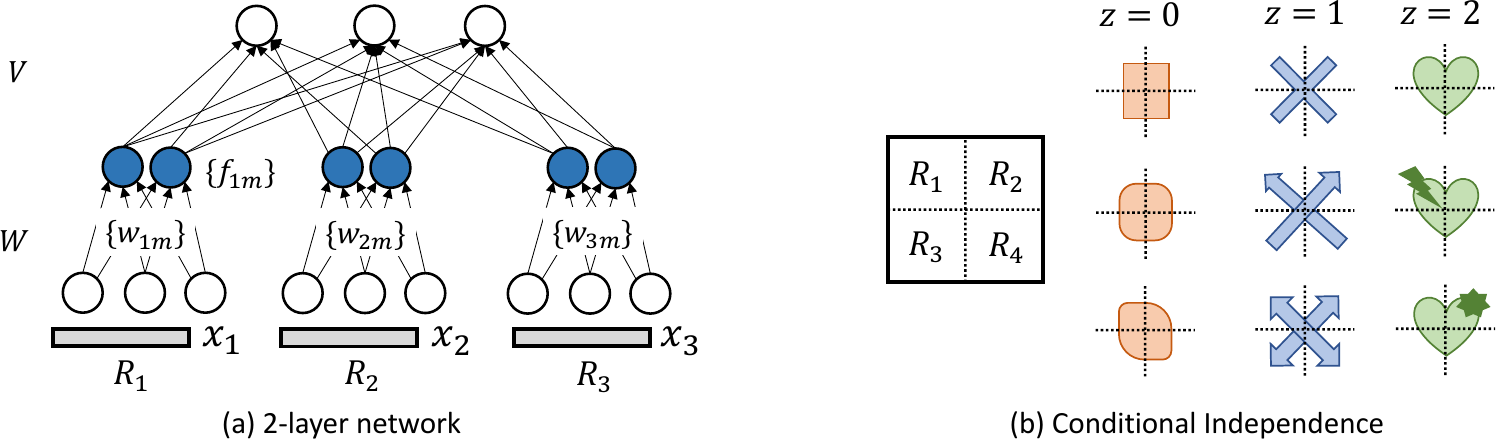}
    \vspace{-0.1in}
    \caption{\small Our setting for 2-layer network. \textbf{(a)} We use $W$ for low-layer weights and $V$ for top-layer weights. There are $K$ disjoint receptive fields (abbreviated as \textbf{\emph{RF}}) $R_k$, each with $M$ weight vectors in the low-layer, denoted as $\vw_{km}$. The activation function of hidden layer nodes is $h(x)$ and can be linear or nonlinear. \textbf{(b)} Conditional independence in Assumption~\ref{assumption:2-layer}: there exists a global categorical variable $z$. Given $z$, variation in different RFs are assumed to be independent.}
    \label{fig:2-layer-setting}
\end{figure}

\begin{restatable}[Dynamics of 2-layer nonlinear network with contrastive loss]{lemma}{dynamicstwolayer} 
\label{lemma:2-layer-dyn}
\begin{equation}
    \dot V = V \con_\alpha[\vf_1],\quad\quad\quad\quad \dot \vw = P_{\vw}^\perp \left[ (S\otimes \vone_d\vone_d^\top) \circ \con_\alpha[\gate{\vx}] \right] \vw
\end{equation}
where $\vone_d$ is $d$-dimensional all-one vector, $\otimes$ is Kronecker product and $\circ$ is Hadamard product.
\end{restatable}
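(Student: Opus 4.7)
The plan is to compute $\nabla_V$ and $\nabla_{\vw}$ of the objective $\tr\con_\alpha[\vf]$ in turn, then apply the block-diagonal projection $P_\vw^\perp$ to enforce the per-filter unit-norm constraint $\|\vw_{km}\|_2 = 1$ inherited from the one-layer setting via BatchNorm. For the top layer, bilinearity of $\con_\alpha$ in its two arguments gives $\con_\alpha[V\vf_1] = V \con_\alpha[\vf_1] V^\top$, so the objective equals $\tr(V \con_\alpha[\vf_1] V^\top)$. Differentiating in $V$ yields $V\con_\alpha[\vf_1]$ (up to the factor of $2$ absorbed into the learning rate, matching the convention already used for the 1-layer dynamics in Eqn.~\ref{eq:gradient-descent-one-node}); this is the first claim.

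For the lower layer, I would invoke homogeneity to write $f_{1km} = h(\vw_{km}^\top\vx_k) = \vw_{km}^\top \gate{\vx_{km}}$. Bilinearity then gives $[\con_\alpha[\vf_1]]_{km,k'm'} = \vw_{km}^\top \con_\alpha[\gate{\vx_{km}}, \gate{\vx_{k'm'}}] \vw_{k'm'}$, and using $\tr(VCV^\top) = \sum_{km,k'm'} s_{km,k'm'} C_{km,k'm'}$ the objective becomes
\begin{equation}
\tr\con_\alpha[\vf] = \sum_{km,k'm'} s_{km,k'm'}\, \vw_{km}^\top \con_\alpha[\gate{\vx_{km}}, \gate{\vx_{k'm'}}] \vw_{k'm'}.
\end{equation}
Following the same convention as in the one-layer case, I would treat the $\gate{}$ factors as locally constant in $\vw_{km}$; this is the usual ReLU-style observation that $h'$ is piecewise constant, so its contribution under differentiation vanishes almost everywhere. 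Then
\begin{equation}
\nabla_{\vw_{km}} \tr\con_\alpha[\vf] = 2\sum_{k'm'} s_{km,k'm'}\, \con_\alpha[\gate{\vx_{km}}, \gate{\vx_{k'm'}}] \vw_{k'm'}.
\end{equation}

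What remains is to package this blockwise sum as a single $KMd \times KMd$ matrix applied to the concatenated vector $\vw$. The $(km, k'm')$ block of $\con_\alpha[\gate{\vx}]$ is precisely $\con_\alpha[\gate{\vx_{km}}, \gate{\vx_{k'm'}}]$, while the $(km, k'm')$ block of $S \otimes \vone_d\vone_d^\top$ is $s_{km,k'm'}\vone_d\vone_d^\top$. Using the identity $B \circ (c\,\vone_d\vone_d^\top) = cB$ for any scalar $c$ and any $d\times d$ matrix $B$, the Hadamard product $(S \otimes \vone_d\vone_d^\top) \circ \con_\alpha[\gate{\vx}]$ has $(km,k'm')$ block exactly $s_{km,k'm'}\con_\alpha[\gate{\vx_{km}}, \gate{\vx_{k'm'}}]$. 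Multiplying by $\vw$ reproduces the block sum above, and composing with the block-diagonal $P_\vw^\perp = \diag_{km}[I - \vw_{km}\vw_{km}^\top]$ enforces unit norm on each filter, giving the stated $\dot\vw$. The only conceptual step is freezing $\gate{}$ during differentiation, which reuses the technical convention from the 1-layer case; the main obstacle is therefore purely notational, namely keeping the three indices (RF $k$, filter $m$, and coordinate in $\rr^d$) aligned through the Kronecker and Hadamard decompositions.
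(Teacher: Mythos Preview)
Your proposal is correct and takes essentially the same route as the paper: compute the gradient of $\tr\con_\alpha[\vf]$ in $V$ and in each $\vw_{km}$ using bilinearity of $\con_\alpha$, then project onto the unit sphere. The only cosmetic differences are that the paper invokes the shortcut $\partial\cE/\partial\theta = \sum_l \con_\alpha[\partial f_{2l}/\partial\theta, f_{2l}]$ from \cite{tian2022deep} (so $\gate{\vx_k}$ appears directly via the chain rule on $h$, rather than by freezing $h'$), and it carries out only the $M=1$ blockwise computation explicitly, leaving the Kronecker/Hadamard packaging you spell out to the reader.
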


See Appendix
\ifappendix
\ref{sec:appendix-two-layer-learning-dynamics}
\fi
for the proof. Now we analyze the stationary points of the equations. If $\con_\alpha[\vf_1]$ has unique maximal eigenvector 
$\vs$, then following similar analysis as in~\cite{tian2022deep}, a necessary condition for $(W, V)$ to be a stationary point is that $V = \vv\vs^\top$, where $\vv$ is any arbitrary unit vector. Therefore, we have $S = V^\top V = \vs\vs^\top$ as a rank-1 matrix and $s_{km,k'm'} = s_{km}s_{k'm'}$. Note that $\vs$, as a unique maximal eigenvector of $\con_\alpha[\vf_1]$, is a function of the low-level feature computed by $W$. 

On the other hand, the stationary point of $W$ can be much more complicated, since it has the feedback term $S$ from the top level. A more detailed analysis requires further assumptions, as we list below:
\begin{assumption}
\label{assumption:2-layer}
For analysis of two-layer networks, we assume:
\begin{itemize}
\itemsep-0.1em
    \item \textbf{Uniform $\alpha$, large batchsize and no augmentation}. Then $\con_\alpha[\vg(\vx)] = \var[\vg(\vx)]$ for any function $\vg(\cdot)$ following Corollary~\ref{co:cc-var}.
    \item \textbf{Fast top-level training}. $V$ undergoes \emph{fast} training and has always converged to its stationary point given $\con_\alpha[\vf_1]$. That is, $S = \vs\vs^\top$ is a rank-1 matrix;
    
    \item \textbf{Conditional Independence}. The input in each $R_k$ are conditional independent given a latent \emph{global} random variable $z$ taking $C$ different values:
    \begin{equation}
        \pr[\vx|z] = \prod_{k=1}^K \pr[\vx_k|z] \label{eq:conditional-independence}
    \end{equation} 
\end{itemize}
\end{assumption}
\textbf{Explanation of the assumptions}. The \emph{uniform $\alpha$} condition is mainly for notation simplicity. For kernel-like $\alpha$, the analysis is similar by combining multiple variance terms using Lemma~\ref{co:cc-var}. The \emph{no augmentation} condition is mainly technical. Conclusion still holds if $\ee_{p_{\aug}}[\vg(\vx)|\vx_0] \approx \vg(\ee_{p_{\aug}}[\vx|\vx_0])$ for $\vg(\vx) := \gatef{\vx}{\vw}$, i.e., augmentation swaps with nonlinear gating. For \emph{conditional independence}, intuitively $z$ can be regarded as different type of global patterns that determines what input $\vx$ can be perceived (Fig.~\ref{fig:2-layer-setting}(b)). Once $z$ is given, the remaining variation resides within each RF $R_k$ and independent across different $R_k$. Note that there exists many patterns in each RF $R_k$. Some are parts of the global pattern $z$, and others may come from noise. We study how each weight $\vw_{km}$ captures distinct and useful patterns after training. 

With all the assumptions, we can compute the term $ A_k(\vw_k) := \con_\alpha[\gate{\vx}_k] = \var[\gate{\vx}_k]$. Our Assumption~\ref{assumption:2-layer} is weaker than orthogonal mixture condition in~\cite{tian2022deep} that is used to analyze CL, which requires the instance of input $\vx_k[i]$ to have only one positive component. 


\def\n{\mathrm{n}}


\subsection{Why nonlinearity is critical: linear activation fails}
\label{label:nonlinearity-critical}
Since in each RF $R_k$, there are $M$ filters $\{\vw_{km}\}$, it would be ideal to have one filter to capture one distinct pattern in the covariance matrix $A_k$. However, with linear activation, $\gate{\vx_k} = \vx_k$ and as a result, learning of diverse features never happens, no matter how large $M$ is (proof in Appendix~\ref{sec:appendix-limitation-linear}):
\begin{restatable}[Gradient Colinearity in linear networks]{theorem}{impossiblelinear}
\label{thm:linear-gradient-colinear}
With linear activation, $W$ follows the dynamics:
\begin{equation}
   \dot \vw_{km} = s_{km} \vb_k(W, V)
\end{equation}
where $\vb_k(W, V) :=  \con_\alpha\left[\vx_k, \sum_{k',m'} s_{k'm'} \vw^\top_{k'm'}\vx_{k'} \right]$ is a linear function w.r.t. $W$. As a result, (1) $\dot\vw_{km}$ are co-linear over $m$, and (2) If $s_{km} \neq 0$, from any critical point with distinct $\{\vw_{km}\}$, there exists a path of critical points to identical weights ($\vw_{km} = \vw_k$). 
\end{restatable}
This brings about the weakness of linear activation. First, the gradient of $\vw_{km}$ within one RF $R_k$ during CL training all points towards the same direction $\vb_k$; Second, even if the critical points $\vw_{km}$ have any diversity within RF $R_k$, there exist a path for them to converge to identical weights. Therefore, diverse features, even they reside in the data, cannot be learned by the linear models.  

\subsection{The effect of global modulation in the special case of $C = 2$ and $M = 1$}
\label{sec:global-modulation}

When $z$ is binary ($C=2$) with a single weight per RF ($M=1$), $\vw_k$'s dynamics has close form. Let $\vw_k$ represent $\vw_{k1}$, the only weight at each $R_k$, $\Delta_k := \ee[\gate{\vx_k}|z=1] - \ee[\gate{\vx_k}|z=0]$. We have: 
\begin{restatable}[Dynamics of $\vw_{k}$ under conditional independence]{theorem}{dynamicsoftwolayer}
\label{eq:dyn-one-node-per-RF}
When $C=2$ and $M=1$, the dynamics of $\vw_k$ is given by ($s_k^2$ and $\delta_k \ge 0$ are scalars defined in the proof):
\begin{equation}
    \dot \vw_{k} = P^\perp_{\vw_k}
     \left(s^2_k A_k(\vw_k) + \delta_k\Delta_k\Delta^\top_k\right) \vw_{k} 
\end{equation}
\end{restatable}
\ifappendix
See proof in Appendix~\ref{sec:appendix-dynamics-wk}.
\fi
There are several interesting observations. First, the dynamics are decoupled (i.e., $\dot\vw_{k} = A_k(W)\vw_k$) and other $\vw_{k'}$ with $k'\neq k$ only affects the dynamics of $\vw_k$ through the matrix $A_k(W)$. Second, while $A_k(\vw_k)$ contains multiple patterns (i.e., local optima) in $R_k$, the additional term $\Delta_k\Delta_k^\top$, as the \emph{global modulation} from the top level, encourages the model to learn the pattern like $\Delta_k$ which is a discriminative feature that separates the event of $z=0$ and $z=1$. Quantitatively:
\begin{restatable}[Global modulation of attractive basin]{theorem}{modulation}
\label{theorem:modulation-added-term}
If the structural assumption holds: $A_k(\vw_k) = \sum_l g(\vu_l^\top \vw_k) \vu_l\vu_l^\top$ with $g(\cdot) > 0$ a linear increasing function and $\{\vu_l\}$ orthonormal bases, then for $A_k + c \vu_l \vu_l^\top$, its attractive basin of $\vw_k=\vu_l$ is larger than $A_k$'s for $c > 0$. 
\end{restatable}
Therefore, if $\Delta_k$ is a \lme{} of $A_k$ and $\vw_k$ is randomly initialized, Thm~\ref{eq:dyn-one-node-per-RF} tells that $\pr[\vw_k\rightarrow \Delta_k]$ is higher than the probability that $\vw_k$ goes to other patterns of $A_k$, i.e., the global variable $z$ \emph{modulates} the training of the lower layer. This is similar to ``Backward feature correction''~\citep{allen2020backward} and ``top-down modulation''~\citep{tian2019luck} in supervised learning, here we show it in CL. 

We also analyze how BatchNorm helps alleviates diverse variances among RFs (see Appendix~\ref{sec:appendix-analysis-of-bn}).

\begin{figure}[t]
    \centering
    \includegraphics[width=0.9\textwidth]{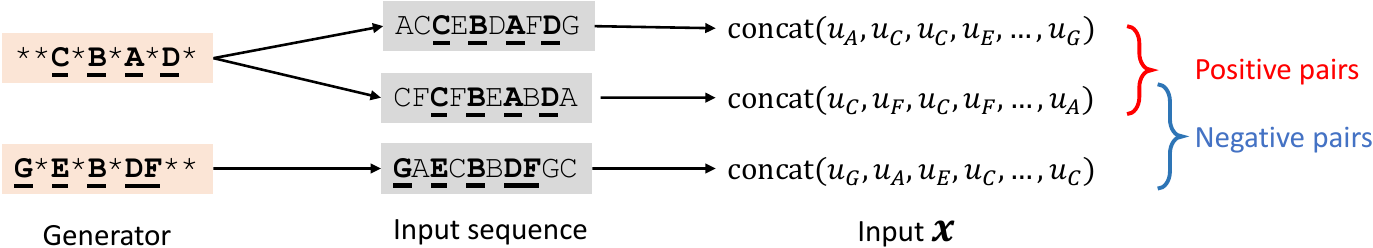}
    \vspace{-0.05in}
    \caption{\small Experimental setting (Sec.~\ref{sec:experiment}). When generating input, we first randomly pick one generator (e.g., \texttt{**C*B*A*D*}) from a pool of $G$ generators, generate the sequence by instantiating wildcard \texttt{*} with an arbitrary token, and then replace the token $a$ of sequence with an embedding vector $\vu_a$ to form the input $\vx$. Inputs from the same generator are treated as positive pairs, otherwise negative pairs for contrastive loss.}
    \label{fig:experimental-setting}
\end{figure}

\def\score{\bar\chi_+}
\def\scoreneg{\bar\chi_-}
\def\token{\mathrm{g}}

\definecolor{lightgray}{gray}{0.8}
\def\shaded#1{\cellcolor{lightgray}#1}
\def\shadedred#1{\cellcolor{lightgray}\textcolor{red}{#1}}

\begin{figure}
    \centering
    \includegraphics[width=\textwidth]{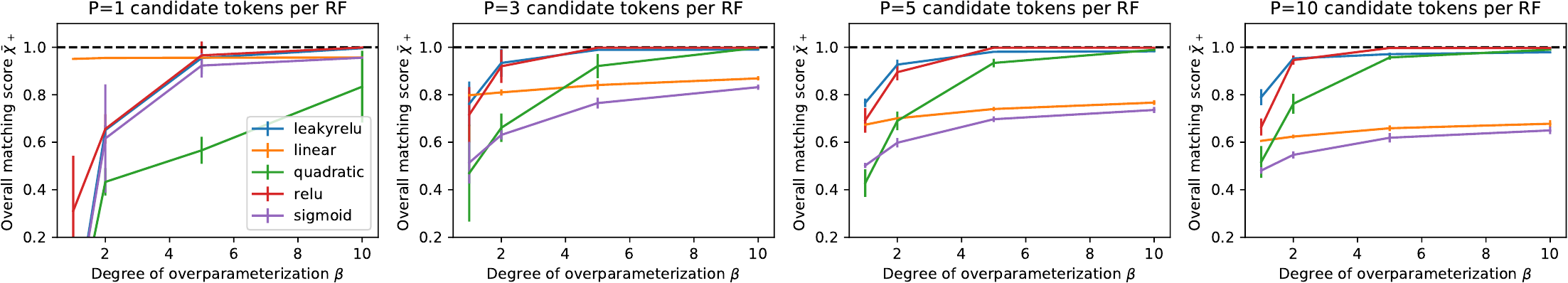}
    \includegraphics[width=\textwidth]{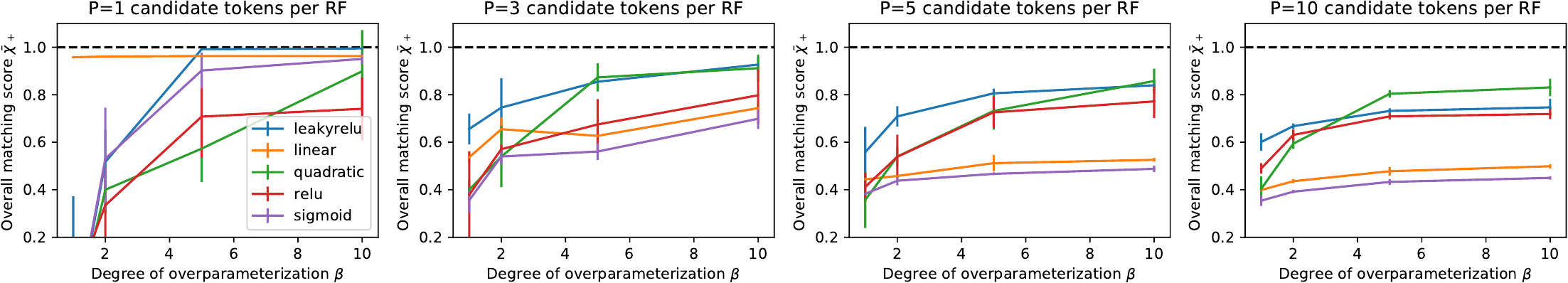}
    \vspace{-0.1in}
    \caption{\small Overall matching score $\score$ (Eqn.~\ref{eq:overall-matching-score}) with InfoNCE (\textbf{top row}) and quadratic loss (\textbf{bottom row}). When $P=1$, linear model works well regardless of the degree of over-parameterization $\beta$, while ReLU requires large over-parameterization to perform well. When each $R_k$ has multiple patterns ($P > 1$) related to generators, ReLU models can capture diverse patterns better than linear ones in the over-parameterization region $\beta > 1$. We found similar trend for other homogeneous activations such as LeakyReLU (with negative slope 0.05) and quadratic. In contrast, linear models are much less affected by over-parameterization. While the trends are similar, quadratic loss is not as effective as InfoNCE in feature learning. Each setting is repeated 3 times and mean/std are reported. See Appendix
    \ifappendix 
    (Fig.~\ref{fig:performance-linear-relu-over-param-appendix} and Fig.~\ref{fig:quadratic-loss-appendix})
    \fi
    for $\scoreneg{}$.}
    \label{fig:performance-linear-relu-over-param}
\end{figure}

\begin{figure}
    \centering
    \includegraphics[width=0.9\textwidth]{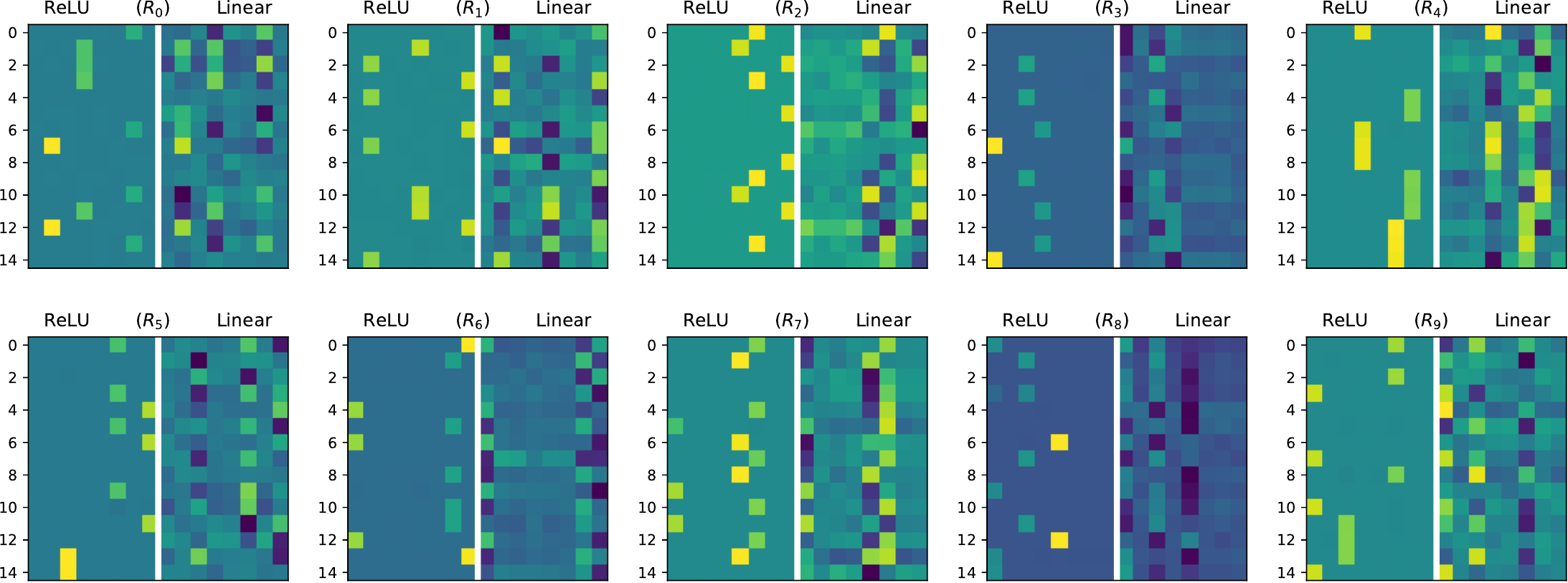}
    \vspace{-0.1in}
    \caption{\small Visualization of learned weights with $P=3$ ($3$ local patterns related to generators at each RF) and $\beta=5$ (5x over-parameterization). Each of the $K=10$ subfigures corresponds to a RF ($R_0$-$R_9$). In each subfigure, the left panel is the learned weight by ReLU, while the right panel is from linear activations. 15 rows corresponds to $M = \beta P = 15$ weights and each weight is $d=8$ dimensional. With ReLU activation, learned weights clearly capture the 3 candidate tokens within $R^\token_k$ at each RF $R_k$, while linear activation cannot.}
    \label{fig:visualization-pattern}
\end{figure}

\section{Experiments}
\label{sec:experiment}
\textbf{Setup}. To verify our finding, we perform contrastive learning with a 2-layer network on a synthetic dataset containing token sequences, generated as follows. From a pool of $G=40$ generators, we pick a generator of length $K$ in the form of \texttt{**C*B*A*D*} (here $K=10$) and generate \texttt{EF\underline{C}D\underline{B}A\underline{A}C\underline{D}B} by sampling from $d=20$ tokens for each wildcard $\texttt{*}$. The final input $\vx$ is then constructed by replacing each token $a$ with the pre-defined embedding $\vu_a\in \rr^d$. $\{\vu_a\}$ forms a orthonormal bases (see Fig.~\ref{fig:experimental-setting}). The data augmentation is achieved by generating another sequence from the same generator. 

While there exists $d=20$ tokens, in each RF $R_k$ we pick a subset $R_k^\token$ of $P < d$ tokens as the candidates used in the generator, to demonstrate the effect of global modulation. Before training, each generator is created by first randomly picking $5$ receptive fields, then picking one of the $P$ tokens from $R_k^\token$ at each RF $R_k$ and filling the remaining RFs with wildcard $\texttt{*}$. Therefore, if a token appears at $R_k$ but $a \notin R^\token_k$, then $a$ must be instantiated from the wildcard. Any $a\notin R^\token_k$ is noise and should not to be learned in the weights of $R_k$ since it is not part of any global pattern from the generator.  

We train a 2-layer network on this dataset. The 2-layer network has $K=10$ disjoint RFs, within each RF, there are $M = \beta P$ filters. Here $\beta \ge 1$ is a hyper-parameter that controls the degree of \emph{over-parameterization}. The network is trained with InfoNCE loss and SGD with learning rate $2\times 10^{-3}$, momentum $0.9$, and weight decay $5\times 10^{-3}$ for $5000$ minibatches and batchsize $128$. 
Code is in PyTorch runnable on a single modern GPU. 
\ifarxiv 
The code is open source here\footnote{Please visit \url{https://github.com/facebookresearch/luckmatters/tree/yuandong3/ssl/real-dataset}.}.
\fi

\textbf{Evaluation metric}. We check whether the weights corresponding to each token is learned in the lower layer. At each RF $R_k$, we know $R_k^{\token}$, the subsets of tokens it contains, as well as their embeddings $\{\vu_a\}_{a\in R^\token_k}$ due to the generation process, and verify whether these embeddings are learned after the model is trained. Specifically, for each token $a \in R_k^{\token}$, we look for its best match on the learned filter $\{\vw_{km}\}$, as formulated by the following per-RF score $\chi_+(R_k)$ and overall matching score $\score \in [-1,1]$ as the average over all RFs (similarly we can also define $\scoreneg$ for $a\notin R^\token_k$):
\begin{equation}
    \chi_+(R_k) = \frac{1}{P}\sum_{a\in R^\token_k} \max_{m} \frac{\vw^\top_{km} \vu_a}{\|\vw_{km}\|_2 \|\vu_a\|_2}, \quad\quad\quad\quad\score = \frac{1}{K}\sum_k \chi_+(R_k) \label{eq:overall-matching-score}
\end{equation}

\subsection{Results}
\label{sec:exp-result}
\textbf{Linear v.s ReLU activation and the effect of over-parameterization (Sec.~\ref{label:nonlinearity-critical})}. From Fig.~\ref{fig:performance-linear-relu-over-param}, we can clearly see that ReLU (and other homogeneous) activations achieve better reconstruction of the input patterns, when each RF contains many patterns ($P > 1$) and specialization of filters in each RF is needed. On the other hand, when $P=1$, linear activation works better. ReLU activation clearly benefits from over-parameterization ($\beta > 1$): the larger $\beta$ is, the better $\score$ becomes. In contrast, for linear activation, over-parameterization does not quite affect the performance, which is consistent with our theoretical analysis. 

\textbf{Quadratic versus InfoNCE}. Fig.~\ref{fig:performance-linear-relu-over-param} shows that quadratic CL loss underperforms InfoNCE, while the trend of linear/ReLU and over-parameterization remains similar. According to Corollary~\ref{co:different-alpha}, non-uniform $\alpha$ (e.g., Gaussian $\alpha$, Lemma~\ref{lemma:gaussian-alpha}) creates more and deeper local optima that better accommodate local patterns, yielding better performance. This provides a novel landscape point of view on why non-uniform $\alpha$ is better, expanding the intuition that it focuses more on important sample pairs. 

\textbf{Global modulation (Sec.~\ref{sec:global-modulation})}. As shown in Fig.~\ref{fig:visualization-pattern}, the learned weights indeed focus on the token subset $R_k^\token{}$ that receives top-down support from the generators and no noise token is learned. We also verify that quantitatively by computing $\scoreneg$ over multiple runs, provided in Appendix
\ifappendix
(Fig.~\ref{fig:performance-linear-relu-over-param-appendix}-\ref{fig:quadratic-loss-appendix})
\fi.

\bibliography{references}

\ifarxiv
\bibliographystyle{iclr2023_conference}
\else
\bibliographystyle{iclr2023_conference}
\fi

\clearpage

\ifarxiv
\else

\ifappendix

\clearpage

\appendix

\def\sample{\mathrm{sample}}
\def\inter{\mathrm{inter}}
\def\intra{\mathrm{intra}}

\section{Proofs}
\subsection{Problem Setup (Sec.~\ref{sec:setup})}
\expkernel*
\begin{proof}
Since $\vg(\cdot)$ is bounded below, there exists a vector $\vv$ so that each component of $\vg(\vx) - \vv$ is always nonnegative for any $\vx$. Let $\vy[i] := \vg(\vx_0[i]) - \vv \in \rr^d$, then $\vy[i] \ge 0$ and we have:
\begin{eqnarray}
\alpha_{ij} &=& \exp\left(-\frac{\|\vy[i]-\vy[j]\|_2^2}{2\tau}\right) \\
&=& \exp\left(-\frac{\|\vy[i]\|_2^2}{2\tau}\right) \exp\left(-\frac{\|\vy[j]\|_2^2}{2\tau}\right) \exp\left(\frac{\vy^\top[i]\vy[j]}{\tau}\right) 
\end{eqnarray}
And using Taylor expansion, we have
\begin{equation}
    \exp\left(\frac{\vy^\top[i]\vy[j]}{\tau}\right) = 1 + \frac{\vy^\top[i]\vy[j]}{\tau} + \frac12\left(\frac{\vy^\top[i]\vy[j]}{\tau}\right)^2 + \ldots + \frac{1}{k!}\left(\frac{\vy^\top[i]\vy[j]}{\tau}\right)^k + \ldots 
\end{equation}
Let 
\begin{equation}
    \tilde\vphi(\vy) := \left[\begin{array}{c}
    1 \\
    \tau^{-1/2}\vy \\
    \frac{1}{\sqrt{2!}}\mathrm{AllChoose}(\tau^{-1/2}\vy, 2) \\
    \ldots \\
    \frac{1}{\sqrt{k!}}\mathrm{AllChoose}(\tau^{-1/2}\vy, k) \\
    \ldots
    \end{array}\right] \ge 0
\end{equation}
be an infinite dimensional vector, where $\mathrm{AllChoose}(\vy, k)$ is a $d^k$-dimensional column vector that enumerates all possible $d^k$ products $y_{i_1}y_{i_2}\ldots y_{i_k}$, where $1\le i_k \le d$ and $y_i$ is the $i$-th component of $\vy$. Then it is clear that $\exp(\vy^\top[i]\vy[j]/\tau) = \tilde\vphi^\top(\vy[i])\tilde\vphi(\vy[j])$ and thus
\begin{equation}
    \alpha_{ij} = \vphi^\top(\vx_0[i]) \vphi(\vx_0[j]) = \sum_{l=0}^{+\infty} \phi_l(\vx_0[i]) \phi_l(\vx_0[j]) 
\end{equation}
which satisfies Def.~\ref{def:kernelalpha}. Here
\begin{equation}
    \vphi(\vx) := \exp\left(-\frac{\|\vy\|_2^2}{2\tau}\right)\tilde\vphi(\vy) = \exp\left(-\frac{\|\vg(\vx)-\vv \|_2^2}{2\tau}\right)\tilde\vphi(\vg(\vx)-\vv)
\end{equation}
is the infinite dimensional feature mapping for input $\vx$, and $\phi_l(\vx)$ is its $l$-th component. 
\end{proof}

\label{sec:appendix-problem-setup}
\intuitioncc*
\begin{proof}
First let 
\begin{eqnarray}
    \con^\inter_\alpha[\va,\vb] &:=& \frac{1}{2N^2}\sum_{i=1}^N\sum_{j\neq i} \alpha_{ij} (\va[i] - \va[j])(\vb[i]-\vb[j])^\top \\
    \con^\intra_\alpha[\va,\vb] &:=& \frac{1}{2N}\sum_{i=1}^N \left(\frac{1}{N}\sum_{j\neq i}\alpha_{ij}\right) (\va[i] - \va[i'])(\vb[i]-\vb[i'])^\top
\end{eqnarray}
and $\con^\inter_\alpha[\va] := \con^\inter_\alpha[\va,\va]$, $\con^\intra_\alpha[\va] := \con^\inter_\alpha[\va,\va]$. Then we have 
\begin{equation}
    \con_\alpha[\vg] =  \con^\inter_\alpha[\vg] - \con^\intra_\alpha[\vg].
\end{equation}
With the condition, for the first term $\con^\inter_\alpha[\vg]$, we have
\begin{eqnarray}
    \con^\inter_\alpha[\vg] = \frac{1}{2N^2}\sum_{ij}\cK(\vx_0[i], \vx_0[j])(\vg(\vx[i]) - \vg(\vx[j]))(\vg(\vx[i]) - \vg(\vx[j]))^\top
\end{eqnarray}
When $N\rightarrow +\infty$, we have: 
\begin{eqnarray}
    \con^\inter_\alpha[\vg] &\rightarrow& \frac12\int \cK(\vx_0, \vy_0) (\vg(\vx) - \vg(\vy))(\vg(\vx) - \vg(\vy))^\top \pr(\vx,\vx_0)\pr(\vy,\vy_0)\dd \vx\dd\vy\dd\vx_0\dd\vy_0 \nonumber
\end{eqnarray}
We integrate over $\vx_0$ and $\vy_0$ first:
\begin{eqnarray}
    & & \int (\vg(\vx) - \vg(\vy))(\vg(\vx) - \vg(\vy))^\top \pr(\vx|\vx_0) \pr(\vy|\vy_0) \dd \vx\dd \vy \\
    &=& \ee_{\cdot|\vx_0}[\vg\vg^\top] + \ee_{\cdot|\vy_0}[\vg\vg^\top] - \ee_{\cdot|\vx_0}[\vg]\ee_{\cdot|\vy_0}[\vg^\top] - \ee_{\cdot|\vy_0}[\vg]\ee_{\cdot|\vx_0}[\vg^\top]
\end{eqnarray}
We now compute the four terms separately. With the condition that $\cK(\vx_0,\vy_0) = \sum_l \phi_l(\vx_0) \phi_l(\vy_0)$, and the definition of adjusted probability $\tilde p_l(\vx) := \frac{1}{z_l} \phi_l(\vx)\pr(\vx)$ where $z_l := \int \phi_l(\vx)\pr(\vx)\dd \vx$, for the first term, we have:
\begin{eqnarray}
    & & \int \phi_l(\vx_0) \phi_l(\vy_0) \ee_{\cdot|\vx_0}[\vg\vg^\top] \pr(\vx_0)\pr(\vy_0)\dd\vx_0\dd\vy_0 \nonumber \\
    &=& z_l^2 \int \ee_{\cdot|\vx_0}[\vg\vg^\top] \tilde p_l(\vx_0)\dd\vx_0 \\
    &=& z_l^2 \ee_{\vx_0\sim \tilde p_l} \ee_{\cdot|\vx_0}[\vg\vg^\top]
\end{eqnarray}
So we have:
\begin{eqnarray}
    \con^\inter_\alpha[\vg] &\rightarrow& \sum_l z_l^2 \left(\ee_{\vx_0\sim \tilde p_l} \ee_{\cdot|\vx_0}[\vg\vg^\top] - \ee_{\vx_0\sim \tilde p_l} \ee_{\cdot|\vx_0}[\vg] \ee_{\vx_0\sim \tilde p_l} \ee_{\cdot|\vx_0}[\vg^\top] \right) \\
    &=& \sum_l z_l^2 \var_{\vx_0\sim \tilde p_l, \vx\sim p_\aug(\cdot|\vx_0)}[\vg]
\end{eqnarray}

On the other hand, for $\con^\intra_\alpha[\vg]$, when $N \rightarrow +\infty$, we have:
\begin{eqnarray}
    \frac{1}{N} \sum_{j\neq i} \alpha_{ij} &=& \frac{1}{N} \sum_{j\neq i} \cK(\vx_0[i],\vx_0[j]) \rightarrow \int \cK(\vx_0, \vy_0) \pr(\vy_0)\dd \vy_0 \\
    &=& \sum_l \phi_l(\vx_0) \int \phi_l(\vy_0)\pr(\vy_0)\dd\vy_0 = \sum_l z_l \phi_l(\vx_0) 
\end{eqnarray}
Therefore, we have:
\begin{eqnarray}
    \con^\intra_\alpha[\vg] \rightarrow \frac12\sum_l z_l \int \phi_l(\vx_0) (\vg(\vx)-\vg(\vx'))(\vg(\vx)-\vg(\vx'))^\top \pr(\vx,\vx'|\vx_0)\pr(\vx_0)\dd\vx\dd\vx'\dd\vx_0
\end{eqnarray}
Similarly, 
\begin{eqnarray}
    & & \int (\vg(\vx)-\vg(\vx'))(\vg(\vx)-\vg(\vx'))^\top \pr(\vx,\vx'|\vx_0)\dd \vx\dd\vx' \\
    &=& 2\int \vg(\vx)\vg^\top(\vx) \pr(\vx|\vx_0)\dd\vx - 2\int \vg(\vx) \pr(\vx|\vx_0) \dd\vx \int \vg^\top(\vx') \pr(\vx'|\vx_0) \dd\vx' \\
    &=& 2\ee_{\vx\sim p_\aug(\cdot|\vx_0)}[\vg\vg^\top] - 2\ee_{\vx\sim p_\aug(\cdot|\vx_0)}[\vg] \ee_{\vx\sim p_\aug(\cdot|\vx_0)}[\vg^\top] \\
    &=& 2\var_{\vx\sim p_\aug(\cdot|\vx_0)}[\vg]
\end{eqnarray}
So we have:
\begin{eqnarray}
    \con^\intra_\alpha[\vg] &\rightarrow& \frac12 \sum_l z_l \int \phi_l(\vx_0) 2\var_{\vx\sim p_\aug(\cdot|\vx_0)}[\vg]\pr(\vx_0)\dd\vx_0 \\
    &=& \sum_l z_l^2 \ee_{\vx_0\sim \tilde p_l} \var_{\vx\sim p_\aug(\cdot|\vx_0)}[\vg]
\end{eqnarray}
Using the law of total variation, finally we have:
\begin{equation}
    \con_\alpha[\vg] \rightarrow \sum_l z_l^2 \var_{\vx_0\sim \tilde p_l} \ee_{\vx\sim p_\aug(\cdot|\vx_0)}[\vg]
\end{equation}
\end{proof}

\section{One-layer model (Sec.~\ref{sec:one-layer-case})}
\subsection{Computation of the two example models}
\label{sec:appendix-two-examples}
Here we assume ReLU activation $h(x) := \max(x, 0)$, which is a homogeneous activation $h(x) = h'(x)x$. Note that we consider $h'(0) = 0$. Therefore, for any sample $\vx$, if $\vw^\top\vx = 0$, then we don't consider it to be included in the active region of ReLU, i.e., $\gatef{\vx}{\vw} = \vx \cdot h'(\vw^\top\vx) = 0$.

Let $z$ be a hidden binary variable and we could compute $A(\vw)$ (here $p_0 := \pr[z=0]$ and $p_1 := \pr[z=1]$):
\begin{equation}
    \var[\gatef{\vx}{\vw}] =
    \var_z[\ee[\gatef{\vx}{\vw}|z]] + \ee_z[\var[\gatef{\vx}{\vw}|z]] = p_0p_1 \Delta(\vw) \Delta^\top(\vw) + p_0 \Sigma_0(\vw) + p_1\Sigma_1(\vw)  
\end{equation}
where $\Delta(\vw) := \ee[\gate{\vx}|z=1] -\ee[\gate{\vx}|z=0]$ and $\Sigma_z(\vw) := \var[\gate{\vx}|z]$. 

\textbf{Latent categorical model}. If $\vw = \vu_m$, let $z := \mathbb{I}(y = m)$. This leads to $\Sigma_1(\vu_m) = \Sigma_0(\vu_m) = 0$ and $\Delta(\vu_m) = \vu_m$. Therefore, we have:
\begin{equation}
    A(\vw)\big|_{\vw=\vu_m} := \con_\alpha[\gatef{\vx}{\vw}] = \var[\gatef{\vx}{\vw}] = \pr[y=m]\left(1-\pr[y=m]\right) \vu_m\vu_m^\top
\end{equation}

\textbf{Latent summation model}. If $\vw=\vu_m$, first notice that due to orthogonal constraints we have $\vw^\top\vx = \sum_{m'} y_{m'} \vu_{m'}^\top \vw = y_m$. Let $z := \mathbb{I}(y_m > 0)$, then we can compute $\Delta(\vu_m) = y_m^+ \vu_m$, $\Sigma_1(\vu_m) = I - \vu_m\vu_m^\top$ and $\Sigma_0(\vu_m) = 0$. Therefore, we have:
\begin{equation}
    A(\vw)\big|_{\vw=\vu_m} :=  \con_\alpha[\gatef{\vx}{\vw}] = \var\left[\gate{\vx}\right] = (1-q_m)^2 \vu_m\vu_m^\top + q_m (I - \vu_m\vu_m^\top) 
\end{equation}

\subsection{Derivation of training dynamics}
\label{sec:appendix-dynamics}
\dynamicsonelayer*
\begin{proof}
First of all, it is clear that from Eqn.~\ref{eq:one-layer-objective}, each $\vw_k$ evolves independently. Therefore, we omit the subscript $k$ and derive the dynamics of one node $\vw$.

To compute the training dynamics, we only need to compute the differential of $\con_\alpha[h(\vw_k^\top\vx)]$. We use matrix differential form~\citep{giles2008collected} to make the derivation easier to understand. 

Note that for one-layer network with $K=1$ nodes, $\cE(\vw) := \frac{1}{2} \con_\alpha[h(\vw^\top\vx)] = \frac{1}{2} \con_\alpha[h(\vw^\top\vx), h(\vw^\top\vx)]$ be the objective function to be maximized. Using the fact that
\begin{itemize}
    \item $\con_\alpha[\vx,\vy]$ is a bilinear form (linear w.r.t $\vx$ and $\vy$) given fixed $\alpha$,
    \item for any vector $\va$ and $\vb$, we have $\va^\top\con_\alpha[\vx,\vy]\vb =\con_\alpha[\va^\top\vx,\vb^\top\vy]$,
    \item for scalar $x$ and $y$, $\con_\alpha[x,y] = \con_\alpha[y,x]$,
\end{itemize}
and by the product rule $\dd (x\cdot y) = \dd x\cdot y +  x \cdot \dd y$, we have: 
\begin{eqnarray}
    \dd \cE &=& \frac12\con_\alpha[h(\vw^\top\vx), h'(\vw^\top\vx) \dd \vw^\top\vx] + \frac12 \con_\alpha[h'(\vw^\top\vx) \dd \vw^\top\vx, h(\vw^\top\vx)] \nonumber \\
    &=& \con_\alpha[h(\vw^\top\vx), h'(\vw^\top\vx)\vx] \dd\vw
\end{eqnarray}
Now use the homogeneous condition (Assumption~\ref{assumption:homogenity}) for activation $h$: $h(x) = h'(x)x$, which gives $h(\vw^\top\vx) = h'(\vw^\top\vx)\vw^\top\vx$, therefore, we have:
\begin{equation}
    \dd \cE = \vw^\top \con_\alpha[h'(\vw^\top\vx)\vx, h'(\vw^\top\vx)\vx] \dd\vw = \vw^\top A(\vw) \dd\vw 
\end{equation}
where $A(\vw) := \con_\alpha[h'(\vw^\top\vx)\vx, h'(\vw^\top\vx)\vx] = \con_\alpha[\tilde\vx^{\vw},\tilde\vx^{\vw}]$. Therefore, by checking the coefficient associated with the differential form $\dd\vw$, we know $\frac{\partial \cE}{\partial \vw} = A(\vw)\vw$. By gradient ascent, we have $\dot\vw = A(\vw)\vw$. Since $\vw$ has the additional constraint $\|\vw\|_2 = 1$, the final dynamics is $\dot\vw = P^\perp_\vw A(\vw)\vw$ where $P^\perp_\vw := I - \vw\vw^\top$ is a projection matrix that projects a vector into the orthogonal complement subspace of the subspace spanned by $\vw$. 
\end{proof}

\textbf{Remarks.} Note that an alternative route is to use homogeneous condition first: $\con_\alpha[h(\vw^\top\vx)] = \vw^\top A(\vx)\vw$, then taking the differential. This involves an additional term $\frac12\vw^\top (\dd A)\vw$. In the following we will show it is zero. For this we first compute $\dd A$:
\begin{eqnarray}
    \dd A &=& \dd \con_\alpha[h'(\vw^\top\vx)] \\
    &=& \con_\alpha[h''(\vw^\top\vx)(\dd\vw^\top\vx)\vx, h'(\vw^\top\vx)\vx] + \con_\alpha[h'(\vw^\top\vx)\vx, h''(\vw^\top\vx)(\dd\vw^\top\vx)\vx]
\end{eqnarray}
Therefore, since $\va^\top\con_\alpha[\vx,\vy]\vb =\con_\alpha[\va^\top\vx,\vb^\top\vy]$, we have:
\begin{eqnarray}
    \vw^\top(\dd A)\vw &=& \con_\alpha[(\dd\vw^\top\vx) h''(\vw^\top\vx)\vw^\top\vx, h(\vw^\top\vx)] + \con_\alpha[h(\vw^\top\vx), h''(\vw^\top\vx)(\dd\vw^\top\vx)\vw^\top\vx] \nonumber \\
    &=& 2\con_\alpha[(\dd\vw^\top\vx) h''(\vw^\top\vx)\vw^\top\vx, h(\vw^\top\vx)]
\end{eqnarray}
Note that we now see the term $h''(\vw^\top\vx)\vw^\top\vx$. For ReLU activation, its second derivative $h''(x) = \delta(x)$, where $\delta(x)$ is Direct delta function~\citep{boas1984mathematical}. From the property of delta function, we have $xh''(x) = x\delta(x) = 0$ even evaluated at $x=0$. Therefore, $h''(\vw^\top\vx)\vw^\top\vx = 0$ and $\vw^\top(\dd A)\vw = 0$. This is similar for LeakyReLU as well.

\subsection{Local stability}
\wstability*
\begin{proof}
For any unit direction $\|\vu\|_2 = 1$ so that $\vu^\top\vw_* = 0$, consider the perturbation $\vv = \sqrt{1-\epsilon^2} \vw_* + \epsilon\vu$. Since $\|\vw_*\|_2 = 1$ we have $\|\vv\|_2 = 1$.

Now let's compute $P_{\vv}^{\perp}A(\vv)\vv$. First, we have:
\begin{eqnarray}
    P_{\vv}^{\perp} &=& I - \vv\vv^\top = I - \left(\sqrt{1-\epsilon^2} \vw_* + \epsilon\vu\right)\left(\sqrt{1-\epsilon^2} \vw_* + \epsilon\vu\right)^\top \\
    &=& I - \vw_*\vw_*^\top - \epsilon (\vu \vw_*^\top + \vw_* \vu^\top) + \cO(\epsilon^2) \\
    &=& P^\perp_{\vw_*} - \epsilon (\vu \vw_*^\top + \vw_* \vu^\top) + \cO(\epsilon^2)
\end{eqnarray}
So we have:
\begin{eqnarray}
    P_{\vv}^{\perp} A(\vw_*)\vv &=& P^\perp_{\vw_*}A(\vw_*)\vv - \epsilon (\vu \vw_*^\top + \vw_* \vu^\top)A(\vw_*)\vv + \cO(\epsilon^2) \\
    &=& P^\perp_{\vw_*}A(\vw_*)\epsilon \vu - \epsilon \lambda_* \vu + \cO(\epsilon^2) \\
    &=& P^\perp_{\vw_*}(A(\vw_*) - \lambda_* I) \epsilon \vu + \cO(\epsilon^2)
\end{eqnarray}
The previous derivation is due to the fact that  $P^\perp_{\vw_*}A(\vw_*)\vw_* = 0$, $\vu^\top A(\vw_*)\vw_* = 0$ and $P^\perp_{\vw_*}\vu = \vu$. Therefore, for $P_{\vv}^{\perp}A(\vv)\vv$, we can decompose it to two parts:
\begin{eqnarray}
    P_{\vv}^{\perp}A(\vv)\vv &=& P_{\vv}^{\perp}A(\vw_*)\vv + P_{\vv}^{\perp}(A(\vv) - A(\vw_*))\vv \\
    &=& P^\perp_{\vw_*}(A(\vw_*) - \lambda_* I) \epsilon\vu + P_{\vv}^\perp (A(\vv) - A(\vw_*))\vv + \cO(\epsilon^2)  
\end{eqnarray}
Therefore, since $\vu^\top \vw_* = 0$, we have:
\begin{eqnarray}
    \vu^\top P^\perp_{\vw_*}(A(\vw_*) - \lambda_* I) \epsilon\vu &=& \vu^\top(I-\vw_*\vw_*^\top)(A(\vw_*) - \lambda_* I) \epsilon\vu \\
    &=& \epsilon \vu^\top (A(\vw_*) - \lambda_* I) \vu \le - \lambda_{\gap}(\vw_*) \epsilon + \cO(\epsilon^2)
\end{eqnarray}
and since $\|\vu\|_2 = \|\vv\|_2 = 1$ and $\|P_\vv^\perp\|_2 = 1$, we have:
\begin{equation}
 |\vu^\top P_{\vv}^\perp (A(\vv) - A(\vw_*))\vv| \le \|(A(\vv) - A(\vw_*))\vv\|_2
\end{equation}
By the definition of local roughness measure $\rho(\vw_*)$, we have:
\begin{equation}
\|(A(\vv) - A(\vw_*))\vw_*\|_2 \le \rho(\vw_*) \|\vv-\vw_*\|_2 + \cO(\|\vv-\vw_*\|^2_2) = \rho(\vw_*)\epsilon + \cO(\epsilon^2)
\end{equation}
This leads to
\begin{eqnarray} 
    \|(A(\vv) - A(\vw_*))\vv\|_2 &\le& \|(A(\vv) - A(\vw_*))\vw_*\|_2 + \|(A(\vv) - A(\vw_*))(\vv-\vw_*)\|_2 \\
    &\le& \rho(\vw_*)\epsilon + \cO(\epsilon^2)
\end{eqnarray}
Therefore, we have:
\begin{equation}
    \vu^\top P_{\vv}^{\perp}A(\vv)\vv \le -(\lambda_{\gap}(\vw_*) - \rho(\vw_*)) \epsilon + \cO(\epsilon^2)
\end{equation}
When $\lambda_\gap(\vw_*) > \rho(\vw_*)$ and we have $\vu^\top P_{\vv}^{\perp}A(\vv)\vv < 0$ for any $\vu\perp \vw_*$ and sufficiently small $\epsilon$. Therefore, the critical point $\vw_*$ is stable. 
\end{proof}

\boundrelu*
\begin{proof}
Suppose $\vw_*$ and its local perturbation $\vw$ are on the unit sphere $\|\vw\|_2 = \|\vw_*\|_2 = 1$. Since $\vw$ is a local perturbation, we have $\vw^\top\vw_* \ge 1 - \epsilon$ for $\epsilon \ll 1$.

In the following we will check how we bound $\|(A(\vw) - A(\vw_*))\vw_*\|_2$ in terms of $\|\vw-\vw_*\|_2$ and then we can get the upper bound of local roughness metric $\rho(\vw_*)$.

Let the function $\vg(\vx) := \gatef{\vx}{\vw}$, apply Corollary~\ref{co:cc-var} with no augmentation and the large batch limits, we have 
\begin{equation}
    A(\vw) := \con_\alpha[\gatef{\vx}{\vw}] = \sum_l z_l^2 \var_{\tilde p_l}[\gatef{\vx}{\vw}].
\end{equation} 
where $\tilde p_l(\vx) = \frac{1}{z_l}\pr(\vx)\phi_l(\vx)$ is the probability distribution of the input $\vx$, adjusted by the mapping of the kernel function determined by the pairwise importance $\alpha_{ij}$ (Def.~\ref{def:kernelalpha}). $z_l$ is its normalization constant. 

To study $(A(\vw) - A(\vw_*))\vw_*$, we will study each component $\left(\var_{\tilde p_l}[\gatef{\vx}{\vw}] - \var_{\tilde p_l}[\gatef{\vx}{\vw_*}]\right)\vw_*$.

Note that since $\gatef{\vx}{\vw} := \vx \mathbb{I}(\vw^\top\vx \ge 0)$, we have $\var_{\tilde p_l}[\gatef{\vx}{\vw}] = \ee_{\tilde p_l}[\vx\vx^\top \mathbb{I}(\vw^\top\vx \ge 0)] - \ee_{\tilde p_l}[\vx\mathbb{I}(\vw^\top\vx \ge 0)] \ee_{\tilde p_l}[\vx^\top\mathbb{I}(\vw^\top\vx \ge 0)]$. Let
\begin{eqnarray}
\ve := \int_{\vw^\top\vx \ge 0} \vx \tilde p_l(\vx)\dd\vx, \quad\quad
\ve_* := \int_{\vw_*^\top\vx \ge 0} \vx \tilde p_l(\vx)\dd\vx \\
E := \int_{\vw^\top\vx \ge 0} \vx \vx^\top \tilde p_l(\vx)\dd\vx,\quad\quad
E_* := \int_{\vw_*^\top\vx \ge 0} \vx \vx^\top \tilde p_l(\vx)\dd\vx 
\end{eqnarray}
So we can write
\begin{equation}
    \var_{\tilde p_l}[\gatef{\vx}{\vw}] = E - \ve\ve^\top, \quad\quad \var_{\tilde p_l}[\gatef{\vx}{\vw_*}] = E_* - \ve_*\ve_*^\top 
\end{equation}
and $\var_{\tilde p_l}[\gatef{\vx}{\vw}]  - \var_{\tilde p_l}[\gatef{\vx}{\vw_*}] = (E - E_*) + (\ve_*\ve_*^\top - \ve\ve^\top)$. 

Define the following regions 
\begin{eqnarray}
    \Omega_+ &:=& \{\vx: \vw_*^\top\vx \ge 0, \vw^\top\vx \le 0\} \\
    \Omega_- &:=& \{\vx: \vw_*^\top\vx \le 0, \vw^\top\vx \ge 0\} \\
    \Omega &:=& \Omega_+ \cup \Omega_- 
\end{eqnarray}
Now let's bound $(E-E_*)\vw_*$ and $(\ve_*\ve_*^\top - \ve\ve^\top)\vw_*$.

\textbf{Bound $(E-E_*)\vw_*$}. We have:
\begin{equation}
    E - E_* = \int_{\Omega_-} \vx\vx^\top \tilde p_l(\vx)\dd\vx - \int_{\Omega_+} \vx\vx^\top \tilde p_l(\vx)\dd\vx
\end{equation}
and thus
\begin{equation}
    (E - E_*)\vw_* = \int_{\Omega_-} \vx\vx^\top\vw_* \tilde p_l(\vx)\dd\vx - \int_{\Omega_+} \vx\vx^\top\vw_* \tilde p_l(\vx)\dd\vx
\end{equation}
For any $\vx \in \Omega_+$, we have:
\begin{equation}
0 \le \vw_*^\top \vx = \vw^\top \vx + (\vw_* - \vw)^\top \vx \le (\vw_* - \vw)^\top \vx \le C_0\|\vw_* - \vw\|_2 
\end{equation}
Therefore, $|\vw_*^\top \vx| \le M\|\vw_* - \vw\|_2$ and we have 
\begin{eqnarray}
 \Bigg\|\int_{\Omega_+} \vx\vx^\top\vw_* \tilde p_l(\vx)\dd\vx\Bigg\|_2 &\le&  \int_{\Omega_+} |\vw_*^\top\vx| \|\vx\|_2 \tilde p_l(\vx)\dd \vx \\
    &\le& C_0^2\|\vw_* - \vw\|_2 \max_{\vx\in\Omega_+} \tilde p_l(\vx) \int_{\Omega_+, \|\vx\|_2 \le C_0} \dd\vx \\
    &=& C_0^3\|\vw_* - \vw\|_2 \max_{\vx\in\Omega_+} \tilde p_l(\vx) \frac{\vol(C_0)}{2\pi} \arccos \vw^\top\vw_*
\end{eqnarray}
where $\vol(C_0)$ is the volume of the $d$-dimensional ball of radius $C_0$. Similarly for $\vx \in \Omega_-$, we have
\begin{equation}
0 \ge \vw_*^\top \vx = \vw^\top \vx + (\vw_* - \vw)^\top \vx \ge (\vw_* - \vw)^\top \vx \ge -C_0\|\vw_* - \vw\|_2 
\end{equation}
hence $|\vw_*^\top \vx| \le C_0\|\vw_* - \vw\|_2$ and overall we have:
\begin{equation}
    \|(E-E_*)\vw_*\|_2 \le \frac{C_0^3\vol(C_0)}{\pi}\|\vw_* - \vw\|_2 \max_{\vx\in\Omega} \tilde p_l(\vx) \arccos \vw^\top\vw_*
\end{equation}
Since for $x\in(0,1]$, $\arcsin \sqrt{1-x^2} \le \frac{\sqrt{1-x^2}}{x}$, we have:  
\begin{eqnarray}
& & \arccos \vw^\top\vw_* = \arcsin \sqrt{1 - (\vw^\top\vw_*)^2} \le \frac{\sqrt{1 - (\vw^\top\vw_*)^2}}{\vw^\top\vw_*} \\
&=& \frac{\sqrt{1 + \vw^\top\vw_*}\sqrt{1 - \vw^\top\vw_*)}}{\vw^\top\vw_*} 
\le \frac{\sqrt{2(1 - \vw^\top\vw_*)}}{\vw^\top\vw_*} \\
&=& \frac{1}{1-\epsilon}\|\vw-\vw_*\|_2
\end{eqnarray}
we have:
\begin{equation}
    \|(E-E_*)\vw_*\|_2 \le \frac{C_0^3\vol(C_0)}{\pi} \frac{1}{1-\epsilon} \|\vw_* - \vw\|^2_2 \max_{\vx\in\Omega} \tilde p_l(\vx) 
\end{equation}
Therefore, $\|(E-E_*)\vw_*\|_2$ is a second-order term w.r.t. $\|\vw-\vw_*\|_2$. 

\textbf{Bound $(\ve_*\ve_*^\top - \ve\ve^\top)\vw_*$}. On the other hand:
\begin{equation}
    \ve\ve^\top - \ve_*\ve_*^\top = \ve(\ve-\ve_*)^\top + (\ve - \ve_*)\ve_*^\top
\end{equation}
We have $\|\ve\|_2,\|\ve_*\|_2$ bounded and 
\begin{equation}
    \ve-\ve_* = \int_{\Omega_-} \vx\tilde p_l(\vx)\dd\vx - \int_{\Omega_+} \vx \tilde p_l(\vx)\dd\vx
\end{equation}
Using similar derivation, we conclude that $\|\ve(\ve-\ve_*)^\top\vw_*\|_2$ is also a second-order term. The only first-order term is $\|(\ve - \ve_*)\ve_*^\top\vw_*\|_2$:
\begin{eqnarray}
    & &\|(\ve - \ve_*)\ve_*^\top\vw_*\|_2 \le \ee_{\tilde p_l}[h(\vw^\top\vx)]\int_\Omega \|\vx\|_2 \tilde p_l(\vx)\dd \vx \\
    &\le& C_0^2 \int_\Omega \tilde p_l(\vx)\dd \vx \le C_0^2 \max_{\vx\in\Omega} \tilde p_l(\vx) \int_{\Omega: \|\vx\|_2 \le C_0} \dd \vx \\
    &\le& \frac{C_0^3\vol(C_0)}{\pi} \arccos \vw^\top\vw_*\max_{\vx\in\Omega} \tilde p_l(\vx)  \\
    &\le& \frac{C_0^3\vol(C_0)}{\pi} \frac{1}{1-\epsilon} \|\vw-\vw_*\|_2 \max_{\vx\in\Omega} \tilde p_l(\vx) 
\end{eqnarray}
Overall we have:
\begin{eqnarray}
    & & \|(A(\vw) - A(\vw_*))\vw_*\|_2 \le \sum_l z_l^2 \|\left(\var_{\tilde p_l}[\gatef{\vx}{\vw}]  - \var_{\tilde p_l}[\gatef{\vx}{\vw_*}]\right)\vw_*\|_2 \\
    &\le& \frac{C_0^3\vol(C_0)}{\pi} \frac{1}{1-\epsilon} \left(\sum_l z_l^2 \max_{\vx\in\Omega} \tilde p_l(\vx)\right) \|\vw-\vw_*\|_2 + \cO(\|\vw-\vw_*\|^2_2)
\end{eqnarray}
Since $\rho(\vw_*)$ is the smallest scalar that makes the local roughness metric hold and $\epsilon$ is arbitrarily small, we have:
\begin{equation}
     \rho(\vw_*) \le \frac{C_0^3\vol(C_0)}{\pi} r(\vw_*,\alpha) 
\end{equation}
where $r(\vw,\alpha) := \sum_l z_l^2 \max_{\vw^\top\vx = 0} \tilde p_l(\vx;\alpha)$.
\end{proof}

\differentalpha*
\begin{proof}
For uniform $\alpha_\u$, it is clear that the mapping $\vphi_\u(\vx) \equiv 1$ is 1-dimensional. Therefore, $\tilde p_0(\vx;\alpha_\u) := \frac{1}{z_0(\alpha_\u)}\phi_{\u 0}(\vx)\pd(\vx) = \pd(\vx)$ with $z_0(\alpha_\u) = \int \phi_{\u 0}(\vx)\pd(\vx)\dd\vx = 1$. This means that
\begin{eqnarray}
    r(\vw_*,\alpha_\u) &:=& \sum_{l=0}^{+\infty} z_l^2(\alpha_\u) \max_{\vw_*^\top\vx = 0} \tilde p_l(\vx;\alpha_\u) \\
    &=& z^2_0(\alpha_\u)\max_{\vw_*^\top\vx = 0} \tilde p_0(\vx;\alpha_\u) 
    = \max_{\vw_*^\top\vx = 0} \pd(\vx)
\end{eqnarray}

For Gaussian $\alpha_\g$, from Lemma~\ref{lemma:gaussian-alpha} we know that its infinite-dimensional mapping $\vphi_\g(\vx)$ has the following form for $\vw=\vw_*$: 
\begin{equation}
    \vphi_\g(\vx) = 
    e^{-\frac{h^2(\vw_*^\top\vx)}{2\tau}} \left[\begin{array}{c}
         1 \\
         \tau^{-1/2} h(\vw_*^\top\vx) \\
         \frac{1}{\tau^{2/2}\sqrt{2!}} h^2(\vw_*^\top\vx)\\
         \ldots \\
         \frac{1}{\tau^{k/2}\sqrt{k!}} h^k(\vw_*^\top\vx) \\
         \ldots
    \end{array} 
    \right]
\end{equation}
When $l \ge 1$, $z^2_l\tilde p_l(\vx;\alpha_\g) = z_l \phi_{\g l}(\vx) \pd(\vx) = 0$ for any $\vx$ on the plane $\vw_*^\top\vx = 0$, since $\phi_{\g l}(\vx) = 0$ on the plane. On the other hand, $\phi_{\g 0}(\vx) = e^{-\frac{h^2(\vw_*^\top\vx)}{2\tau}}$. On the plane, $\phi_{\g 0}(\vx) = 1$ and is a constant. Therefore, we have: 
\begin{eqnarray}
    r(\vw_*,\alpha_\g) &:=& \sum_{l=0}^{+\infty} z_l^2 \max_{\vw_*^\top\vx = 0} \tilde p_l(\vx;\alpha_\g) 
    = z_0^2(\alpha_\g) \max_{\vw_*^\top\vx = 0} \tilde p_0(\vx;\alpha_\g) \\
    &=& z_0(\alpha_\g) \max_{\vw_*^\top\vx = 0} \phi_{\g 0}(\vx) \pd(\vx) \\
    &=& z_0(\alpha_\g) \max_{\vw_*^\top\vx = 0} \pd(\vx) = z_0(\alpha_\g) r(\vw_*,\alpha_\u)
\end{eqnarray}
Here 
\begin{equation}
    z_0(\alpha_\g) := \int \phi_{\g 0}(\vx)\pd(\vx)\dd\vx = \int e^{-\frac{h^2(\vw_*^\top\vx)}{2\tau}}\pd(\vx)\dd\vx \le 1
\end{equation}
\end{proof}

\subsection{Finding critical points with initial guess (Sec.~\ref{sec:critical-point-initial-guess})}
\label{sec:appendix-power-iter}
\textbf{Notation.} Let $\lambda_i(\vw)$ and $\phi_i(\vw)$ be the $i$-th eigenvalue and unit eigenvector of $A(\vw)$ where $\phi_1(\vw)$ is the largest. We first assume $A(\vw)$ is positive definite (PD) and then remove this assumption later. In this case, $\lambda_1(\vw) \ge \lambda_2(\vw) \ge \ldots \ge \lambda_d(\vw) > 0$. Let $c(\vw) := \vw^\top\phi_1(\vw)$ be the inner product between $\vw$ and the maximal eigenvector of $A(\vw)$.

Consider the following Power Iteration (PI) format:
\begin{equation}
\tilde\vw\iter{t + 1} \leftarrow A(\vw\iter{t})\vw\iter{t},\quad\quad\quad\quad 
\vw\iter{t+1} \leftarrow \frac{\tilde\vw\iter{t + 1}}{\|\tilde\vw\iter{t + 1}\|_2}
\end{equation}
Along the trajectory, let $\phi_i\iter{t} := \phi_1(A(\vw\iter{t}))$ be the $i$-th unit eigenvector of $A(\vw\iter{t})$ and $\lambda_i\iter{t}$ to be the $i$-th eigenvalue. Define $\delta\vw\iter{t} := \vw\iter{t+1} - \vw\iter{t}$, $\delta A\iter{t} := A(\vw\iter{t+1}) - A(\vw\iter{t})$, and
\begin{equation}
    c_t := c(\vw\iter{t}) = \phi^\top_1\iter{t}\vw\iter{t},\quad d_t := \phi^\top_1\iter{t} \vw\iter{t+1}  
\end{equation}
Then $-1 \le c_t,d_t \le 1$ since they are inner product of two unit vectors. 

\onelayercriticalpoint*
\begin{proof}
Note that if $c_0 < 0$, we can always use $-\phi_1(\vw)$ as the maximal eigenvector. 

First we assume $A(\vw)$ is positive definite (PD) over the entire unit sphere $\|\vw\|_2=1$, then follow Lemma~\ref{lemma:onelayerconvergence}, and notice that $\|\vw-\vw\iter{0}\|_2 = \sqrt{2(1-\vw^\top\vw\iter{0})}$, so 
\begin{equation}
    \|\vw-\vw\iter{0}\|_2 \le \frac{\sqrt{2(1+\gamma)(1-c_0)}}{1 - \sqrt{\gamma}}\quad\quad \Longleftrightarrow \quad\quad \vw^\top \vw\iter{0} \ge \frac{c_0-c_\gamma}{1-c_\gamma}
\end{equation}
When $A(\vw)$ is not PD, Theorem~\ref{thm:one-layer-find-critical-point} still applies to the PD matrix $\hat A(\vw) := A(\vw) - \lambda_{\min}(\vw) I  + \epsilon I$ with $L$ and $\kappa$ specified by $\hat A(\vw)$, where $\epsilon > 0$ is a small constant. 

This transformation keeps $c_0$ since the eigenvectors of $\hat A(\vw)$ are the same as $A(\vw)$. wwThe resulting fixed point $\hat \vw_*$ is also the fixed point of the original problem with $A(\vw)$, due to the fact that 
\begin{equation}
P^\perp_\vw \hat A(\vw)\vw = P^\perp_\vw A(\vw)\vw - (\lambda_{\min}(\vw) - \epsilon) P^\perp_\vw \vw = P^\perp_\vw A(\vw)\vw 
\end{equation}
\end{proof}
\textbf{Remarks}. Note that Lemma~\ref{lemma:onelayerconvergence} assumes that along the trajectory $\{\vw\iter{t}\}$, $\mu_t+\nu_t \le \gamma$ holds. In Theorem~\ref{thm:one-layer-find-critical-point}, this can not be assumed true until we prove that the entire trajectory is within $B_\gamma$. 

\def\cT{\mathcal{T}}

\subsection{The effect of data augmentation on local optima}
\label{sec:data-aug-local-opt}
\label{sec:data-aug-local-opt-appendix}
While the majority of the analysis focuses on the cases where there are no data augmentation (i.e., using Corollary~\ref{co:cc-var}), the original formulation Lemma~\ref{lemma:intuitioncc} can still handle contrastive learning in the presence of data augmentation. 

In fact, data augmentation plays an important role by removing unnecessary local optima. First, Lemma~\ref{lemma:intuitioncc} tells that the objective Eqn.~\ref{eq:one-layer-objective}, when $K=1$, takes the following form:
\begin{equation}
    2\cE_\alpha(\vw) := \con_\alpha[h(\vw^\top\vx)] \rightarrow \sum_l z_l^2 \var_{\vx_0\sim \tilde p_l(\cdot;\alpha)}\left[b(\vw|\vx_0)\right]
\end{equation}
where $b(\vw|\vx_0) := \ee_{\vx\sim p_\aug(\cdot|\vx_0)}[h(\vw^\top\vx)|\vx_0]$.  

Now let us consider the following simple data augmentation of $\vx_0$:
\begin{equation}
    \vx = R(t) \vx_0, \quad t\sim \mathrm{Uniform}(\cT)
\end{equation}
where $R(t)\in \rr^{d\times d}$ is some rotation parameterized by $t$, which is drawn uniformly from a parameter family $\cT$. 

We assume $\{R(t)\}_{t\in \cT}$ forms a 1-dimensional \emph{Lie group} parameterized by $\cT$. This means that 
\begin{itemize}
    \item \textbf{Closeness}. For any $t, t'\in\cT$, there exists $t''\in \cT$ so that $R(t'') = R(t)R(t')$.
    \item \textbf{Existence of inverse element}. For each $R(t)$, there exists an inverse element $t' \in \cT$ so that $R(t') = R^{-1}(t) = R^\top(t)$. The last equality is due to the fact that $R(t)$ is a rotation.
    \item \textbf{Existence of identity map}. $R(0) = I$.
\end{itemize}

Then for any small transformation $R(t')$ applied to the weights $\vw$ (here ``small'' means $\|R(t')-I\|_2$ is small), we can write down $b(R(t')\vw|\vx_0)$ using reparameterization trick:
\begin{eqnarray}
b(R(t)\vw|\vx_0) &:=& \ee_{\vx\sim p_\aug(\cdot|\vx_0)}[h((R(t)\vw)^\top \vx)|\vx_0] 
= \int h(\vw^\top R^\top(t) R(t')\vx_0) \pr[t'] \dd t' \nonumber \\
&=& \int h(\vw^\top R^{-1}(t) R(t')\vx_0) \pr[t'] \dd t' \\
&=& \int h(\vw^\top R(t'')\vx_0) \pr[t''] \dd t'' = b(\vw|\vx_0) 
\end{eqnarray}
Note that the last equality is due to the fact that $\{R(t)\}_{t\in \cT}$ is a Lie group, so that $R^{-1}(t) R(t')$ always maps to another group element $R(t'')$, and $t''$ as the resulting parameterization is still uniform.

Due to stop gradient, $\alpha$ and thus $\phi_l(\cdot;\alpha)$ is treated as a constant term when checking the local property of the current parameters $\vw$. This means that in the local neighborhood of $\vw$, $\cE_\alpha(\vw) = \cE_\alpha(R(t')\vw)$. 

Now notice an important observation: if $\vw' := R(t')\vw \neq \vw$, then $\cE_\alpha(\vw) = \cE_\alpha(R(t')\vw) = \cE_\alpha(\vw')$ and therefore, $\vw$ \textbf{\emph{cannot}} be a local optimal. 

Intuitively, this means that the data augmentation can \emph{remove} certain local optima of $\vw$, if they are not \textbf{\emph{locally invariant}} (i.e., $R(t')\vw \neq \vw$) to the transformation of the data augmentation.  Therefore, augmentation removes certain patterns in the input data and their local optima in the training, to only keep patterns (local optima) that are most relevant to the tasks. 

Here we only use 1-dimensional rotation group as one simple example. In practice, the augmentation may not globally form a Lie group, and there could be multiple different types of augmentations, yielding high-dimensional transformation space. Therefore, we may use Lie algebra instead to capture the local transformation structure, without making assumptions about the global structure. We will give a formal study in the future work.

\section{Two layer case (Sec.~\ref{sec:two-layer})}
\subsection{Learning dynamics}
\label{sec:appendix-two-layer-learning-dynamics}
\dynamicstwolayer*
\begin{proof}
The output of the 2-layer network can be written as the following:
\begin{equation}
    f_{2l} = \sum_k v_{lk} h(\vw_k^\top \vx_k) 
\end{equation}
For convenience, we use $\vf_1 := [h(\vw_k^\top\vx_k)]$ to represent the column vector that collects all the outputs of intermediate nodes, and $\vv^\top_l$ is the $l$-th row vector in $V$.  

According to Theorem 1 in~\cite{tian2022deep}, the gradient descent direction of contrastive loss corresponds to the gradient ascent direction of the energy function $\cE_\alpha(\vtheta)$. From Eqn. 25 of that theorem, we have:
\begin{equation}
   \frac{\partial \cE}{\partial \theta} = \sum_l \con_\alpha\left[\frac{\partial f_{2l}}{\partial \theta}, f_{2l}\right] 
\end{equation}
Therefore, for $V = [v_{ik}]$ we have:
\begin{eqnarray}
\dot \vv_i = \frac{\partial \cE}{\partial \vv_i} &=& \sum_l \con_\alpha \left[ \frac{\partial f_{2l}}{\partial \vv_i}, f_{2l}\right]  \\
&=& \con_\alpha \left[ \vf_1, \vv^\top_i \vf_1 \right] \\
&=& \con_\alpha \left[ \vf_1, \vf_1 \right] \vv_i 
\end{eqnarray}
So we have $\dot \vv_i = \con_\alpha[\vf_1]\vv_i$, or $\dot V = V\con_\alpha[\vf_1]$. 

Now we compute $\partial \cE / \partial \vw_k$:
\begin{eqnarray}
    \dot\vw_k = \frac{\partial \cE}{\partial \vw_k} &=& \sum_l \con_\alpha \left[ \frac{\partial f_{2l}}{\partial \vw_k}, f_{2l}\right] \\
    &=& \sum_l \con_\alpha [v_{lk} h'(\vw_k^\top\vx_k)\vx_k, \vv_l^\top \vf_1] \\
    &=& \sum_l v_{lk} \con_\alpha [\gate{\vx_k}, \vv_l^\top \vf_1] \\
    &=& \sum_l v_{lk} \con_\alpha \left[\gate{\vx_k}, \sum_{k'}v_{lk'} h(\vw_{k'}^\top \vx_{k'})\right] \\
    &=& \sum_{k'} \left(\sum_l v_{lk}v_{lk'} \right) \con_\alpha \left[\gate{\vx_k}, \gate{\vx_{k'}}\right] \vw_{k'} \\
    &=& \sum_{k'} s_{kk'} \con_\alpha \left[\gate{\vx_k}, \gate{\vx_{k'}}\right] \vw_{k'} 
\end{eqnarray}
where $S = [s_{kk'}] = V^\top V = \sum_l \vv_l \vv_l^\top$. Let $\vw := [\vw_1;\ldots;\vw_K]$ and it leads to the conclusion. When $M > 1$, the proof is similar.  
\end{proof}

\begin{figure}
    \centering
    \includegraphics[width=\textwidth]{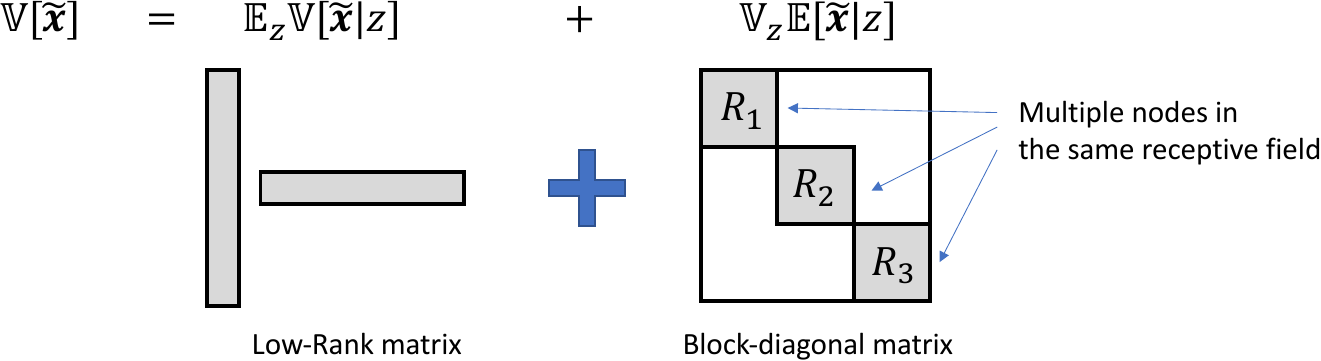}
    \caption{Decomposition of the variance term $\var[\gate{\vx}]$.}
    \label{fig:variance-decomp}
\end{figure}

\subsection{Variance decomposition}
\label{sec:appendix-two-layer-variance-decomposition}
Let $p_c := \pr[z=c]$ be the probability that the latent variable $z$ takes categorical value $c$. 
\begin{restatable}[Close-form of variance under Assumption~\ref{assumption:2-layer}]{lemma}{closeformvariancetwolayer}
\label{lemma:close-form}
With Assumption~\ref{assumption:2-layer}, we have
\begin{eqnarray}
\var[\gate{\vx}] = \diag_k \left[L_k\right] + \sum_{c=0}^{C-1} p_c(1-p_c)^2 \Delta(c)\Delta^\top(c)
\end{eqnarray}
where $L_k := \ee_z \var[\gate{\vx_{k}}|z] \in \rr^{Md}$ and $\Delta(c) := \ee[\gate{\vx}|z=c] - \ee[\gate{\vx}|z\neq c] \in \rr^{MKd}$. In particular when $C=2$, the second term becomes $p_0p_1 \Delta\Delta^\top$, a rank-1 matrix. Here $\Delta := \Delta(0)$ for brevity. 
\end{restatable}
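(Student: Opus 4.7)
The plan is to apply the law of total variance to decompose $\var[\gate{\vx}]$ into a within-class piece and a between-class piece,
\begin{equation}
\var[\gate{\vx}] = \ee_z \var[\gate{\vx}|z] + \var_z \ee[\gate{\vx}|z],
\end{equation}
and then show that the first summand produces the block-diagonal $\diag_k[L_k]$ while the second summand produces the rank-$C$ correction in the stated form.

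For the within-class piece, I would invoke the conditional independence in Assumption~\ref{assumption:2-layer}: given $z$, the inputs $\vx_1,\ldots,\vx_K$ are mutually independent. Because each $\gate{\vx_{km}} = \vx_k\cdot h'(\vw_{km}^\top\vx_k)$ is a deterministic function of $\vx_k$ (with $\vw_{km}$ fixed), the concatenated blocks $\gate{\vx_1},\ldots,\gate{\vx_K}$ are also conditionally independent given $z$. Hence $\var[\gate{\vx}|z]$ is block-diagonal with $k$-th block equal to $\var[\gate{\vx_k}|z]$, and taking expectation over $z$ gives $\ee_z\var[\gate{\vx}|z]=\diag_k[\ee_z\var[\gate{\vx_k}|z]]=\diag_k[L_k]$.

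For the between-class piece, let $\vmu_c := \ee[\gate{\vx}|z=c]$ and $\bar\vmu := \sum_c p_c \vmu_c = \ee[\gate{\vx}]$. Then $\var_z\ee[\gate{\vx}|z] = \sum_c p_c(\vmu_c-\bar\vmu)(\vmu_c-\bar\vmu)^\top$. The key algebraic step is to express $\vmu_c-\bar\vmu$ in terms of $\Delta(c)$. Using $\ee[\gate{\vx}|z\neq c]=\sum_{c'\neq c} p_{c'}\vmu_{c'}/(1-p_c) = (\bar\vmu - p_c\vmu_c)/(1-p_c)$, a one-line calculation gives
\begin{equation}
\Delta(c) = \vmu_c - \frac{\bar\vmu - p_c\vmu_c}{1-p_c} = \frac{\vmu_c - \bar\vmu}{1-p_c},
\end{equation}
so $\vmu_c-\bar\vmu=(1-p_c)\Delta(c)$. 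Substituting and summing yields $\sum_c p_c(1-p_c)^2\Delta(c)\Delta(c)^\top$, which combined with the block-diagonal part proves the claim. The $C=2$ special case follows from $p_1=1-p_0$ and $\Delta(1) = -\Delta(0)$, which collapses the sum to the single rank-one term $p_0 p_1 \Delta\Delta^\top$.

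The proof is essentially routine once the decomposition is set up; the only subtle point is the identity $\vmu_c - \bar\vmu = (1-p_c)\Delta(c)$, which converts the natural ``mean minus global mean'' form of the total-variance expansion into the ``mean for $z=c$ minus mean for $z\neq c$'' contrast $\Delta(c)$ used in the lemma. I don't expect any real obstacle beyond keeping the notation for $\diag_k[\cdot]$ (and the $\vone_d\vone_d^\top$ blocks that will appear later when combined with the Hadamard factor $S\otimes\vone_d\vone_d^\top$) straight.
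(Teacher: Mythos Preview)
Your proposal is correct and follows essentially the same route as the paper: the law of total variance splits $\var[\gate{\vx}]$ into a conditional-variance piece that becomes $\diag_k[L_k]$ via conditional independence, and a between-class piece that is rewritten using the identity $\vmu_c-\bar\vmu=(1-p_c)\Delta(c)$; the $C=2$ case is handled identically via $\Delta(1)=-\Delta(0)$ and $p_0p_1^2+p_1p_0^2=p_0p_1$.
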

\begin{proof}
Use variance decomposition, we have:
\begin{equation}
    \var[\gate{\vx}] = \ee_z \var[\gate{\vx}|z] + \var_z \ee[\gate{\vx}|z]  
\end{equation}
Remember that $\gate{\vx_{km}}$ is an abbreviation of gated input:
\begin{equation}
    \gate{\vx_{km}} := \gatef{\vx_k}{\vw_{km}} := \vx_{k} \cdot h'(\vw^\top_{km} \vx_{k})
\end{equation}
By conditional independence, we have
\begin{equation}
    \mathrm{Cov}[\gate{\vx_{km}}, \gate{\vx_{k'm'}}|z] = 0\quad\quad\quad\forall k\neq k'
\end{equation}
This is because $\gate{\vx_{km}}$ and $\gate{\vx_{k'm'}}$ are deterministic functions of $\vx_k$ and $\vx_{k'}$ and thus are also independent of each other. 

Let 
\begin{equation}
    \gate{\vx_k} := \left[\begin{array}{c}
        \gate{\vx_{k1}}  \\
        \gate{\vx_{k2}}  \\
        \ldots \\
        \gate{\vx_{kM}}  
    \end{array}
    \right] \in \rr^{Md}
\end{equation}
and $L_k := \ee_z\var[\gate{\vx_k|z}]$. Then we know that $\ee_z \var[\gate{\vx}|z] = \diag_k[L_k]$ is a block diagonal matrix (See Fig.~\ref{fig:variance-decomp}). 

On the other hand, $\var_z \ee[\gate{\vx}|z]$ is a low-rank matrix:
\begin{equation}
    \var_z \ee[\gate{\vx}|z] = \ee_z\left[(\ee[\gate{\vx}|z] - \ee[\gate{\vx}])(\ee[\gate{\vx}|z] - \ee[\gate{\vx}])^\top\right]
\end{equation}
Let $\vq_c := \ee[\gate{\vx}|z=c]$ and $\vq_{-c} := \ee[\gate{\vx}|z\neq c]$, then we have:
\begin{eqnarray}
    \ee[\gate{\vx}|z=c] - \ee[\gate{\vx}] &=& \vq_c - \sum_c p_c \vq_c = (1-p_c)\left(\vq_c - \sum_{c'\neq c} \frac{p_{c'}}{1-p_c} \vq_{c'}\right) \\
    &=& (1-p_c)\left(\vq_c - \sum_{c'\neq c} \pr[z=c'|z\neq c] \vq_{c'} \right) \\ 
    &=& (1-p_c)(\vq_c - \vq_{-c})
\end{eqnarray}
Therefore, we have:
\begin{eqnarray}
    \var_z \ee[\gate{\vx}|z] &=& \ee_z\left[(\ee[\gate{\vx}|z] - \ee[\gate{\vx}])(\ee[\gate{\vx}|z] - \ee[\gate{\vx}])^\top\right] \\
    &=& \sum_c p_c (1-p_c)^2 (\vq_c - \vq_{-c})(\vq_c - \vq_{-c})^\top \\
    &=& \sum_c p_c (1-p_c)^2 \Delta(c) \Delta^\top(c)
\end{eqnarray}
where 
\begin{equation}
\Delta(c) := \Delta(c;W) := \vq_c - \vq_{-c} =  
\left[\begin{array}{c}
       \Delta_{11}(c) \\  
       \ldots  \\
       \Delta_{KM}(c)
\end{array}\right] \in \rr^{KMd}
\end{equation}
and 
\begin{equation}
\Delta_{km}(c) := \Delta_{km}(c;\vw_{km}) := \ee[\gate{\vx_{km}}|z=c] - \ee[\gate{\vx_{km}}|z\neq c] 
\end{equation}
We can see that $\var_z \ee[\gate{\vx}|z]$ is at most rank-$C$, since it is a summation of $C$ rank-1 matrix.

In particular, when $C = 2$, it is clear that $\Delta(0) = -\Delta(1)$ and thus $\Delta(0)\Delta^\top(0) = \Delta(1)\Delta^\top(1)$ and $\sum_c p_c(1-p_c)^2 = p_0 p_1^2 + p_1p_0^2 = p_0p_1$. Hence the conclusion. 
\end{proof}

\subsection{Global modulation when $C=2$ and $M=1$}
\label{sec:appendix-dynamics-wk}
\dynamicsoftwolayer*
\begin{proof}
Since $M = 1$, each receptive field (RF) $R_k$ only output a single node with output $f_k$. Let:
\begin{eqnarray}
L_k &:=& \ee_z\var[\gate{\vx_k}|z] \\
d_k &:=& \vw^\top_k L_k \vw_k = \ee_z\var[f_k|z] \ge 0 \label{eq:dk} \\
D &:=& \diag_k [d_k] \\
\vb &:=& [b_k] := [\vw_k^\top\Delta_k] \in \rr^{K} 
\end{eqnarray}
and $\lambda$ be the maximal eigenvalue of $\var[\vf_1]$. Here $L_k$ is a PSD matrix and $D$ is a diagonal matrix. Then 
\begin{equation}
    \var[\vf_1] = D + p_0 p_1 \vb\vb^\top
\end{equation}
is a diagonal matrix plus a rank-1 matrix. Since $p_0p_1\vb\vb^\top$ is always PSD, $\lambda = \lambda_{\max}(\var[\vf_1]) \ge \lambda_{\max}(D) = \max_k d_k$. Then using Bunch–Nielsen–Sorensen formula~\citep{bunch1978rank}, for largest eigenvector $\vs$, we have:
\begin{equation}
    s_k = \frac{1}{Z} \frac{b_k}{d_k - \lambda} \label{eq:def-sk}
\end{equation}
where $\lambda$ is the corresponding largest eigenvalue satisfying $1 + p_0 p_1 \sum_k \frac{b^2_k}{d_k - \lambda} = 0$, and 
$Z = \sqrt{\sum_k \left(\frac{b_k}{d_k-\lambda}\right)^2}$.  
Note that the above is well-defined, since if $k^* = \arg\max_k d_k$ and $b_{k^*} \neq 0$, then $\lambda > \max_k {d_k} = d_{k^*}$. So $b_k / (d_k - \lambda)$ won't be infinite. 

So we have:
\begin{eqnarray}
    \dot \vw_{k} &=& \sum_{k'} s_k s_{k'} \con_\alpha[\gate{\vx}_k, \gate{\vx}_{k'}]\vw_{k'} \\
    &=& \sum_{k'} s_{k} s_{k'} (L_k \mathbb{I}(k=k') + p_0 p_1 \Delta_k\Delta^\top_{k'}) \vw_{k'} \nonumber \\
    &=& s^2_{k} \var[\gate{\vx_k}]\vw_{k} + p_0 p_1 s_{k} \Delta_k \sum_{k'\neq k} s_{k'} \Delta^\top_{k'} \vw_{k'} \\
    &=& s^2_{k} \var[\gate{\vx_k}]\vw_{k} + \frac{p_0 p_1 b_k}{Z^2 (d_k-\lambda)} \Delta_k \sum_{k'\neq k} \frac{b^2_{k'}}{d_{k'} - \lambda} \nonumber \\
    &=& s^2_{k} \var[\gate{\vx_k}]\vw_{k} + \delta_k \Delta_k\Delta^\top_k \vw_{k} \nonumber \\
    &=& \left(s^2_{k} \var[\gate{\vx_k}] + \delta_k \Delta_k\Delta^\top_k \right) \vw_{k}  
\end{eqnarray}
where 
\begin{equation}
    \delta_k := \frac{p_0 p_1}{Z^2 (\lambda-d_k)} \sum_{k'\neq k} \frac{b_{k'}^2}{\lambda - d_{k'}} \label{eq:def-beta-k}
\end{equation} 
Since $\lambda \ge \max_k d_k$, we have $\delta_k \ge 0$ and thus the modulation term is non-negative. Note that since $p_0 p_1 \sum_k \frac{b^2_k}{\lambda - d_k} = 1$, we can also write $\delta_k = 1 - \frac{p_0 p_1 b_{k}^2}{\lambda - d_{k}}$. 
\end{proof}

\modulation*
\begin{proof}
Since $A_k(\vw) = \sum_l g(\vu_l^\top \vw) \vu_l\vu_l^\top$, we could write down its dynamics (we omit the projection $P^\perp_\vw$ for now):
\begin{equation}
    \dot \vw = A_k(\vw) \vw = \sum_l g(\vu_l^\top \vw) \vu_l\vu_l^\top \vw
\end{equation}
Let $y_l\iter{t} := \vu_l^\top \vw\iter{t}$, i.e., $y_l\iter{t}$ is the projected component of the weight $\vw\iter{t}$ onto the $l$-th direction, i.e., a change of bases to orthonormal bases $\{\vu_l\}$, then the dynamics above can be written as 
\begin{equation}
    \dot y_l = g(y_l) y_l
\end{equation}
which is the same for all $l$, so we just need to study $\dot x = g(x)x$. $g(x) > 0$ is a linear increasing function, so we can assume $g(x) = ax+b$ with $a > 0$. Without loss of generality, we could just set $a = 1$. 

Then we just want to analyze the dynamics: 
\begin{equation}
 \dot y_l = (y_l + b_l)y_l, \quad\quad\quad b_l > 0 \label{eq:1d-dynamics}
\end{equation}
which also includes the case of $A_k + c\vu_l\vu_l^\top$, that basically sets $b_l = b + c$. Solving the dynamics leads to the following close-form solution:
\begin{equation}
    \frac{y_l(t)}{y_l(t)+b_l} = \frac{y_l(0)}{y_l(0)+b_l} e^{b_l t}
\end{equation}
\begin{figure}
    \centering
    \includegraphics[width=\textwidth]{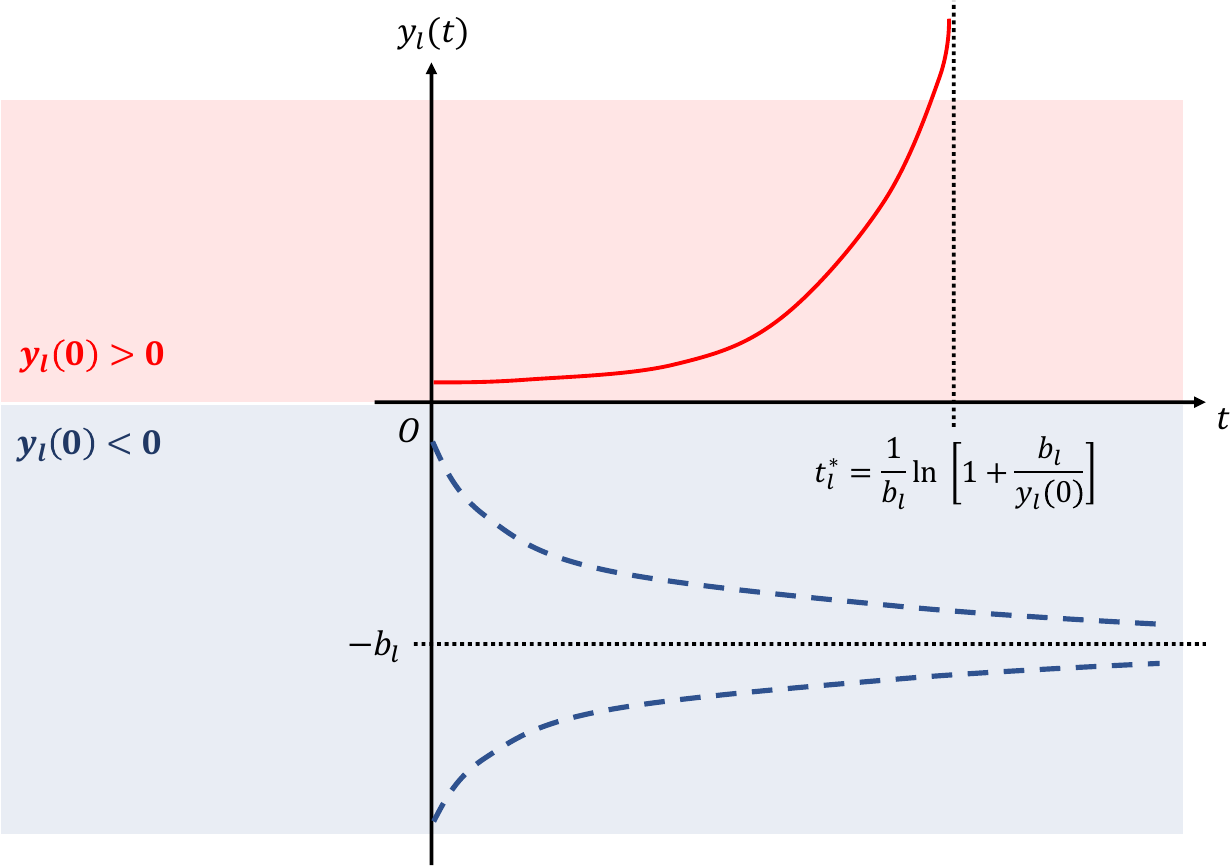}
    \caption{The one-dimensional dynamics (Eqn.~\ref{eq:1d-dynamics}) ($b_l > 0$). There exists a stable critical point $y_l = -b_l$ and one unstable critical point $y_l = 0$. When $y_l(0) > 0$, the dynamics blows up in finite time $t_l^* = \frac{1}{b_l}\ln \left(1 + \frac{b_l}{y_l(0)}\right)$.}
    \label{fig:1d-dynamics}
\end{figure}

The 1-d dynamics has an unstable fixed points $y_l = 0$ and a stable one $y_l=-b_l < 0$. Therefore, when the initial condition $y_l(0) < 0$, the dynamics will converge to $y_l(+\infty) = -b_l$, which is a finite number. On the other hand, when $y_l(0) > 0$, the dynamics has \emph{finite-time blow-up}~\cite{thompson1990nonlinear,goriely1998finite}, i.e., there exists a critical time $t^*_l < +\infty$ so that $y_l(t^*_l) = +\infty$. See Fig.~\ref{fig:1d-dynamics}.

Note that this finite time blow-up is not physical, since we don't take into consideration of normalization $Z(t)$, which depends on all $y_l(t)$. The real quality to be considered is $\hat y_l(t) = \frac{1}{Z(t)}y_l(t)$. Fortunately, we don't need to estimate $Z(t)$ since we are only interested in the ratio:
\begin{equation}
r_{l/l'}(t) := \frac{\hat y_l(t)}{\hat y_{l'}(t)} = \frac{y_l(t)}{y_{l'}(t)}
\end{equation}
If for some $l$ and any $l' \neq l$, $r_{l/l'}(t)\rightarrow \infty$, then $y_l(t)$ dominates and $\hat y_l(t) \rightarrow 1$, i.e., the dynamics converges to $\vu_l$. 

Now our task is to know which initial condition of $y_l$ and $b_l$ makes $r_{l/l'}(t) \rightarrow +\infty$. By comparing the critical time we know which component $l$ shoots up the earliest and that $l^* = \arg\min_l t^*_l$ is the winner, without computing the normalization constant $Z(t)$. 

The critical time satisfies
\begin{equation}
    \frac{y_l(0)}{y_l(0)+b_l} e^{b_l t^*_l} = 1
\end{equation}
so 
\begin{equation}
    t^*_l = \frac{1}{b_l}\ln \left(1 + \frac{b_l}{y_l(0)}\right)
\end{equation}
It is clear that when $y_l(0)$ is larger, the critical time $t^*_l$ becomes smaller and the $l$-th component becomes more advantageous over other components.

For $b_l > 0$, we have:
\begin{equation}
    \frac{\partial t^*_l}{\partial b_l} = \frac{1}{b_l^2}\left[ \frac{b_l / y_l(0)}{1 + b_l / y_l(0)} - \ln(1 + b_l/y_l(0)) \right] < 0
\end{equation}
where the last inequality is due to the fact that $\frac{x}{1+x} < \ln(1+x)$ for $x > 0$. Therefore, larger $b_l$ leads to smaller $t^*_l$. Since adding $c\vu_l\vu_l^\top$ with $c > 0$ to $A_k$ increase $b_l$, it leads to smaller $t^*_l$ and thus increases the advantage of the $l$-th component.  

Therefore, larger $b_l$ and larger $y_l(0)$ both leads to smaller $t^*_l$. For the same $t_l^*$, larger $b_l$ can trade for smaller $y_l(0)$, i.e., larger attractive basin. 
\end{proof}
\textbf{Remark}. \emph{Special case}. We start by assuming only one $\epsilon_l \neq 0$ and all other $\epsilon_{l'} = 0$ for $l' \neq l$, and then we generalize to the case when all $\{\epsilon_l\}$ are real numbers. 

To quantify the probability that a random weight initialization leads to convergence of $\vu_l$, we setup some notations. Let the event $E_l$ be ``a random weight initialization of $\vy$ leads to $\vy \rightarrow \ve_l$'', or equivalently $\vw\rightarrow \vu_l$. Let $Y_l$ be the random variable that instantiates the initial value of $y_l(0)$ due to random weight initialization. Then the convergence event $E_l$ is equivalent to the following: (1) $Y_l > 0$ (so that the $l$-component has the opportunity to grow), and (2) $Y_l + \epsilon_l$ is the maximum over all $Y_{l'}$ for any $l' \neq l$, where $\epsilon_l$ is an advantage ($> 0$) or disadvantage ($<0$) achieved by having larger/smaller $b_l$ due to global modulation (e.g., $c$). Therefore, we also call $\epsilon_l$ the \emph{modulation factor}. 

Here we discuss about a simple case that $Y_l \sim U[-1,1]$ and for $l'\neq l$, $Y_l$ and $Y_{l'}$ are independent. In this case, for a given $l$, $\max_{l'\neq l} Y_{l'}$ is a random variable that is independent of $Y_l$, and has cumulative density function (CDF) $F_{\max}(x) := \pr [\max_{l'\neq l} Y_{l'} \le x] = F^{d-1}(x)$, where $F(x)$ is the CDF for $Y_l$. 

Then we have:
\begin{eqnarray}
    \pr[E_l] &=& \pr\left[Y_l > 0, Y_l + \epsilon_l \ge \max_{l'\neq l} Y_{l'}\right] \\
    &=& \int_0^{+\infty} \pr\left[\max_{l'\neq l} Y_{l'} \le Y_l + \epsilon_l \Big| Y_l = y_l\right] \pr[Y_l=y_l] \dd y_l \\
    &=& \int_0^{+\infty} F^{d-1}(y_l+\epsilon_l) \dd F(y_l)
\end{eqnarray}
When $Y_l\sim U[-1,1]$, $F(x) = \min\left\{\frac12(x+1), 1\right\}$ has a close form and we can compute the integral:
\begin{equation}
    \pr[E_l] = \pr\left[Y_l > 0, Y_l + \epsilon_l \ge \max_{l'\neq l} Y_{l'}\right] = \left\{
\begin{array}{cc}
\frac12 & \epsilon_l > 1 \\
\frac1d\left[1 - \left(\frac{1+\epsilon_l}{2}\right)^d\right] + \frac{\epsilon_l}{2} & 0 \le \epsilon_l \le 1 \\ 
\frac1d\left[(1+\frac{\epsilon_l}{2})^d - (\frac{1+\epsilon_l}{2})^d\right] & -1 < \epsilon_l < 0
\end{array}
\right.
\end{equation}
We can see that the modulation factor $\epsilon_l$ plays an important role in deciding the probability that $\vw \rightarrow \vu_l$: 
\begin{itemize}
    \item \textbf{No modulation}. If $\epsilon_l = 0$, then $\pr[E_l] \sim \frac{1}{d}$. This means that each dimension of $\vy$ has equal probability to be the dominant component after training;
    \item \textbf{Positive modulation}. If $\epsilon_l > 0$, then $\pr[E_l] \ge \frac{\epsilon_l}{2}$, and that particular $l$-th component has much higher probability to become the dominant component, independent of the dimensionality $d$. Furthermore, the stronger the modulation, the higher the probability becomes.  
    \item \textbf{Negative modulation}. Finally, if $\epsilon_l < 0$, since $1 + \epsilon_l / 2 < 1$, $\pr[E_l] \le \frac{1}{d} (1+\frac{\epsilon_l}{2})^d $ decays exponentially w.r.t the dimensionality $d$. 
\end{itemize}

\emph{General case}. We then analyze cases if all $\epsilon_l$ are real numbers. Let $l^* = \arg\max_l \epsilon_l$ and $c(k)$ be the $k$-th index of $\epsilon_l$ in descending order, i.e., $c(1) = l^*$. 
\begin{itemize}
    \item For $l = c(1) = l^*$, $\epsilon_l$ is the largest over $\{\epsilon_l\}$. Since 
    \begin{eqnarray}
    \pr[E_l] &=& \pr\left[Y_l\ge 0, Y_l + \epsilon_l \ge \max_{l'\neq l} Y_{l'} + \epsilon_{l'}\right] \nonumber \\
    &\ge& \pr\left[Y_l\ge 0, Y_l + \epsilon_{c(1)} - \epsilon_{c(2)} \ge \max_{l'\neq l} Y_{l'}\right] \nonumber
    \end{eqnarray}
    where $\epsilon_{c(1)} - \epsilon_{c(2)}$ is the gap between the largest $\epsilon_l$ and second largest $\epsilon_l$. Then this case is similar to positive modulation and thus 
    \begin{equation}
        \pr[E_{c(1)}] \ge \frac{1}{2}\left(\epsilon_{c(1)} - \epsilon_{c(2)}\right)
    \end{equation} 
    \item For $l$ with rank $r$ (i.e., $c(r) = l$), and any $r' < r$, we have: 
    \begin{eqnarray}
    \pr[E_l] &=& \pr\left[Y_l\ge 0, Y_l + \epsilon_l \ge \max_{l'\neq l} Y_{l'} + \epsilon_{l'}\right] \nonumber \\
    &\le& \pr\left[Y_l\ge 0, Y_l + \epsilon_l \ge \max_{l': c^{-1}(l') \le r' } Y_{l'} + \epsilon_{l'}\right] \nonumber \\
    &=& \pr\left[Y_l\ge 0, Y_l + \epsilon_l - \epsilon_{c(r')} \ge \max_{l': c^{-1}(l') \le r' } Y_{l'} + \epsilon_{l'} - \epsilon_{c(r')} \right] \nonumber \\
    &\le& \pr\left[Y_l\ge 0, Y_l + \epsilon_l - \epsilon_{c(r')} \ge \max_{l': c^{-1}(l') \le r' } Y_{l'}\right] \nonumber
    \end{eqnarray}
    Then it reduces to the case of negative modulation. Therefore, we have:
    \begin{equation}
        \pr[E_{c(r)}] \le \min_{r'<r} \frac{1}{r'+1} \left(1-\frac{\epsilon_{c(r')} - \epsilon_{c(r)}}{2}\right)^{r'+1}
    \end{equation}
    and the probability is exponentially small if $r$ is large, i.e., $\epsilon_l$ ranks low.
\end{itemize}

\subsection{Fundamental limitation of linear models}
\label{sec:appendix-limitation-linear}
\impossiblelinear*
\begin{proof}
In the linear case, we have $\gate{\vx_{km}} = \vx_k$ since there is no gating and all $M$ shares the same input $\vx_k$. Therefore, we can write down the dynamics of $\vw_{km}$ as the following:
\begin{eqnarray}
    \dot\vw_{km} &=& \sum_{k',m'} s_{km,k'm'} \con_\alpha[\tilde \vx_{km}, \tilde \vx_{k'm'}] \vw_{k'm'} \\
    &=& \sum_{k',m'} s_{km,k'm'} \con_\alpha[\vx_k, \vx_{k'}] \vw_{k'm'} 
\end{eqnarray}

Now we use the fact that the top-level learns fast so that $s_{km,k'm'} = s_{km}s_{k'm'}$, which gives:
\begin{eqnarray}
    \dot\vw_{km} &=& s_{km} \sum_{k',m'} s_{k'm'} \con_\alpha[\vx_k, \vx_{k'}] \vw_{k'm'} \\
    &=& s_{km} \con_\alpha\left[\vx_k, \sum_{k',m'} s_{k'm'} \vw^\top_{k'm'}\vx_{k'}\right]
\end{eqnarray}
Let $\vb_k(W, V) := \con_\alpha\left[\vx_k, \sum_{k',m'} s_{k'm'} \vw^\top_{k'm'}\vx_{k'} \right]$ be a linear function of $W$, and we have: 
\begin{equation}
    \dot\vw_{km} = s_{km} \vb_k(W, V)
\end{equation}
Since $\vb_k$ is independent of $m$, all $\dot\vw_{km}$ are co-linear.

For the second part, first all if $W^*$ is a critical point, we have the following two facts: 
\begin{itemize}
    \item Since there exists $m$ so that $s_{km} \neq 0$, we know that $\vb_k(W^*) = 0$; 
    \item If $W^*$ contains two distinct filters $\vw_{k1} = \vmu_1 \neq \vw_{k2} = \vmu_2$ covering the same receptive field $R_k$, then by symmetry of the weights, ${W'}^*$ in which $\vw_{k1} = \vmu_2$ and $\vw_{k2} = \vmu_1$, is also a critical point. 
\end{itemize}
Then for any $c \in [0,1]$, since $\vb_k(W)$ is linear w.r.t. $W$, for the linear combination $W^c := cW^* + (1-c) {W'}^*$, we have:
\begin{equation}
    \vb_k(W^c) = 
    \vb_k(cW^* + (1-c) {W'}^*) = c\vb_k(W^*) + (1-c)\vb_k({W'}^*) = 0
\end{equation}
Therefore, $W^c$ is also a critical point, in which $\vw_{k1} = c \vmu_1 + (1 - c)\vmu_2$ and $\vw_{k2} = (1-c) \vmu_1 + c\vmu_2$. In particular when $c=1/2$, $\vw_{k1} =\vw_{k2}$. Repeating this process for different $m$, we could finally reach a critical point in which all $\vw_{km} = \vw_k$. 
\end{proof}

\section{Analysis of Batch Normalization}
\label{sec:appendix-analysis-of-bn}
From the previous analysis of global modulation, it is clear that the weight updating can be much slower for RF with small $d_k$, due to the factor $\frac{1}{\lambda - d_k}$ in both $s_k^2$ (Eqn.~\ref{eq:def-sk}) and $\beta_k$ (Eqn.~\ref{eq:def-beta-k}) and the fact that $\lambda \ge \max_k d_k$. This happens when the variance of each receptive fields varies a lot (i.e., some $d_k$ are large while others are small). In this case, adding BatchNorm at each node alleviates this issue, as shown below. 

We consider BatchNorm right after $\vf$: $f^\bn_{k}[i] = (f_{k}[i] - \mu_k) / \sigma_k$, where $\mu_k$ and $\sigma_k$ are the batch statistics computed from BatchNorm on all $2N$ samples in a batch:
\begin{eqnarray}
    \mu_k &:=& \frac{1}{2N}\sum_i f_k[i] + f_k[i'] \\
    \sigma^2_k &:=& \frac{1}{2N} \sum_i (f_{k}[i] - \mu_k)^2 + (f_{k}[i'] - \mu_k)^2 
\end{eqnarray}
When $N \rightarrow +\infty$, we have $\mu_k \rightarrow \ee[f_k]$ and $\sigma^2_k \rightarrow \var[f_k] = \vw_k^\top\var[\gate{\vx_k}]\vw_k$. 

Let $\gate{\vx^\bn_k} := \sigma_k^{-1} \gate{\vx_k}$ and $\gate{\vx^\bn} := \left[\begin{array}{c} 
\gate{\vx^\bn_1} \\
\gate{\vx^\bn_2} \\
\ldots \\ 
\gate{\vx^\bn_K}
\end{array}\right]$. When computing gradient through BatchNorm layer, we consider the following variant:  
\begin{definition}[mean-backprop BatchNorm]
\label{def:mean-proj-bn}
When computing backpropagated gradient through BatchNorm, we only backprop through $\mu_k$. 
\end{definition}
This leads to a model dynamics that has a very similar form as Lemma~\ref{lemma:2-layer-dyn}:
\begin{lemma}[Dynamics with mean-backprop BatchNorm]
With mean-backprop BatchNorm (Def.~\ref{def:mean-proj-bn}), the dynamics is: 
\begin{equation}
    \dot V = V \con_\alpha[\vf^\bn_1],\quad\quad\quad\quad \dot \vw = \left[ (S\otimes \vone_d\vone_d^\top) \circ \con_\alpha[\gate{\vx}^\bn] \right] \vw \label{eq:dyn-bn}
\end{equation}
\end{lemma}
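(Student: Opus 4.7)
The plan is to mirror the derivation of Lemma~\ref{lemma:2-layer-dyn}, but replace every appearance of $f_k=h(\vw_k^\top\vx_k)$ by its BatchNorm-normalized counterpart and carefully track what the ``mean-backprop'' simplification in Def.~\ref{def:mean-proj-bn} does to the gradient through $\vw_k$. The key preparatory observation is that, by homogeneity, $f_k=\vw_k^\top\gate{\vx_k}$, so $f^\bn_k=\sigma_k^{-1}(\vw_k^\top\gate{\vx_k}-\mu_k)=\vw_k^\top\gate{\vx^\bn_k}-\sigma_k^{-1}\mu_k$. Hence $\vv_l^\top\vf^\bn_1=\sum_{k'}v_{lk'}\vw_{k'}^\top\gate{\vx^\bn_{k'}}+\text{(constant in }\vx\text{)}$. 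Since $\con_\alpha[\cdot,\cdot]$ is computed from pairwise sample differences, it annihilates any additive constant shared across samples; this will be used repeatedly.

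The first step is to compute $\dot V$. Applying the energy-gradient identity of~\cite{tian2022deep} (as in the proof of Lemma~\ref{lemma:2-layer-dyn}), and noting that the network output is $f_{2l}=\vv_l^\top\vf^\bn_1$, one obtains
\begin{equation}
\dot\vv_l=\sum_l\con_\alpha\!\left[\tfrac{\partial f_{2l}}{\partial \vv_l},\,f_{2l}\right]=\con_\alpha[\vf^\bn_1,\vv_l^\top\vf^\bn_1]=\con_\alpha[\vf^\bn_1]\vv_l,
\end{equation}
so $\dot V=V\con_\alpha[\vf^\bn_1]$, which is the first equation of~\eqref{eq:dyn-bn}.

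The second step is $\dot\vw_k$. Here is where Def.~\ref{def:mean-proj-bn} matters: the derivative is taken only through $\mu_k$, so
\begin{equation}
\frac{\partial f^\bn_k}{\partial \vw_k}=\sigma_k^{-1}\bigl(\gate{\vx_k}-\ee[\gate{\vx_k}]\bigr)=\gate{\vx^\bn_k}-\ee[\gate{\vx^\bn_k}].
\end{equation}
Feeding this into $\dot\vw_k=\sum_l\con_\alpha[\partial f_{2l}/\partial\vw_k,\,f_{2l}]$, using mean-invariance of $\con_\alpha$ to drop the $\ee[\gate{\vx^\bn_k}]$ term on the left argument and the $\sigma_{k'}^{-1}\mu_{k'}$ term hidden in $f_{2l}$, gives
\begin{equation}
\dot\vw_k=\sum_l v_{lk}\sum_{k'}v_{lk'}\con_\alpha[\gate{\vx^\bn_k},\gate{\vx^\bn_{k'}}]\vw_{k'}=\sum_{k'}s_{kk'}\con_\alpha[\gate{\vx^\bn_k},\gate{\vx^\bn_{k'}}]\vw_{k'},
\end{equation}
which stacks into $\dot\vw=[(S\otimes\vone_d\vone_d^\top)\circ\con_\alpha[\gate{\vx}^\bn]]\vw$, matching the second equation of~\eqref{eq:dyn-bn}. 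A brief remark at the end should note why the projector $P^\perp_\vw$ of Lemma~\ref{lemma:2-layer-dyn} disappears: the per-filter normalization that forced it there is now absorbed by BatchNorm's own rescaling, so $\vw$ is an unconstrained variable.

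The only delicate step is the mean-cancellation argument inside $\con_\alpha$; the other manipulations are the same bookkeeping as in the proof of Lemma~\ref{lemma:2-layer-dyn}. I would justify mean-cancellation by expanding $\con_\alpha[\va+\vc,\vb+\vd]$ with $\vc,\vd$ deterministic across samples and observing that every term in $\con_\alpha$ is a pairwise difference $(\va[i]-\va[j])$ or $(\vb[i]-\vb[j])$, which kills the constants. Everything else is just substitution into the already-proved gradient identity.
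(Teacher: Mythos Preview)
Your proposal is correct and follows essentially the same route as the paper: both compute $\dot V$ exactly as in Lemma~\ref{lemma:2-layer-dyn} with $\vf_1^\bn$ in place of $\vf_1$, then for $\dot\vw_k$ use Def.~\ref{def:mean-proj-bn} to drop the $\partial\sigma_k/\partial\vw_k$ term, observe that $\partial\mu_k/\partial\vw_k$ is a batch constant annihilated by $\con_\alpha$, and collect the remaining $\sigma_k^{-1}\gate{\vx_k}=\gate{\vx_k^\bn}$ factors. Your added remark on why $P^\perp_\vw$ disappears (the per-filter unit-norm constraint of Lemma~\ref{lemma:2-layer-dyn} is replaced by BatchNorm's own normalization, so $\vw$ is unconstrained) is a nice clarification the paper leaves implicit.
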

\begin{proof}
The proof is similar to Lemma~\ref{lemma:2-layer-dyn}. For $\dot V$ it is the same by replacing $\vf_1$ with $\vf^\bn_1$, which is the input to the top layer.

For $\dot\vw$, similarly we have:
\begin{eqnarray}
    \dot\vw_k = \frac{\partial \cE}{\partial \vw_k} &=& \sum_l \con_\alpha \left[ \frac{\partial f_{2l}}{\partial \vw_k}, f_{2l}\right] \\
    &=& \sum_l \con_\alpha \left[ v_{lk} \frac{\partial f^\bn_{1k}}{\partial \vw_k}, \sum_{k'} v_{lk'} f^\bn_{1k'}\right] \\
    &=& \sum_l \con_\alpha \left[ v_{lk} \frac{\partial f^\bn_{1k}}{\partial \vw_k}, \sum_{k'} v_{lk'} (f_{1k'} - \mu_{k'})\sigma^{-1}_{k'}\right] 
\end{eqnarray}
Note that $\con_\alpha[\cdot, \mu_{k'} \sigma^{-1}_{k'}] = 0$ since $\mu_{k'}$ and $\sigma_{k'}$ are statistics of the batch and is constant. On the other hand, for $\partial f^\bn_{1k}/\partial \vw_k$, we have:
\begin{equation}
    \frac{\partial f^\bn_{1k}}{\partial \vw_k} = \frac{1}{\sigma_k}\left(\frac{\partial f_{1k}}{\partial \vw_k} - \frac{\partial \mu_k}{\partial \vw_k}\right) - \frac{f_{1k}^\bn}{\sigma_k}\frac{\partial \sigma_k}{\partial \vw_k} 
\end{equation}
Note that
\begin{equation}
     \frac{\partial \mu_k}{\partial \vw_k} = \ee_{\sample}[\gate{\vx_k}]
\end{equation}
where $\ee_{\sample}[\cdot]$ is the sample mean, which is a constant over the batch. Therefore $\con_\alpha\left[\cdot, \partial \mu_k/\partial \vw_k\right] = 0$. For mean-backprop BatchNorm, since the gradient didn't backpropagate through the variance, the second term is simply zero. Therefore, we have: 
\begin{eqnarray}
    \dot\vw_k &=& \sum_l \con_\alpha \left[ v_{lk}\sigma_k^{-1} \frac{\partial f_{1k}}{\partial \vw_k}, \sum_{k'} v_{lk'} f_{1k'} \sigma_{k'}^{-1}\right] \\
    &=& \sum_l \con_\alpha \left[ v_{lk}\sigma_k^{-1} \gate{\vx_k}, \sum_{k'} v_{lk'} \vw_{k'}^\top \gate{\vx_{k'}} \sigma_{k'}^{-1}\right] \\
    &=& \sum_{k'} s_{kk'} \con_\alpha[\sigma^{-1}_{k}\gate{\vx_k},\sigma^{-1}_{k'}\gate{\vx_{k'}}] \vw_{k'}
\end{eqnarray}
Let $\gate{\vx^\bn_k} := \sigma_k^{-1}\gate{\vx_k}$ and  $\gate{\vx^\bn} := \left[\begin{array}{c} 
\gate{\vx^\bn_1} \\
\gate{\vx^\bn_2} \\
\ldots \\ 
\gate{\vx^\bn_K}
\end{array}\right] \in \rr^{Kd}$. The conclusion follows.
\end{proof}

\begin{restatable}[Dynamics of $\vw_k$ under conditional independence and BatchNorm]{corollary}{closeformvariancetwolayerbn}
Let 
\begin{eqnarray}
    A^\bn_k &:=& \var[\gate{\vx}_k^\bn] = \sigma^{-2}_k A_k \\
    d^\bn_k &:=& \sigma^{-2}_k d_k \\ 
 \Delta^\bn_k &:=& \sigma^{-1}_k \Delta_k = \ee[\gate{\vx_k^\bn}|z=1] - \ee[\gate{\vx_k^\bn}|z=0]
\end{eqnarray}
and $\lambda^{\bn}$ be the maximal eigenvalue of $\var[\vf^\bn_1]$. Then we have
\begin{itemize}
    \item (1) $\lambda^\bn \ge \max_k d^\bn_k$; 
    \item (2) For $\lambda^\bn$, the associated unit eigenvector is 
    $$\vs^\bn := \frac{1}{Z^\bn}\left[\frac{\vw_k^\top \Delta^\bn_k}{\lambda^\bn - d^\bn_k}\right] \in \rr^{K},$$ where $Z^\bn$ is the normalization constant; 
    \item (3) the dynamics of $\vw_k$ is given by:
    \begin{equation}
    \dot \vw_k =  
     \left[(s^\bn_k)^2 A^\bn_k + \delta^{\bn}_k  (\Delta^\bn_k)(\Delta^\bn_k)^\top\right] \vw_k  \end{equation}
     where 
     \begin{equation}
     \delta^{\bn}_k := \frac{p_0 p_1}{(Z^\bn)^2 (\lambda^{\bn}-d^{\bn}_k)} \sum_{k'\neq k} \frac{(\vw_{k'}^\top\Delta^{\bn}_{k'})^2}{\lambda^{\bn} - d^{\bn}_{k'}} \ge 0
     \end{equation}
\end{itemize}
\end{restatable}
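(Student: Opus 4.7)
The plan is to mirror the proof of Theorem~\ref{eq:dyn-one-node-per-RF} (dynamics under conditional independence) step by step, substituting the BatchNorm-normalized quantities $\gate{\vx^\bn_k}$, $L^\bn_k$, $d^\bn_k$ and $\Delta^\bn_k$ in place of their un-normalized counterparts. The key enabling observation is that Lemma~\ref{lemma:dyn-bn} gives an expression for $\dot\vw$ that is formally identical to Lemma~\ref{lemma:2-layer-dyn} after the replacement $\gate{\vx}\mapsto\gate{\vx}^\bn$ (the missing outer $P^\perp_\vw$ is not needed here because, under mean-backprop BatchNorm with fixed $\sigma_k$, each $\sigma_k$ is treated as a constant in this sub-problem).

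First I would verify that the structural hypotheses of Lemma~\ref{lemma:close-form} transfer to $\gate{\vx}^\bn$. Because $\gate{\vx^\bn_k}=\sigma_k^{-1}\gate{\vx_k}$ is a deterministic scalar rescaling and the $\sigma_k$'s are batch constants independent of $z$, the conditional independence in Assumption~\ref{assumption:2-layer} is preserved. Applying Lemma~\ref{lemma:close-form} to $\gate{\vx}^\bn$ in the case $C=2$ therefore yields
\begin{equation}
\var[\gate{\vx}^\bn] \;=\; \diag_k[L_k^\bn] \;+\; p_0 p_1\, \Delta^\bn (\Delta^\bn)^\top,
\end{equation}
with $L_k^\bn=\sigma_k^{-2}L_k$ and $\Delta^\bn$ the vertical concatenation of $\Delta_k^\bn=\sigma_k^{-1}\Delta_k$. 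Passing this through the BatchNormed first layer and using $\vw_k^\top L_k^\bn\vw_k = d_k^\bn$ and $\vw_k^\top \Delta_k^\bn = [\vb^\bn]_k$, we obtain $\var[\vf_1^\bn]=D^\bn+p_0p_1\vb^\bn(\vb^\bn)^\top$ with $D^\bn=\diag_k[d_k^\bn]$.

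Next I would reproduce the Bunch--Nielsen--Sorensen computation from Theorem~\ref{eq:dyn-one-node-per-RF} on this diagonal-plus-rank-one matrix. The PSD rank-one perturbation immediately gives claim~(1), $\lambda^\bn\ge\max_k d_k^\bn$, and the closed form $s_k^\bn \propto (\vw_k^\top\Delta_k^\bn)/(\lambda^\bn-d_k^\bn)$ after normalization by $Z^\bn$, which is claim~(2). Finally I would plug $\vs^\bn$ back into the dynamics in Lemma~\ref{lemma:dyn-bn}, expand the Kronecker--Hadamard product exactly as in the proof of Theorem~\ref{eq:dyn-one-node-per-RF}, and collect the diagonal and rank-one pieces to obtain claim~(3).

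The main obstacle I anticipate is bookkeeping rather than conceptual: one needs to ensure that the $\sigma_k$ factors are absorbed consistently, both into $\gate{\vx}^\bn$ (one power of $\sigma_k^{-1}$ per factor) and into the eigen-equation (which introduces two further powers through $s_k^\bn s_{k'}^\bn$ in the Hadamard product). A sanity check is that the final dynamics agrees with Theorem~\ref{eq:dyn-one-node-per-RF} after substituting $\sigma_k\equiv 1$, which it does by construction. A minor subtlety is the justification for dropping $P^\perp_{\vw_k}$: under mean-backprop BatchNorm the weight-norm-preserving property is automatic because, treating $\sigma_k$ as a constant in the backward pass, the resulting gradient already lies along a direction that does not inflate $\|\vw_k\|$; alternatively, one can re-introduce $P^\perp_{\vw_k}$ everywhere without affecting the statement, since $\dot\vw_k$ will then be projected but the eigenstructure governing its direction is unchanged.
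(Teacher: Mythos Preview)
Your proposal is correct and matches the paper's approach exactly: the paper's proof consists of the single sentence ``Similar to Theorem~\ref{eq:dyn-one-node-per-RF},'' and you have simply spelled out what that entails---rescaling $\gate{\vx}_k\mapsto\gate{\vx}_k^\bn$, reapplying Lemma~\ref{lemma:close-form}, and rerunning the Bunch--Nielsen--Sorensen computation verbatim.
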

\begin{proof}
Similar to Theorem~\ref{eq:dyn-one-node-per-RF}.
\end{proof}
\textbf{Remarks} In the presence of BatchNorm, Lemma~\ref{lemma:close-form} still holds, since it only depends on the generative structure of the data. Therefore, we have 
$$\sigma^2_k \rightarrow \var[f_k] = \vw^\top_k\var[\gate{\vx_k}]\vw_k = d_k + p_0p_1 \left(\vw^\top_k \Delta_k\right)^2$$
and thus $$d^\bn_k = \sigma_k^{-2} d_k \rightarrow \frac{d_k}{d_k + p_0p_1 \left(\vw^\top_k \Delta_k\right)^2} = \frac{1}{1 + p_0p_1 \left(\vw^\top_k \Delta_k/\sqrt{d_k}\right)^2}$$ 
becomes more uniform. This is because $d_k := \vw^\top_k L_k \vw_k = \ee_z\var[f_k|z] \ge 0$ (Eqn.~\ref{eq:dk}) is approximately the variance of $f_k$, and thus $\vw_k^\top\Delta_k / \sqrt{d_k}$ is normalized across different receptive field $k$, reducing the effect of magnitude of the input. 

Since there is no much variation within $\{d^\bn_k\}$, $\lambda^\bn - d^\bn_k$ becomes almost constant across different receptive field $R_k$ and won't lead to slowness of feature learning.  

\clearpage

\section{Additional experiments}
\subsection{Visualization of local optima in 1-layer setting}
\begin{figure}
    \centering
    \includegraphics[width=\textwidth]{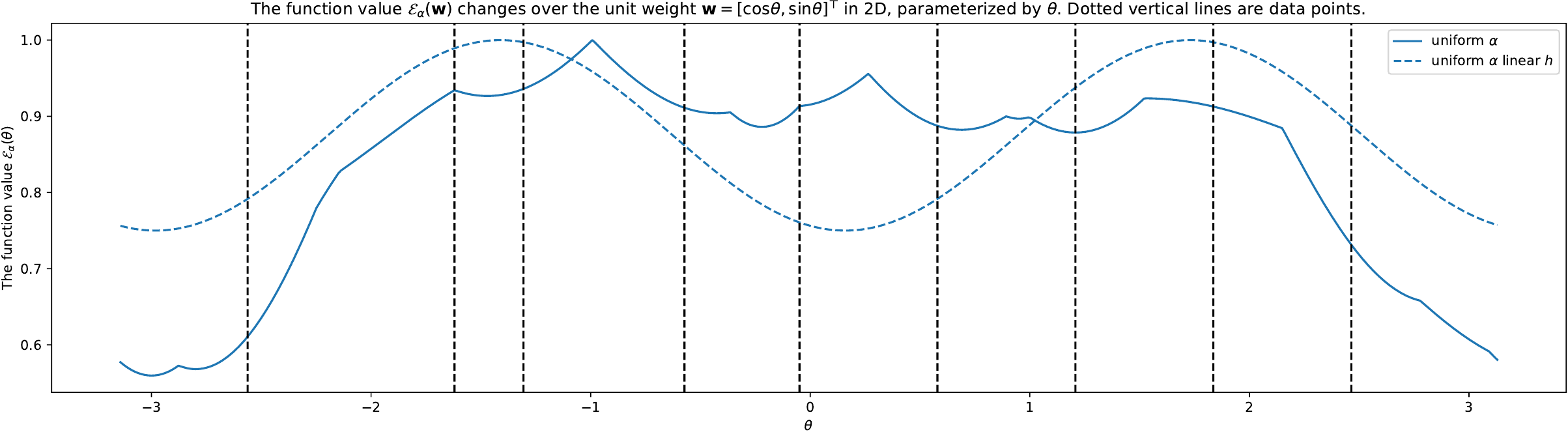}
    \caption{\small Local optima of objective $\cE_\alpha(\vw)$ under uniform $\alpha := 1$. $\vw=[\cos\theta, \sin\theta]^\top$ is parameterized by $\theta \in [-\pi,\pi]$, and each vertical dotted line is a data point. Dotted line is $\cE_\alpha$ with linear activation $h(x) = x$, and solid line is using ReLU activation $h(x) = \max(x,0)$. Both lines are scaled so that their maximal value is $1$.}
    \label{fig:local-optimal-distribution}
\end{figure}
We added a simple experiment to visualize the local maxima of $\cE_\alpha(\vw) := \frac12\con_\alpha[h(\vw^\top\vx)]$, when $\vw = [\cos\theta, \sin\theta]^\top$ is a 2D unit vector parameterized by $\theta$, and $h$ is ReLU activation. For simplicity, here we use a uniform $\alpha := 1$. 

We put a few data points $\{\vx_i\}$ on the unit circle, which are also parameterized by $\theta$. The data points are located at $\{-\frac{4\pi}{5}, -\frac{\pi}{2}, -\frac{2\pi}{5}, -\frac{\pi}{6}, 0, \frac{\pi}{5}, \frac{2\pi}{5}, \frac{3\pi}{5}, \frac{4\pi}{5}\}$ and no data augmentation is used. The objective function $\cE_\alpha(\theta)$ is plotted in Fig.~\ref{fig:local-optimal-distribution}.   

From the figure, we can see many local maxima ($\ge 8$) caused by nonlinearity (solid line), much more than $2\times 2=4$, the maximal possible number of local maxima counting all PCA components in 2D case (i.e., $\pm\phi_1$ and  $\pm\phi_2$, where $\phi_1$ and $\phi_2$ are orthogonal PCA directions in this 2D example). Moreover, unlike PCA directions, these local optima are not orthogonal to each other.  

On the other hand, in the linear case (dotted line), the curve is much smoother. There are only two local maxima corresponding to $\pm\phi_1$, where $\phi_1$ is the largest PCA eigenvector.  

\subsection{2-layer setting}
\label{sec:2-layer-appendix}
We also do more experiments on the 2-layer setting, to further verify our theoretical findings. 

\textbf{Overall matching score $\score{}$ and overall irrelevant-matching score $\scoreneg$}. As defined in the main text (Eqn.~\ref{eq:overall-matching-score}), the matching score $\chi_+(R_k)$ is the degree of matching between learned weights and the embeddings of the subset $R^{\token{}}_k$ of tokens that are allowed in the global patterns at each receptive field $R_k$. And the overall matching score $\score{}$ is $\score{}$ averaged over all receptive fields:   
\begin{equation}
    \chi_+(R_k) = \frac{1}{P}\sum_{a\in R^\token_k} \max_{m} \frac{\vw^\top_{km} \vu_a}{\|\vw_{km}\|_2 \|\vu_a\|_2}, \quad\quad\quad\quad\score = \frac{1}{K}\sum_k \chi_+(R_k) \label{eq:overall-matching-score-positive}
\end{equation}

Similarly, we can also define \emph{irrelevant-matching score} $\chi_-(R_k)$ which is the degree of matching between learned weights and the embeddings of the tokens that are NOT in the subset  $R^{\token{}}_k$ at each receptive field $R_k$. And the overall \emph{irrelevant-matching} score $\scoreneg$ is defined similarly. 
\begin{equation}
    \chi_-(R_k) = \frac{1}{P}\sum_{\textcolor{red}{a\notin R^\token_k}} \max_{m} \frac{\vw^\top_{km} \vu_a}{\|\vw_{km}\|_2 \|\vu_a\|_2}, \quad\quad\quad\quad\scoreneg = \frac{1}{K}\sum_k \chi_-(R_k) \label{eq:overall-matching-score-negative}
\end{equation}
Ideally, we want to see high overall matching score $\score$ and low overall irrelevant-matching score $\scoreneg{}$, which means that the important patterns in $R^{\token}_k$ (i.e., the patterns that are allowed in the global generators) are learned, but noisy patterns that are not part of the global patterns (i.e., the generators) are not learned. Fig.~\ref{fig:performance-linear-relu-over-param-appendix} shows that this indeed is the case.  

\textbf{Non-uniformity $\zeta$ and how BatchNorm interacts with it.}
When the scale of input data varies a lot, BatchNorm starts to matter in discovering features with low magnitude (Sec.~\ref{sec:appendix-analysis-of-bn}). To model the scale non-uniformity, we set $\|\vu_a\|_2 = \zeta$ for $\lfloor d/2 \rfloor$ tokens and $\|\vu_a\|_2 = 1/\zeta$ for the remaining tokens. Larger $\zeta$ corresponds to higher non-uniformity across inputs. 

Fig.~\ref{fig:bn-effect-nonuniformity-5-appendix} shows that BN with ReLU activations handles large non-uniformity (large $\zeta$) very well, compared to the case without BN. Specifically, BN yields higher $\score{}$ in the presence of high non-uniformity (e.g., $\zeta = 10$) when the network is over-parameterized ($\beta > 1$) and there are multiple candidates per $R_k$ ($P>1$), a setting that is likely to hold in real-world scenarios. 

Note that in the real-world scenario, features from different channels/modalities indeed will have very different scales, and some local features that turn out to be super important to global features, can have very small scale. In such cases, normalization techniques (e.g., BatchNorm) can be very useful and our formulation justifies it in a mathematically consistent way.  

\textbf{Selectively Backpropagating $\mu_k$ and $\sigma_k^2$ in BatchNorm}. In our analysis of BatchNorm, we assume that gradient backpropagating the mean statistics $\mu_k$, but not variance $\sigma_k^2$ (see Def.~\ref{def:mean-proj-bn}). Note that this is different from regular BatchNorm, in which both $\mu_k$ and $\sigma_k^2$ get backpropagated gradients. Therefore, we test how this modified BN affects the matching score $\score{}$: we change whether $\mu_k$ and $\sigma_k^2$ gets backpropagated gradients, while the forward pass remains the same, yielding the four following variants:
\def\sg{\text{stop-gradient}}

\begin{align}
    f^\bn_k[i] &:= \frac{f^\bn_k[i] - \mu_k}{\sigma_k} \tag{\text{Vanilla BatchNorm}} \\
    f^\bn_k[i] &:= \frac{f^\bn_k[i] - \sg(\mu_k)}{\sigma_k} \tag{\text{BatchNorm with backpropated $\sigma_k$}}\\
    f^\bn_k[i] &:= \frac{f^\bn_k[i] - \mu_k}{\sg(\sigma_k)} \tag{\text{BatchNorm with backpropated $\mu_k$}} \\
    f^\bn_k[i] &:= \frac{f^\bn_k[i] - \sg(\mu_k)}{\sg(\sigma_k)} 
    \tag{\text{BatchNorm without backpropating statistics}}
\end{align}

As shown in Tbl.~\ref{tab:bn-component-analysis-appendix}, it is interesting to see that if $\sigma^2_k$ is not backpropagated, then the matching score $\score{}$ is actually better. This justifies our BN variant.

\textbf{Quadratic versus InfoNCE loss.} Fig.~\ref{fig:quadratic-loss-appendix} shows that quadratic loss (constant pairwise importance $\alpha$) shows worse matching score than InfoNCE. A high-level intuition is that InfoNCE dynamically adjusts the pairwise importance $\alpha$ (i.e., the focus of different sample pairs) during training to focus on the most important sample pairs, which makes learning patterns more efficient. We leave a comprehensive study for future work. 

\begin{figure}
    \centering
    \includegraphics[width=\textwidth]{infonce_overparams_num_token_positive-crop.pdf}
    \includegraphics[width=\textwidth]{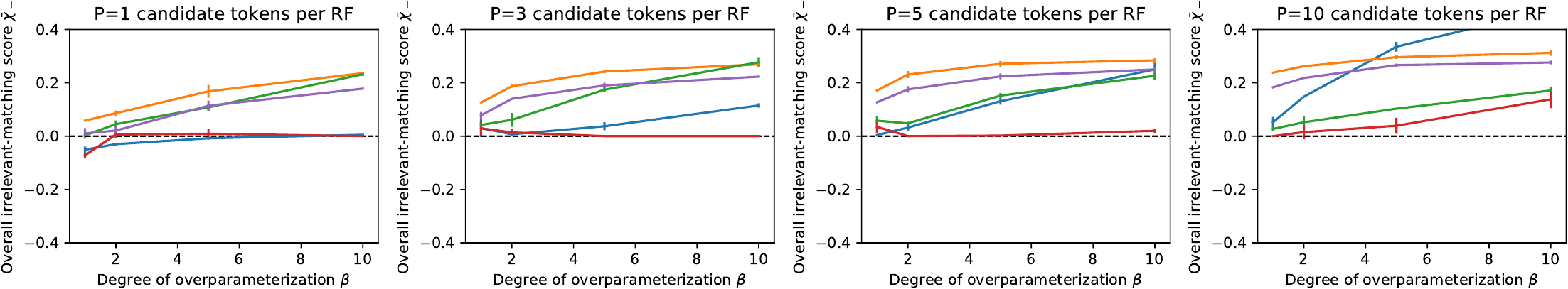}
    \caption{\small Overall matching score $\score$ (Eqn.~\ref{eq:overall-matching-score-positive}, the top row) and irrelevant-matching score $\scoreneg$ (Eqn.~\ref{eq:overall-matching-score-negative}, the bottom row). This is an extended version of Fig.~\ref{fig:performance-linear-relu-over-param}. \textbf{(a)} When $P=1$, linear model works well regardless of the degree of over-parameterization $\beta$, while ReLU model requires large over-parameterization to perform well; \textbf{(b)} When each $R_k$ has multiple local patterns that are related to the global patterns ($P > 1$) related to generators, ReLU models can capture diverse patterns better than linear ones in the over-parameterization region $\beta > 1$ and stay focus on relevant local patterns that are related to the global patterns (i.e., low $\scoreneg{}$). Among all activations (homogeneous or non-homogeneous), ReLU shows its strength by achieving the lowest irrelevant-matching score $\scoreneg$. In contrast, linear models are much less affected by over-parameterization. Each setting is repeated 3 times and mean/standard derivations are reported.}
    \label{fig:performance-linear-relu-over-param-appendix}
\end{figure}

\begin{figure}
    \centering
    \includegraphics[width=\textwidth]{quadratic_overparams_num_token_positive-crop.pdf}
    \includegraphics[width=\textwidth]{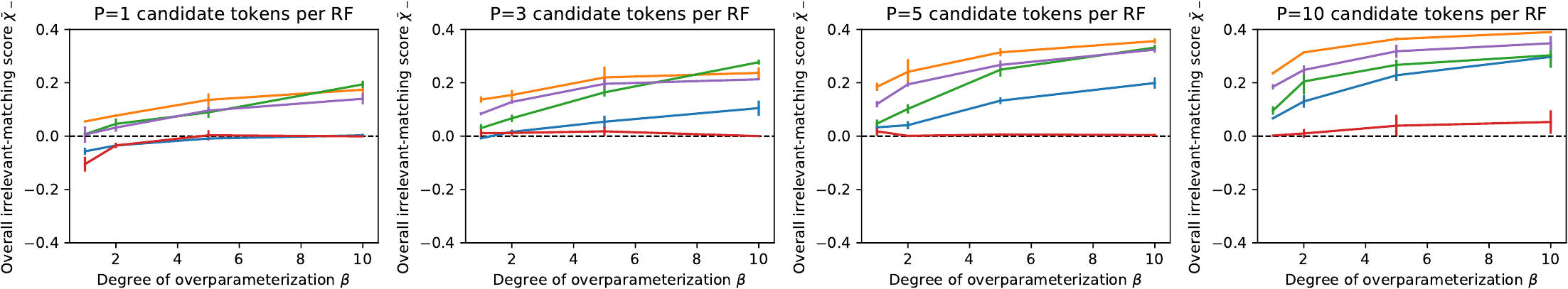}
    \caption{\small Overall matching score $\score$ (top row) and overall irrelevant-matching score $\scoreneg$ (Eqn.~\ref{eq:overall-matching-score}, bottom row) using \textbf{quadratic loss} function rather than InfoNCE. The result using InfoNCE is shown in Fig.~\ref{fig:performance-linear-relu-over-param-appendix}, with all experiments setting being the same, except for the loss function. While we see similar trends as in Sec.~\ref{sec:exp-result}, quadratic loss is not as effective as InfoNCE in feature learning.}
    \label{fig:quadratic-loss-appendix}
\end{figure}

\begin{figure}
    \centering
    \includegraphics[width=\textwidth]{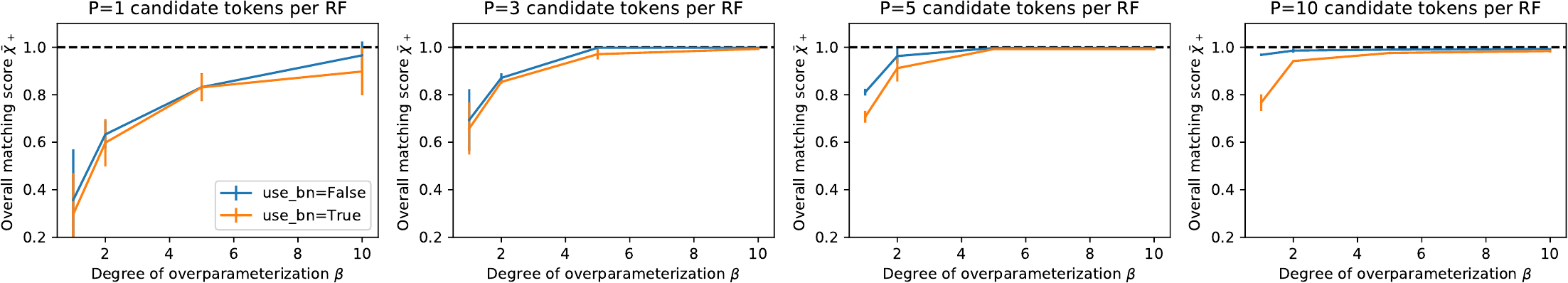}
    \includegraphics[width=\textwidth]{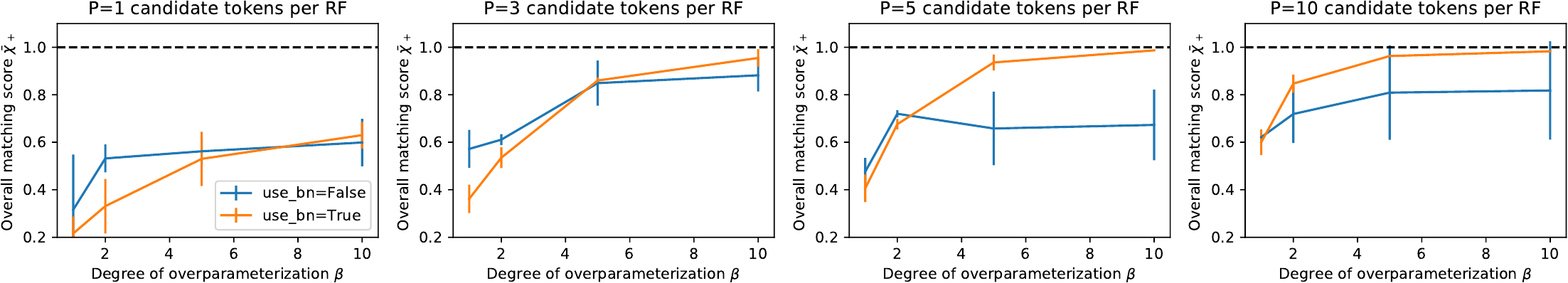}
    \includegraphics[width=\textwidth]{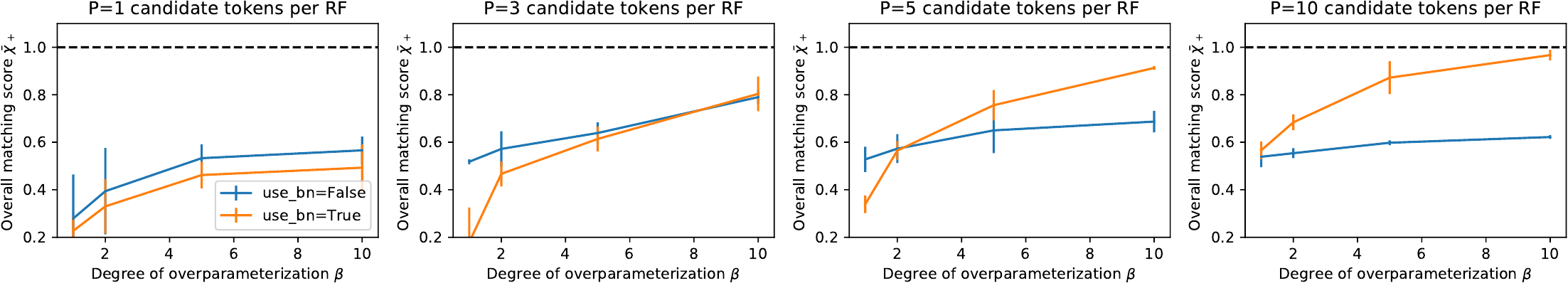}
    \caption{\small The effect of BatchNorm (BN) with ReLU activation in the presence of non-uniformity $\zeta$ of the input data. The non-uniformity is set to be $\zeta=2$ (top), $\zeta=5$ (middle) and $\zeta=10$ (bottom). For small non-uniformity, BN doesn't help much. For larger nonuniformity, BN yields better matching score $\score$ in the over-parameterization region (large $\beta$) and multiple tokens per RF (large $P$).}
    \label{fig:bn-effect-nonuniformity-5-appendix}
\end{figure}

\begin{table}[]
    \centering
    \begin{tabular}{c|c||c|c||c|c}
\multirow{2}{*}{$\beta$} & \multirow{2}{*}{$P$} & \multicolumn{2}{c}{$\mu_k$ no backprop} & \multicolumn{2}{c}{$\mu_k$ backprop} \\
 &  & $\sigma^2_k$ no backprop & $\sigma^2_k$ backprop & $\sigma^2_k$ no backprop & $\sigma^2_k$ backprop \\
\hline
\multirow{4}{*}{$1$} & $1$ & $0.31\pm 0.24$& $0.23\pm 0.06$& $0.30\pm 0.24$& $0.23\pm 0.06$\\
                           & $3$ & $0.65\pm 0.09$& $-0.03\pm 0.01$& $0.64\pm 0.07$& $0.24\pm 0.11$\\
                           & $5$ & $0.61\pm 0.06$& $-0.00\pm 0.00$& $0.62\pm 0.02$& $0.38\pm 0.04$\\
                           & $10$ & $0.66\pm 0.06$& $0.53\pm 0.05$& $0.70\pm 0.03$& $0.56\pm 0.04$\\
\hline
\multirow{4}{*}{$2$} & $1$ & $0.36\pm 0.13$& $0.27\pm 0.06$& $0.63\pm 0.11$& $0.33\pm 0.12$\\
                           & $3$ & $0.78\pm 0.01$& $0.00\pm 0.01$& $0.80\pm 0.02$& $0.41\pm 0.07$\\
& $5$ & $0.78\pm 0.02$& $0.22\pm 0.21$& $0.77\pm 0.07$& $0.56\pm 0.02$\\
& $10$ & $0.83\pm 0.08$& $0.72\pm 0.06$& $0.80\pm 0.04$& $0.71\pm 0.05$\\
\hline
\multirow{4}{*}{$5$} & $1$ & $0.63\pm 0.06$& $0.43\pm 0.12$& $0.67\pm 0.06$& $0.46\pm 0.06$\\
& $3$ & $0.90\pm 0.06$& $0.45\pm 0.07$& $0.90\pm 0.03$& $0.60\pm 0.08$\\
& $5$ & $0.90\pm 0.01$& $0.72\pm 0.01$& $0.88\pm 0.06$& $0.74\pm 0.02$\\
& $10$ & $0.88\pm 0.01$& $0.88\pm 0.04$& $0.92\pm 0.03$& $0.90\pm 0.03$\\
\hline
\multirow{4}{*}{$10$} & $1$ & $0.63\pm 0.12$& $0.37\pm 0.15$& $0.70\pm 0.17$& $0.46\pm 0.15$\\
& $3$ & $0.95\pm 0.05$& $0.72\pm 0.06$& $0.94\pm 0.05$& $0.80\pm 0.03$\\
& $5$ & $0.98\pm 0.02$& $0.84\pm 0.05$& $0.96\pm 0.06$& $0.92\pm 0.01$\\
& $10$ & $0.90\pm 0.01$& $0.97\pm 0.02$& $0.89\pm 0.03$& $0.97\pm 0.02$ \\
    \end{tabular}
    \caption{\small The effect of backpropagating different BN statistics under nonuniformity $\zeta = 10$. Backpropagating the gradient through the sample mean $\mu_k$ but not the sample variance $\sigma^2_k$ gives overall good matching score $\score{}$, justifying our setting of mean-backprop BatchNorm (Def.~\ref{def:mean-proj-bn}).}
    \label{tab:bn-component-analysis-appendix}
\end{table}

\clearpage

\begin{figure}
    \centering
    \includegraphics[width=0.24\textwidth]{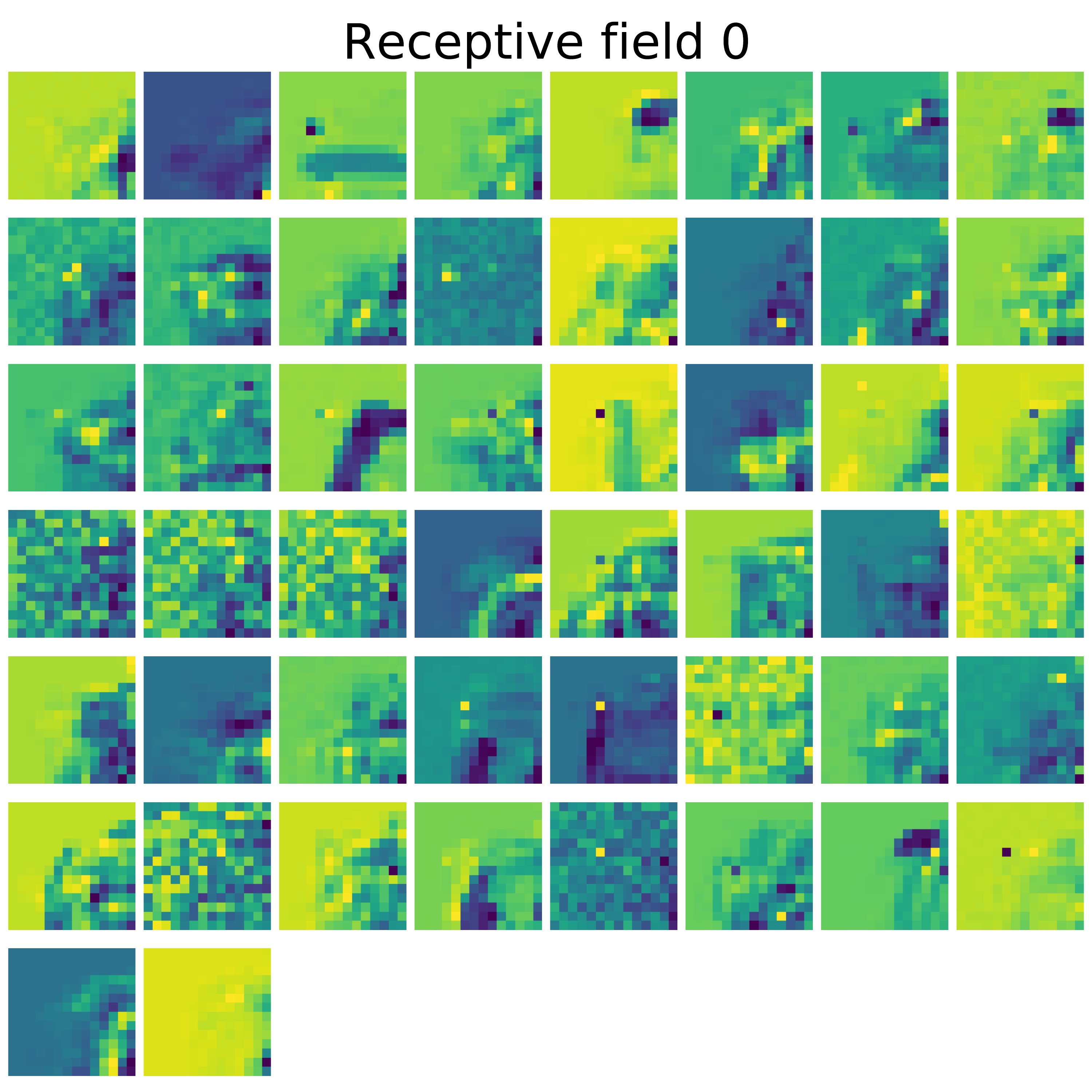}
    \includegraphics[width=0.24\textwidth]{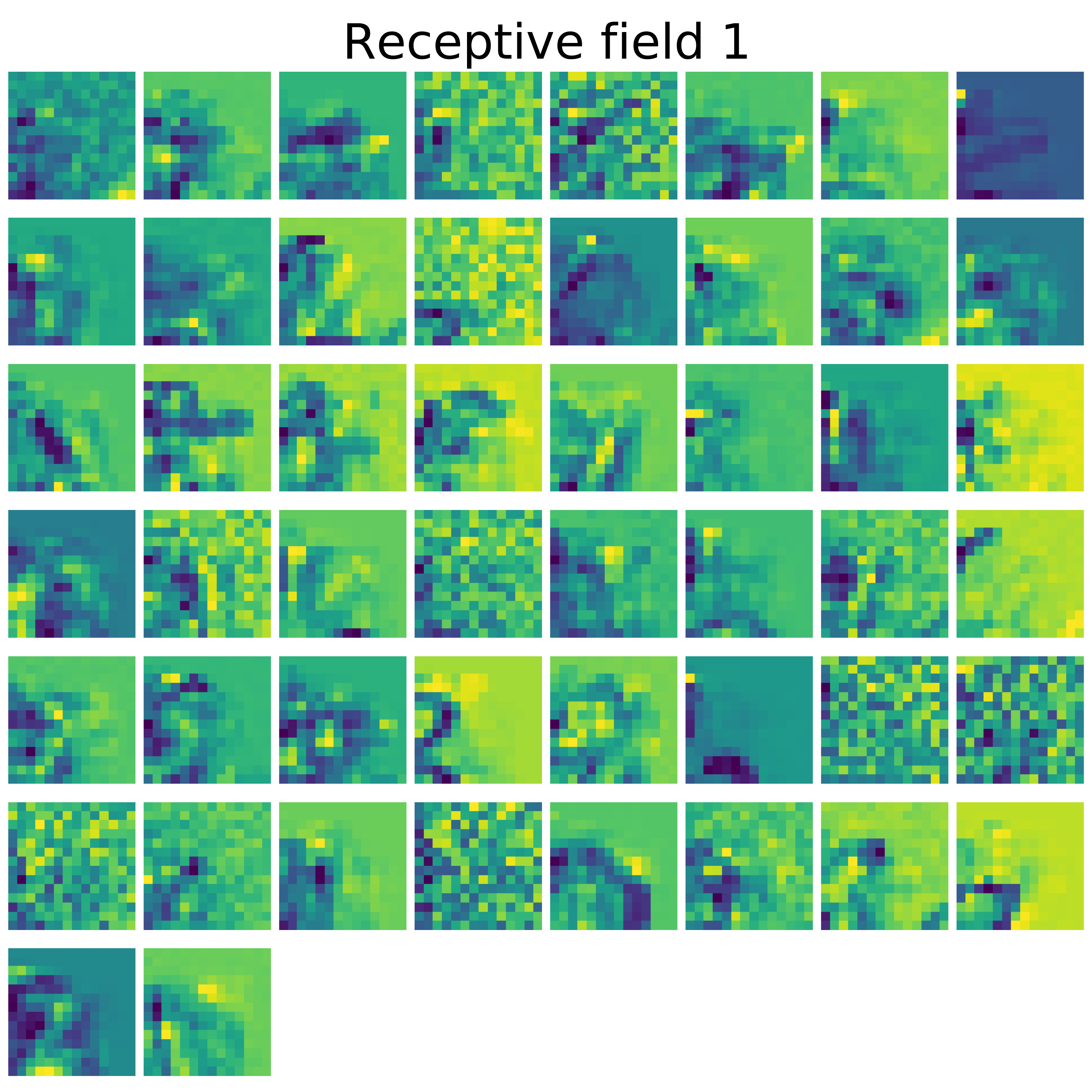}
    \includegraphics[width=0.24\textwidth]{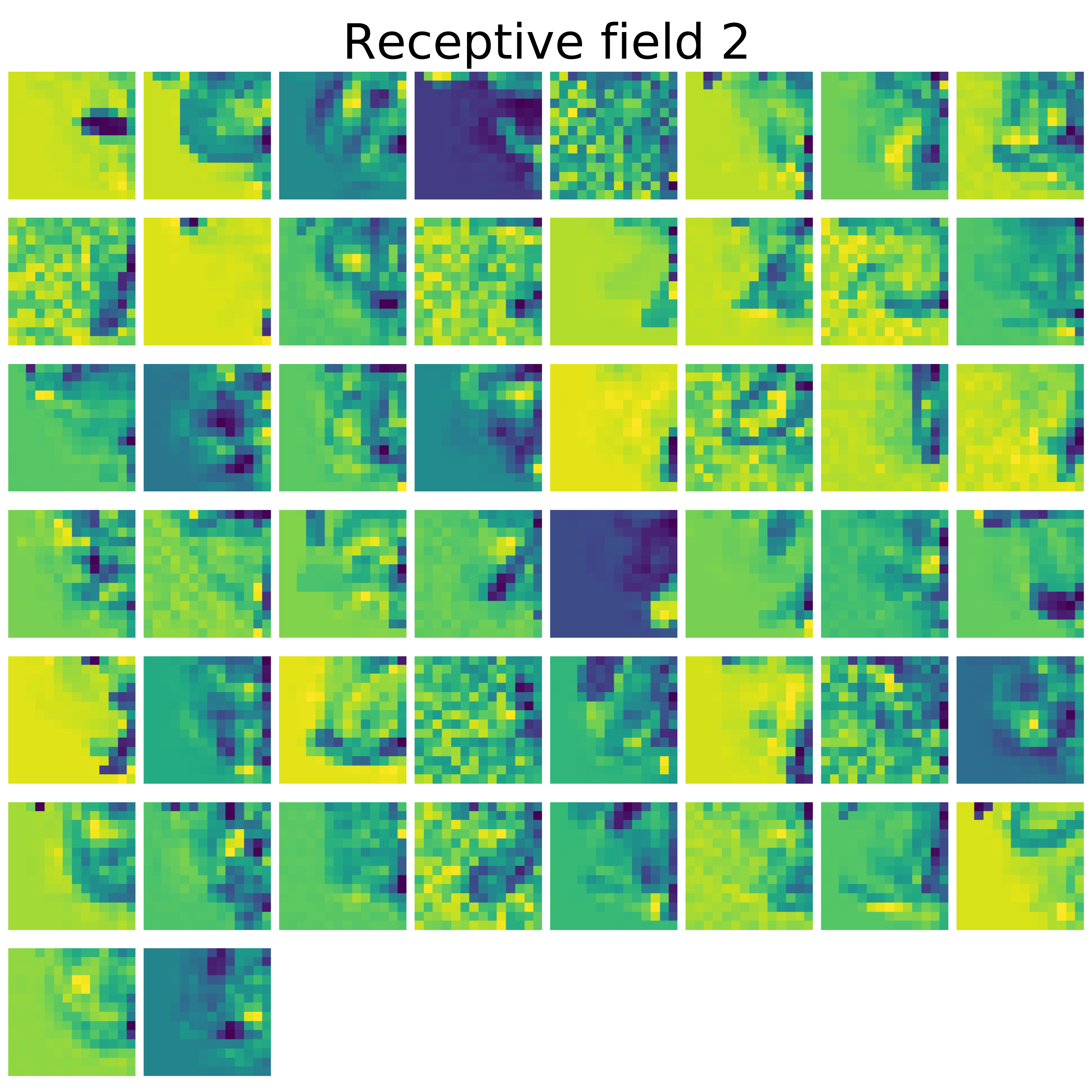}
    \includegraphics[width=0.24\textwidth]{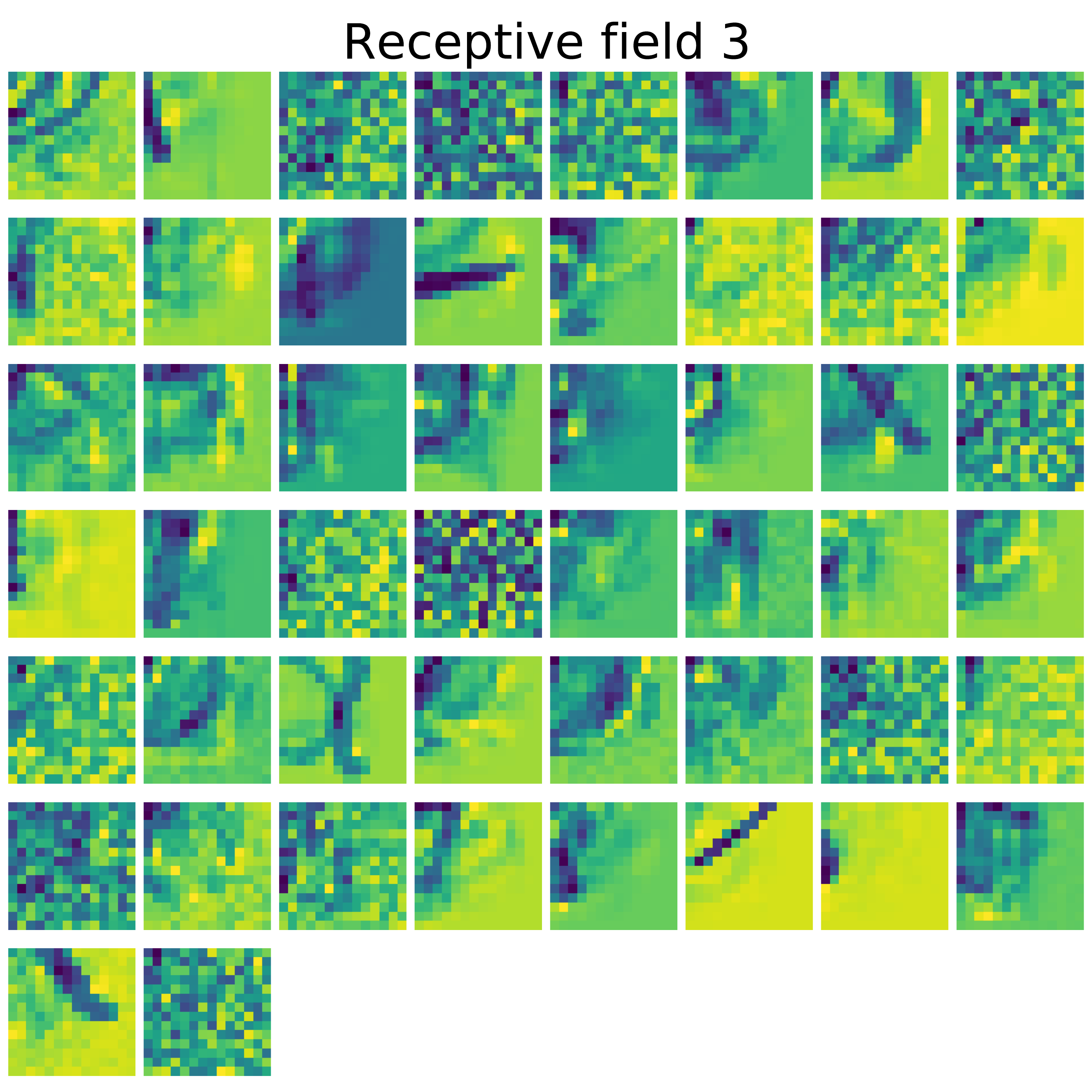}
    \caption{\small \rebuttal{Learned filters (of the 4 disjoint receptive field) in MNIST dataset without using augmentation. The 4 receptive fields corresponds to upper left (0), upper right (1), bottom left (2) and bottom right (3) part of the input image.}}
    \label{fig:mnist-no-aug}
\end{figure}

\begin{figure}
    \centering
    \includegraphics[width=0.24\textwidth]{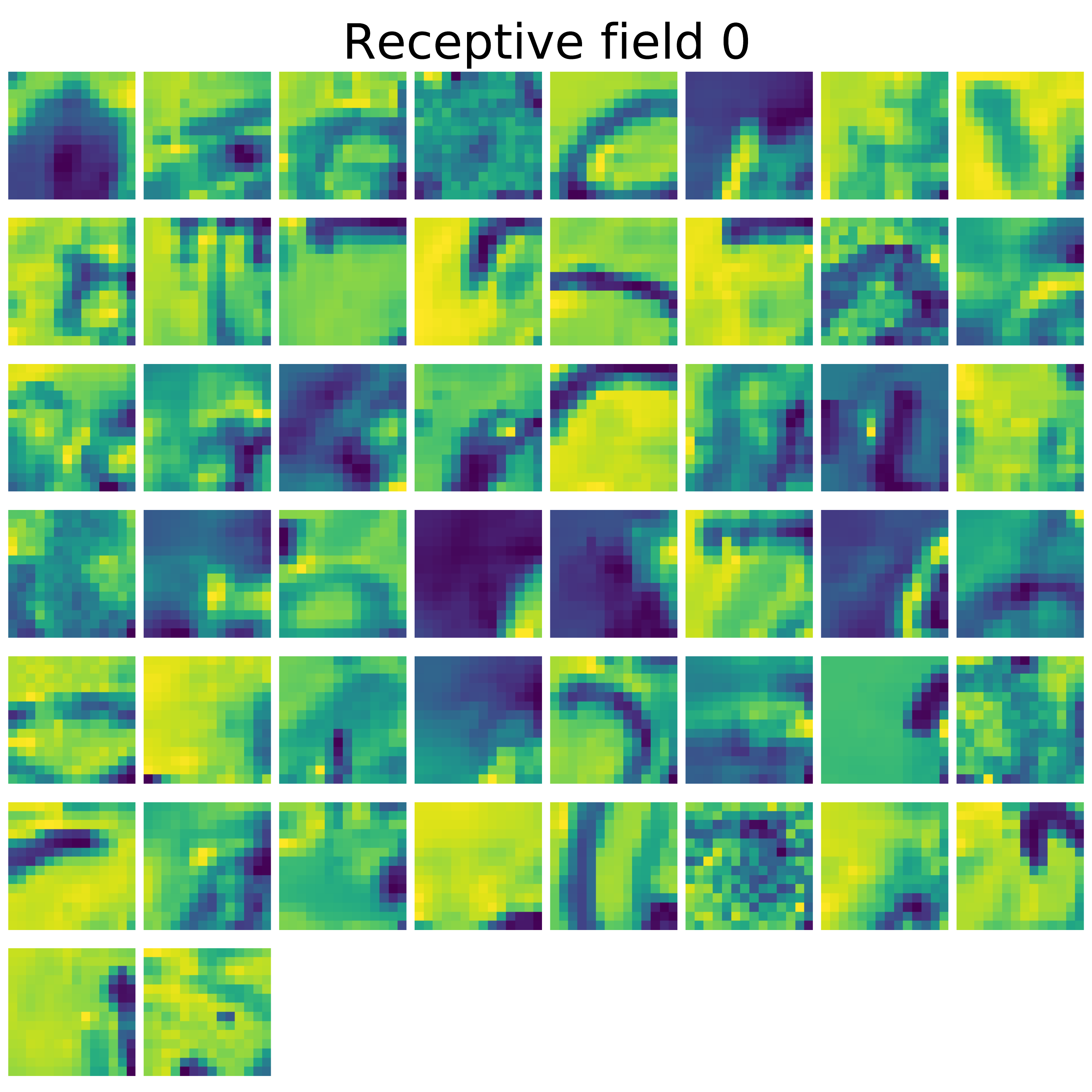}
    \includegraphics[width=0.24\textwidth]{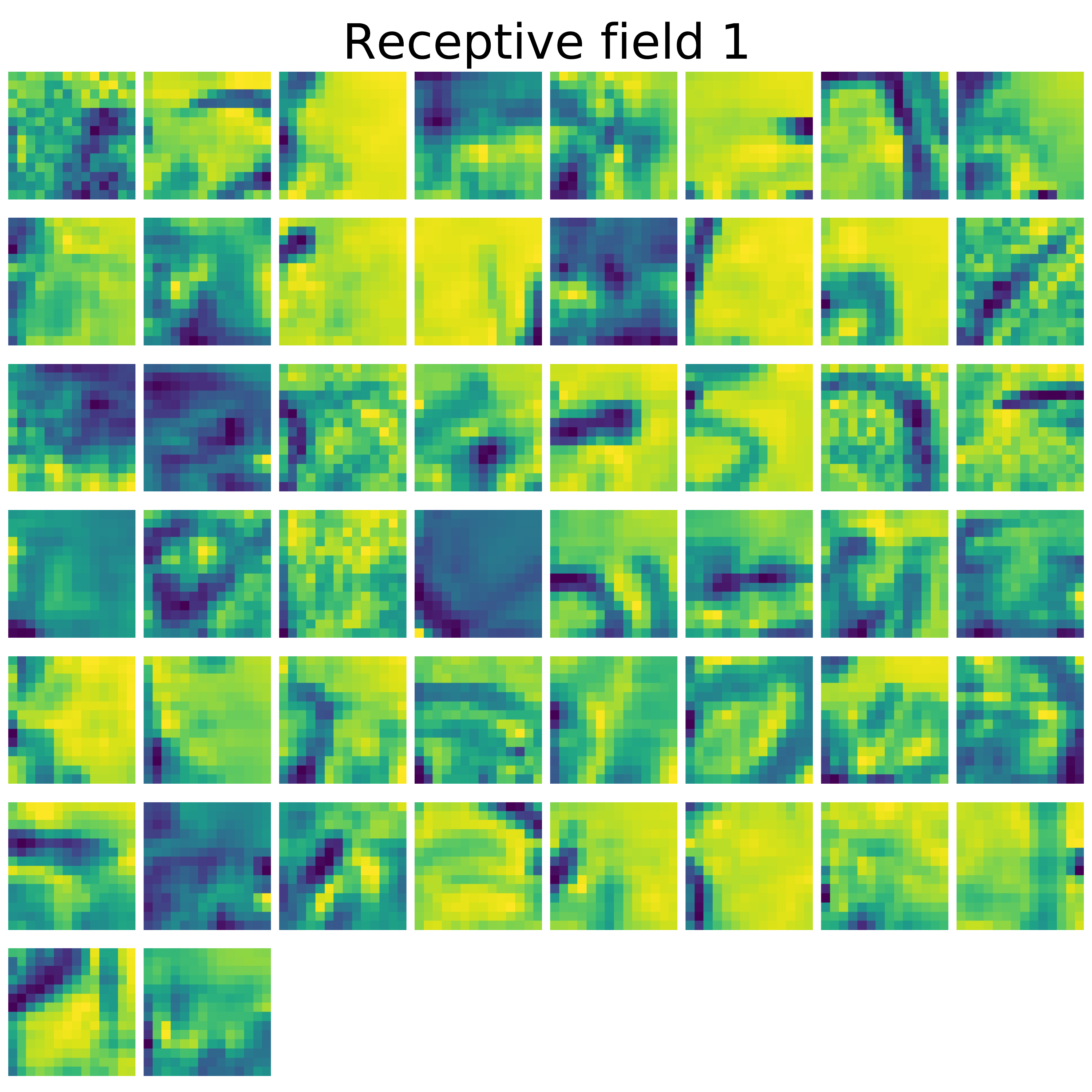}
    \includegraphics[width=0.24\textwidth]{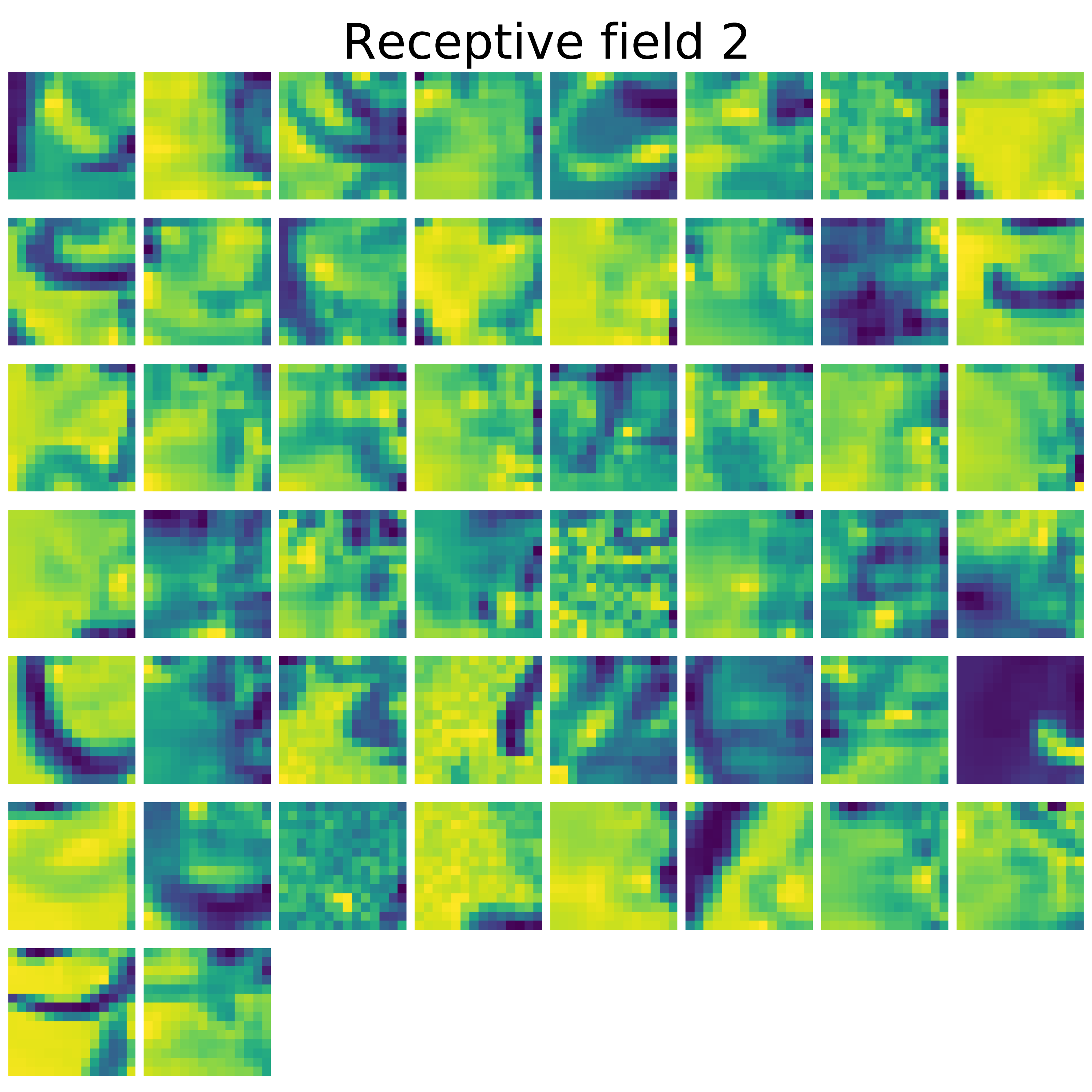}
    \includegraphics[width=0.24\textwidth]{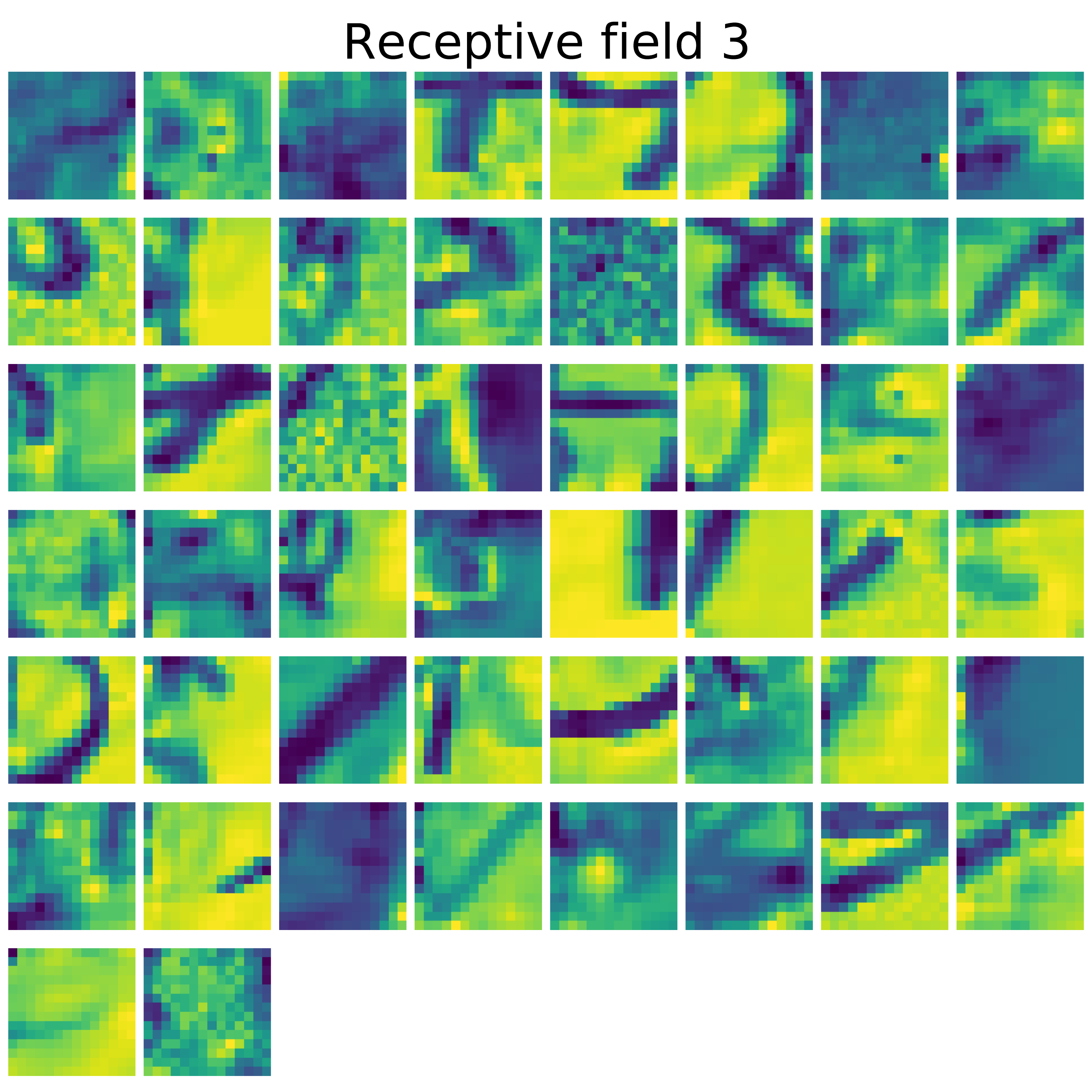}
    \caption{\small \rebuttal{Same as Fig.~\ref{fig:mnist-no-aug} but with data augmentation during contrastive learning. The learned filters are smoother. According to the tentative theory in Sec.~\ref{sec:data-aug-local-opt}, data augmentation removes some of the local optima.}}
    \label{fig:mnist-aug}
\end{figure}

\subsection{MNIST experiments with 2-layer setting}
We also run the same 2-layer network (as in Sec.~\ref{sec:2-layer-appendix}) on MNIST~\cite{deng2012mnist} dataset. In its training, the MNIST dataset consists of $50,000$ images, each with the size of $28$ by $28$. We split the 28-by-28 images into 4 disjoint receptive fields, each with a size of $14$ by $14$, just like Fig.~\ref{fig:2-layer-setting}(b). In each region, we vectorize the receptive field into $14*14=196$ dimensional vector. We use smaller batchsize (8), since there are only 10 true classes in MNIST, and the probability that two samples from the same class are incorrectly treated as negative pair increases with large batchsizes. We train for 50000 minibatches and start new pass (epoch) of the dataset if needed.  

Fig~\ref{fig:mnist-no-aug} shows the initial results. We could see that indeed filters in each receptive field capture a diverse set of pattern of the input data points (e.g., part of the digits). Furthermore, with additional data augmentation (i.e., random cropping and resizing with transform \texttt{transforms.RandomResizedCrop((28,28), scale=(0.5, 1.0), ratio=(0.5, 1.5)} in PyTorch), the resulting learned patterns becomes smoother with much weaker high-frequency components. This is because the augmentation implicitly removes some of the local optima (Sec.~\ref{sec:data-aug-local-opt}), leaving more informative local optima.  

\clearpage

\section{Context and Motivations of theorems}
Here are the motivations and intuitions behind each major theoretical results:
\begin{itemize}
    \item First of all, Lemma~\ref{lemma:intuitioncc} and Corollary~\ref{co:cc-var} try to make connection between a relatively new (and somehow abstract) concept (i.e., contrastive covariance $\con_\alpha[\cdot]$) and a well-known concept (i.e., regular covariance $\var[\cdot]$). This will also enable us to leverage existing properties of covariance, in order to deepen our understanding of this concept, which seems to play an important role in contrastive learning (CL).  
    \item After that, we mainly focus on studying the CL energy function $\cE_\alpha$, which can be represented in terms of the contrastive covariance $\con_\alpha[\cdot]$. One important property of the energy function is whether there exists any local optima and what are the properties of these local optima, since these local optima are the final destinations that network weights will converge into. Previous works in landscape analysis often talk about local optima in neural networks as abstract objects in high-dimensional space, but here, we would like to make them as concrete as possible.  
    \item For such analysis, we always start from the simplest case (e.g., one-layer). Therefore, naturally we have Lemma~\ref{lemma:gradient-dynamics} that characterizes critical points (as a superset of local optima), a few examples in Sec.~\ref{sec:lme2localopt}, properties of these critical points and when they become local optima in Sec.~\ref{lemma:gradient-dynamics}. Finally, Appendix~\ref{sec:data-aug-local-opt} further gives a preliminary study on how the data augmentation affects the distribution of the local optima.   
    \item Then we extend our analysis to 2-layer setting. The key question is to study the additional benefits of 2-layer network compared to $K$ independent 1-layer cases. Here the assumption of \emph{disjoint} receptive fields is to make sure there is an apple-to-apple comparison, otherwise additional complexity would be involved, e.g., overlapping receptive fields. As demonstrated in Theorem~\ref{eq:dyn-one-node-per-RF} and Theorem~\ref{theorem:modulation-added-term}, we find the effect of \emph{global modulation} in 2-layer case, which clearly tells that the interactions across different receptive fields lead to additional terms in the dynamics that favors patterns related to latent variable $z$ that leads to conditional independence across the disjointed receptive fields.  
    \item As a side track, in 2-layer case, we also have Theorem~\ref{thm:linear-gradient-colinear} that shows linear activation does not learn distinct features, which is consistent with 1-layer case that linear activation $h(x) = x$ only gives to maximal PCA directions (Sec.~\ref{theorem:modulation-added-term}). 
\end{itemize}

\section{Other lemmas}

\begin{lemma}[Bound of $1-d_t$]
\label{lemma:bound_dt}
Define 
\begin{equation}
    \mu(\vw) := \frac{1 + c(\vw)}{2c^2(\vw)}\left(\frac{\lambda_{2}(A(\vw\iter{t})}{\lambda_{1}(A(\vw\iter{t})}\right)^2 = \frac{1 + c(\vw)}{2c^2(\vw)}\left[1 - \frac{\lambda_{\gap}(A\iter{t})}{\lambda_1(A\iter{t})}\right]^2
    \ge 0
\end{equation}
and $\mu_t := \mu(\vw\iter{t})$. If $c_t > 0$ and $\lambda_1\iter{t} > 0$, then $1 - d_t \le \mu_t (1-c_t)$.
\end{lemma}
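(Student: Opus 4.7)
The plan is to expand $\vw\iter{t}$ in the eigenbasis of $A\iter{t} := A(\vw\iter{t})$ and exploit the spectral gap directly. Write $\vw\iter{t} = \sum_{i} a_i \phi_i\iter{t}$ with $a_1 = c_t$ and $\sum_{i} a_i^2 = 1$. Then $A\iter{t} \vw\iter{t} = \sum_i a_i \lambda_i\iter{t} \phi_i\iter{t}$, and after normalization
\begin{equation}
d_t \;=\; \phi_1\iter{t\top} \vw\iter{t+1} \;=\; \frac{c_t \lambda_1\iter{t}}{\sqrt{\sum_i a_i^2 (\lambda_i\iter{t})^2}}.
\end{equation}

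The first key step is to bound the denominator using the spectral gap: since $\lambda_i\iter{t} \le \lambda_2\iter{t}$ for $i \ge 2$, I get
\begin{equation}
\sum_i a_i^2 (\lambda_i\iter{t})^2 \;\le\; c_t^2 (\lambda_1\iter{t})^2 + (\lambda_2\iter{t})^2 (1-c_t^2).
\end{equation}
Writing $r := \lambda_2\iter{t}/\lambda_1\iter{t} = 1 - \lambda_\gap\iter{t}/\lambda_1\iter{t}$ (well-defined since $\lambda_1\iter{t} > 0$) and $u := c_t^2 + r^2(1-c_t^2)$, this yields $d_t \ge c_t / \sqrt{u}$, hence
\begin{equation}
1 - d_t \;\le\; 1 - \frac{c_t}{\sqrt{u}} \;=\; \frac{\sqrt{u} - c_t}{\sqrt{u}}.
\end{equation}

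The second key step is a standard rationalization: $\sqrt{u} - c_t = (u - c_t^2)/(\sqrt{u} + c_t) = r^2(1-c_t^2)/(\sqrt{u}+c_t)$, giving
\begin{equation}
1 - d_t \;\le\; \frac{r^2(1-c_t^2)}{\sqrt{u}\,(\sqrt{u}+c_t)}.
\end{equation}
Since $r^2(1-c_t^2) \ge 0$ we have $\sqrt{u} \ge c_t > 0$, so $\sqrt{u}(\sqrt{u}+c_t) \ge c_t \cdot 2c_t = 2c_t^2$. Substituting gives
\begin{equation}
1 - d_t \;\le\; \frac{r^2(1-c_t^2)}{2c_t^2} \;=\; \frac{r^2(1+c_t)(1-c_t)}{2c_t^2} \;=\; \mu_t(1-c_t),
\end{equation}
which is the desired inequality.

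The main obstacle is essentially bookkeeping: one has to recognize that the factor $(1+c_t)/2c_t^2$ in the definition of $\mu_t$ is precisely what pops out of the rationalization together with the lower bound $\sqrt{u}(\sqrt{u}+c_t) \ge 2c_t^2$. There are no convergence or analytic subtleties because everything reduces to a one-step computation on eigencoordinates; the sign assumption $c_t > 0$ is used exactly once, to guarantee $\sqrt{u} \ge c_t$.
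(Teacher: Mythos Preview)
Your proof is correct and takes essentially the same approach as the paper: expand $\vw\iter{t}$ in the eigenbasis of $A\iter{t}$, bound the denominator using $\lambda_i\iter{t} \le \lambda_2\iter{t}$ for $i\ge 2$, and reduce to $d_t \ge c_t/\sqrt{c_t^2 + r^2(1-c_t^2)}$ with $r = \lambda_2\iter{t}/\lambda_1\iter{t}$. The only cosmetic difference is the final step: the paper rewrites this as $(1+x)^{-1/2}$ with $x = r^2(1/c_t^2-1)$ and invokes Bernoulli's inequality $(1+x)^{-1/2}\ge 1-x/2$, whereas you rationalize $\sqrt{u}-c_t$ and use $\sqrt{u}(\sqrt{u}+c_t)\ge 2c_t^2$; both routes yield the identical bound $1-d_t \le r^2(1-c_t^2)/(2c_t^2) = \mu_t(1-c_t)$.
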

\begin{proof}
We could write $d_t$:
\begin{eqnarray}
    d_t &=& \frac{\phi^\top_1\iter{t} \tilde\vw\iter{t+1}}{\|\tilde\vw\iter{t+1}\|_2} = \frac{\lambda_1\iter{t} \phi^\top_1\iter{t}\vw\iter{t}}{\sqrt{\sum_i \lambda^2_i\iter{t} \left(\phi^\top_i\iter{t}\vw\iter{t}\right)^2}} \\
    &\ge& 
    \frac{\lambda_1\iter{t} c_t}{\sqrt{\lambda_1^2\iter{t} c_t^2 + \lambda_2^2\iter{t} (1 - c_t^2)}} = \frac{1}{\sqrt{1 + \left(\frac{\lambda_2\iter{t}}{\lambda_1\iter{t}}\right)^2\left(\frac{1}{c_t^2} - 1\right)}} \\
    &=& \left[1 + \left(\frac{\lambda_2\iter{t}}{\lambda_1\iter{t}}\right)^2\left(\frac{1}{c_t^2} - 1\right)\right]^{-1/2} \\
    &\ge& 1 - \frac{1}{2}\left(\frac{\lambda_2\iter{t}}{\lambda_1\iter{t}}\right)^2\left(\frac{1}{c_t^2} - 1\right) =: 1 - \mu_t (1-c_t)
\end{eqnarray}
The first inequality is due to the fact that $\sum_{i > 1}\lambda^2_i\iter{t} \left(\phi^\top_i\iter{t}\vw\iter{t}\right)^2 = 1 - c^2_t$ (Parseval's identity). 
The last inequality is due to the fact that for $x > -1$, $(1 + x)^\alpha \ge 1 + \alpha x$ when $\alpha \ge 1$ or $\alpha < 0$ (Bernoulli's inequality). Therefore the conclusion holds. 
\end{proof}

\begin{lemma}[Bound of weight difference]
\label{lemma:weight-diff}
If $c_t > 0$ and $\lambda_i\iter{t} > 0$ for all $i$, then $\|\delta\vw\iter{t}\|_2 \le \sqrt{2(1 + \mu_t c_t) (1-c_t)}$ 
\end{lemma}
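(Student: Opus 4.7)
Since both $\vw\iter{t}$ and $\vw\iter{t+1}$ are unit vectors, the identity
\begin{equation}
\|\delta\vw\iter{t}\|_2^2 \;=\; 2\bigl(1 - \vw\iter{t+1}{}^{\!\top}\vw\iter{t}\bigr)
\end{equation}
reduces the lemma to lower-bounding the inner product $\vw\iter{t+1}{}^{\!\top}\vw\iter{t}$. The plan is to expand $\vw\iter{t}$ in the eigenbasis of $A(\vw\iter{t})$, peel off the contribution from the top eigenvector, and then invoke the bound on $d_t$ already established in Lemma~\ref{lemma:bound_dt}.

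First, I would write $a_i := \phi_i^\top\iter{t}\vw\iter{t}$, so $a_1 = c_t$ and $\sum_i a_i^2 = 1$ by Parseval. Using $\vw\iter{t+1} = A(\vw\iter{t})\vw\iter{t}/\|A(\vw\iter{t})\vw\iter{t}\|_2$, a direct computation gives
\begin{equation}
\vw\iter{t+1}{}^{\!\top}\vw\iter{t} \;=\; \frac{\vw\iter{t}{}^{\!\top} A(\vw\iter{t})\vw\iter{t}}{\|A(\vw\iter{t})\vw\iter{t}\|_2} \;=\; \frac{\sum_i \lambda_i\iter{t} a_i^2}{\sqrt{\sum_i \lambda_i^2\iter{t}\, a_i^2}} .
\end{equation}
Since all $\lambda_i\iter{t} > 0$ by assumption, every term in the numerator is nonnegative, so I can drop all but the $i=1$ term to obtain
\begin{equation}
\vw\iter{t+1}{}^{\!\top}\vw\iter{t} \;\ge\; \frac{\lambda_1\iter{t}\, c_t^2}{\|A(\vw\iter{t})\vw\iter{t}\|_2} \;=\; c_t \cdot \frac{\lambda_1\iter{t}\, c_t}{\|A(\vw\iter{t})\vw\iter{t}\|_2} \;=\; c_t\, d_t ,
\end{equation}
where in the last equality I used the definition $d_t = \phi_1^\top\iter{t}\vw\iter{t+1} = \lambda_1\iter{t} c_t/\|A(\vw\iter{t})\vw\iter{t}\|_2$ (which is exactly the quantity computed in the proof of Lemma~\ref{lemma:bound_dt}).

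Finally, applying Lemma~\ref{lemma:bound_dt} yields $d_t \ge 1 - \mu_t(1 - c_t)$, so
\begin{equation}
\vw\iter{t+1}{}^{\!\top}\vw\iter{t} \;\ge\; c_t\bigl[1 - \mu_t(1-c_t)\bigr] \;=\; c_t - \mu_t c_t(1-c_t) ,
\end{equation}
and plugging back into the unit-vector identity gives
\begin{equation}
\|\delta\vw\iter{t}\|_2^2 \;\le\; 2\bigl(1 - c_t + \mu_t c_t(1-c_t)\bigr) \;=\; 2(1-c_t)(1 + \mu_t c_t) ,
\end{equation}
from which the claimed square-root bound follows. The only non-routine step is the factorization $\vw\iter{t+1}{}^{\!\top}\vw\iter{t} \ge c_t d_t$, which hinges on the PD assumption on $A(\vw\iter{t})$ that lets us discard the off-maximal eigen-contributions in the numerator; the rest is bookkeeping.
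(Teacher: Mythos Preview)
Your proof is correct and follows essentially the same route as the paper: both use the unit-vector identity $\|\delta\vw\iter{t}\|_2^2 = 2(1-\vw\iter{t+1}^\top\vw\iter{t})$, expand in the eigenbasis of $A(\vw\iter{t})$, drop the non-maximal terms in the numerator using positivity of the eigenvalues, and then apply the same $1-\mu_t(1-c_t)$ bound. The only cosmetic difference is that you explicitly recognise the intermediate ratio as $c_t d_t$ and cite Lemma~\ref{lemma:bound_dt}, whereas the paper re-derives that bound inline; your version is slightly more economical but not a different argument.
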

\begin{proof}
First, for $\vw^\top\iter{t+1}\vw\iter{t}$, we have (notice that $\lambda_i\iter{t} \ge 0$):
\begin{eqnarray}
  \vw^\top\iter{t+1}\vw\iter{t} &=& \frac{\sum_i \lambda_i\iter{t} \left(\phi^\top_i\iter{t}\vw\iter{t}\right)^2}{\sqrt{\sum_i \lambda^2_i\iter{t} \left(\phi^\top_i\iter{t}\vw\iter{t}\right)^2}} \\
  &\ge& \frac{\lambda_1\iter{t} c_t^2}{\sqrt{\lambda^2_1\iter{t} c_t^2 + \lambda^2_2\iter{t}(1 - c_t^2)}} \ge \left[1 - \mu_t(1-c_t)\right]c_t
\end{eqnarray}
Therefore,
\begin{equation}
    \|\vw\iter{t+1} - \vw\iter{t}\|_2 = \sqrt{2}\sqrt{1 - \vw^\top\iter{t}\vw\iter{t+1}}
    \le \sqrt{2(1 + \mu_t c_t) (1-c_t)}
\end{equation}
\end{proof}

\begin{lemma}
\label{lemma:eigenvector-perturbation}
Let $\delta A = A' - A$, then the maximal eigenvector $\phi_1 := \phi_1(A)$ and $\phi'_1 := \phi_1(A')$ has the following Taylor expansion: 
\begin{equation}
    \phi'_1 = \phi_1 + \Delta\phi_1 + \cO(\|\delta A\|_2^2)    
\end{equation}
where $\lambda_i$ is the $i$-th eigenvalue of $A$, $\Delta\phi_1 := \sum_{j > 1} \frac{\phi^\top_j \delta A \phi_1}{\lambda_1 - \lambda_j} \phi_j$ is the first-order term of eigenvector perturbation. In terms of inequality, there exist $\kappa > 0$ so that:
\begin{equation}
    \|\phi'_1 - (\phi_1 + \Delta\phi_1)\|_2 \le \kappa \|\delta A\|_2^2
\end{equation}
\end{lemma}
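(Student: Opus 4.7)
This is a classical fact from matrix perturbation theory for a simple eigenvalue, and the plan is to derive it by expanding $\phi'_1$ in the eigenbasis of $A$ and matching orders. First, since $\lambda_1$ is a simple (distinct) eigenvalue of the symmetric matrix $A$, standard results (Kato, Rellich) guarantee that for $\|\delta A\|_2$ smaller than a threshold proportional to $\lambda_1 - \lambda_2$, the perturbed matrix $A'=A+\delta A$ still has a simple largest eigenvalue $\lambda'_1$ with associated unit eigenvector $\phi'_1$ depending analytically on the entries of $\delta A$. This analyticity is what ultimately produces the $\mathcal{O}(\|\delta A\|_2^2)$ remainder via Taylor's theorem.

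Second, I would derive the first-order coefficient explicitly. Write $\phi'_1 = \sum_i c_i \phi_i$ in the orthonormal eigenbasis $\{\phi_j\}$ of $A$, and normalize so that $c_1 = \phi_1^\top \phi'_1$; for $\|\delta A\|_2$ small, $c_1 = 1 + \mathcal{O}(\|\delta A\|_2^2)$ by the unit-norm constraint. Substituting into the eigenvalue equation $(A+\delta A)\phi'_1 = \lambda'_1 \phi'_1$ and projecting onto $\phi_j$ for $j>1$ gives
\begin{equation}
c_j \lambda_j + \sum_i c_i\, \phi_j^\top \delta A\, \phi_i \;=\; \lambda'_1 c_j.
\end{equation}
Using $\lambda'_1 = \lambda_1 + \mathcal{O}(\|\delta A\|_2)$ (from the Bauer--Fike bound or an analogous first-order expansion of $\lambda'_1$) and $c_i = \mathcal{O}(\|\delta A\|_2)$ for $i\neq 1$, the dominant balance is
\begin{equation}
(\lambda_1 - \lambda_j)\, c_j \;=\; \phi_j^\top \delta A\, \phi_1 + \mathcal{O}(\|\delta A\|_2^2),
\end{equation}
which, after dividing by the nonzero gap $\lambda_1-\lambda_j$, yields the stated $\delta \phi_1$.

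Third, to obtain the quantitative inequality $\|\phi'_1 - (\phi_1 + \delta\phi_1)\|_2 \le \kappa \|\delta A\|_2^2$, I would invoke Taylor's theorem with remainder on the analytic map $\delta A \mapsto \phi'_1$. Concretely, the constant $\kappa$ is governed by the norm of the second derivative of this map, which by standard resolvent-integral expansions (writing $\phi'_1$ as a contour integral of the resolvent $(zI - A')^{-1}$ around $\lambda_1$) can be bounded in terms of $1/(\lambda_1-\lambda_2)^2$ and higher-order spectral quantities; taking $\kappa$ to be any upper bound on these second-order coefficients over a small ball around $A$ suffices. The main obstacle is keeping the bookkeeping clean: handling the normalization of $\phi'_1$ (which couples $c_1$ to the $c_j$'s quadratically) and verifying that the $\mathcal{O}(\|\delta A\|_2^2)$ term is uniform in $\delta A$ within a neighborhood, rather than writing out explicit constants, which is routine once the resolvent representation is invoked.
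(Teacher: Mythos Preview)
Your proposal is correct and is precisely the standard Rayleigh--Schr\"odinger derivation; the paper does not give an independent proof but simply refers the reader to time-independent perturbation theory in quantum mechanics, which is exactly the argument you have written out.
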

\begin{proof}
See time-independent perturbation theory in Quantum Mechanics~\citep{fernandez2000introduction}.  
\end{proof}

\begin{lemma}
Let $L$ be the minimal Lipschitz constant of $A$ so that $\|A(\vw') - A(\vw)\|_2 \le L \|\vw-\vw'\|_2$ holds. If $c_t > 0$ and $\lambda_i\iter{t} > 0$ for all $i$, then we have: 
\begin{equation}
    |d_t - c_{t+1}| = \Big|\left(\phi_1\iter{t} - \phi_1\iter{t+1}\right)^\top\vw\iter{t+1}\Big| \le \nu_t(1-c_t)
\end{equation}
where 
\begin{equation}
    \nu(\vw) := 2\kappa L^2(1 + \mu(\vw) c(\vw)) + 2L\lambda^{-1}_{\gap}(A\vw\iter{t})\sqrt{\mu(\vw)(1 + \mu(\vw) c(\vw))} \ge 0
\end{equation}
and $\nu_t := \nu(\vw\iter{t})$.
\end{lemma}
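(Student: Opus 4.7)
The quantity of interest is
\[
c_{t+1}-d_t \;=\; \bigl(\phi_1^{(t+1)}-\phi_1^{(t)}\bigr)^\top \vw^{(t+1)},
\]
so the plan is to expand $\phi_1^{(t+1)}$ around $\phi_1^{(t)}$ using the perturbation formula from Lemma~\ref{lemma:eigenvector-perturbation} applied to $A^{(t)}$ and $A^{(t+1)} = A^{(t)} + \delta A^{(t)}$, obtaining $\phi_1^{(t+1)} = \phi_1^{(t)} + \delta\phi_1^{(t)} + r^{(t)}$ with $\|r^{(t)}\|_2 \le \kappa\|\delta A^{(t)}\|_2^2$, and then bound the first- and second-order contributions separately against $\vw^{(t+1)}$.

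First, I will handle the second-order remainder. By the Lipschitz assumption $\|\delta A^{(t)}\|_2 \le L\|\delta\vw^{(t)}\|_2$, and by Lemma~\ref{lemma:weight-diff} the step size obeys $\|\delta\vw^{(t)}\|_2 \le \sqrt{2(1+\mu_t c_t)(1-c_t)}$. Since $\|\vw^{(t+1)}\|_2 = 1$, Cauchy--Schwarz gives
\[
|r^{(t)\top}\vw^{(t+1)}| \;\le\; \kappa\|\delta A^{(t)}\|_2^2 \;\le\; 2\kappa L^2(1+\mu_t c_t)(1-c_t),
\]
which supplies exactly the first summand of $\nu_t(1-c_t)$.

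Next I will tackle the first-order term. Expanding $\vw^{(t+1)}$ in the orthonormal eigenbasis $\{\phi_j^{(t)}\}$ of $A^{(t)}$, the coefficient on $\phi_1^{(t)}$ is $d_t$ by definition, and therefore $\sum_{j>1}(\phi_j^{(t)\top}\vw^{(t+1)})^2 = 1 - d_t^2 \le 2(1-d_t) \le 2\mu_t(1-c_t)$, where the last inequality is Lemma~\ref{lemma:bound_dt}. Using the explicit expression $\delta\phi_1^{(t)} = \sum_{j>1}\frac{\phi_j^{(t)\top}\delta A^{(t)}\phi_1^{(t)}}{\lambda_1^{(t)}-\lambda_j^{(t)}}\phi_j^{(t)}$ together with the eigenvalue ordering $\lambda_1^{(t)}-\lambda_j^{(t)}\ge \lambda_{\gap}^{(t)}$ for $j>1$, one Cauchy--Schwarz gives
\[
|\delta\phi_1^{(t)\top}\vw^{(t+1)}| \;\le\; \frac{\|\delta A^{(t)}\phi_1^{(t)}\|_2}{\lambda_{\gap}^{(t)}}\sqrt{1-d_t^2} \;\le\; \frac{L\sqrt{2(1+\mu_t c_t)(1-c_t)}}{\lambda_{\gap}^{(t)}}\sqrt{2\mu_t(1-c_t)},
\]
which simplifies to $\frac{2L\sqrt{\mu_t(1+\mu_t c_t)}}{\lambda_{\gap}^{(t)}}(1-c_t)$. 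Combining with the remainder bound yields $|c_{t+1}-d_t|\le \nu_t(1-c_t)$ as claimed.

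The minor technical point that I expect to require the most care is making sure the two $\sqrt{1-c_t}$ factors (one from $\|\delta\vw^{(t)}\|_2$ via Lemma~\ref{lemma:weight-diff}, and one from $\sqrt{1-d_t^2}$ via Lemma~\ref{lemma:bound_dt}) multiply to a full $(1-c_t)$ and not $\sqrt{1-c_t}$; this is what forces the specific form of $\nu_t$ (with $\sqrt{\mu_t(1+\mu_t c_t)}$ in the Lipschitz term and $(1+\mu_t c_t)$ in the quadratic-remainder term). Beyond that, the positivity of $\nu_t$ is immediate from $\mu_t,c_t,L,\kappa\ge 0$, and the whole argument only uses the standing hypotheses $c_t>0$ and $\lambda_i^{(t)}>0$ (PD of $A$ is used implicitly to ensure the eigenvalue gap and the perturbation expansion are well-defined).
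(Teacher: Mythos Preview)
Your proposal is correct and follows essentially the same approach as the paper: split $\phi_1^{(t+1)}-\phi_1^{(t)}$ into the first-order perturbation term $\delta\phi_1^{(t)}$ and the quadratic remainder via Lemma~\ref{lemma:eigenvector-perturbation}, bound the remainder by $\kappa L^2\|\delta\vw^{(t)}\|_2^2$ and the first-order term by Cauchy--Schwarz together with Parseval to obtain $\lambda_{\gap}^{-1}\|\delta A\,\phi_1\|_2\sqrt{1-d_t^2}$, and then plug in Lemmas~\ref{lemma:bound_dt} and~\ref{lemma:weight-diff}. The only cosmetic difference is that the paper spells out the intermediate Parseval step $\sum_{j>1}(\phi_j^\top\delta A\,\phi_1)^2\le\|\delta A\,\phi_1\|_2^2$ explicitly, whereas you pass directly to $\|\delta A^{(t)}\phi_1^{(t)}\|_2$.
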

\begin{proof}
Using Lemma~\ref{lemma:eigenvector-perturbation} and the fact that $\|\vw\iter{t+1}\|_2 = 1$, we have:
\begin{equation}
    |d_t - c_{t+1}| = \Big|\left(\phi_1\iter{t} - \phi_1\iter{t+1}\right)^\top\vw\iter{t+1}\Big| \le | \Delta\phi^\top_1\iter{t} \vw\iter{t+1}| + \kappa L^2 \|\delta\vw\iter{t}\|_2^2
\end{equation}
where 
\begin{equation}
    \Delta\phi_1\iter{t} := \sum_{j > 1} \frac{ \phi^\top\iter{t}_j \delta A\iter{t} \phi_1\iter{t}}{\lambda_1\iter{t} - \lambda_j\iter{t}} \phi_j\iter{t}
\end{equation}
and $\delta A\iter{t} := A\iter{t+1} - A\iter{t}$. For brevity, we omit all temporal notation if the quantity is evaluated at iteration $t$. E.g., $\delta\vw$ means $\delta\vw\iter{t}$ and $\phi_1$ means $\phi_1(t)$.

Now we bound $|\Delta \phi_1^\top \vw\iter{t+1}|$. Using Cauchy–Schwarz inequality:  
\begin{eqnarray}
    |\Delta \phi_1^\top \vw\iter{t+1}| &=& \Bigg| \sum_{j > 1} \left(\frac{\phi^\top_j \delta A \phi_1}{\lambda_1 - \lambda_j}\right) \left(\phi^\top_j \vw\iter{t+1}\right) \Bigg| \\
    &\le& \sqrt{\sum_{j > 1} \left(\frac{\phi^\top_j \delta A \phi_1}{\lambda_1 - \lambda_j}\right)^2} \sqrt{\sum_{j>1} \left(\phi^\top_j \vw\iter{t+1}\right)^2} \\
    &\le& \frac{1}{\lambda_{\gap}(A)} \sqrt{\sum_{j > 1} \left(\phi^\top_j \delta A \phi_1\right)^2} \sqrt{\sum_{j>1} \left(\phi^\top_j \vw\iter{t+1}\right)^2}
\end{eqnarray}
Since $\{\phi_j\}$ is a set of orthonormal bases, Parseval's identity tells that for any vector $\vv$, its energy under any orthonormal bases are preserved: $\sum_j (\phi^\top_j \vv)^2 = \|\vv\|_2^2$. Therefore, we have:
\begin{eqnarray}
    |\Delta \phi_1^\top \vw\iter{t+1}| 
    &\le& \frac{1}{\lambda_{\gap}(A)} \| \delta A \phi_1 \|_2 \sqrt{1 - d^2_t} \\
    &\le& \frac{L}{\lambda_{\gap}(A)} \|\delta\vw\iter{t}\|_2 \sqrt{1 - d^2_t}
\end{eqnarray}
Note that using $-1 \le d_t \le 1$ and Lemma~\ref{lemma:bound_dt}, we have: 
\begin{equation}
\sqrt{1-d_t^2} = \sqrt{1+d_t}\sqrt{1-d_t} \le \sqrt{2(1-d_t)} \le \sqrt{2\mu_t(1-c_t)}
\end{equation}
Finally using bound of weight difference (Lemma~\ref{lemma:weight-diff}), we have:
\begin{eqnarray}
    |d_t - c_{t+1}| &\le& 2\kappa L^2 (1 + \mu_t c_t)(1 - c_t) + L\lambda_{\gap}^{-1}\sqrt{2(1 + \mu_t c_t) (1-c_t)} \sqrt{1-d_t^2} \\
    &\le& \nu_t(1-c_t)
\end{eqnarray}
Here $\nu_t := 2\kappa L^2(1 + \mu_t c_t) + 2L\lambda^{-1}_{\gap}(A\iter{t})\sqrt{\mu_t(1 + \mu_t c_t)}$.  
\end{proof}

\begin{lemma}
\label{lemma:onelayerconvergence}
Let $c_0 := c(\vw\iter{0}) = \vw^\top\iter{0} \phi_1(A(\vw\iter{0})) > 0$. Define local region $B_\gamma$:
\begin{equation}
    B_\gamma := \left\{\vw: \|\vw -\vw\iter{0}\|_2 \le \frac{\sqrt{2(1+\gamma)(1-c_0)}}{1 - \sqrt{\gamma}} \right\}
\end{equation}
Define $\omega(\vw) := \mu(\vw) + \nu(\vw)$ to be the irregularity (also defined in Def.~\ref{def:irregularity}). If there exists $\gamma < 1$ so that 
\begin{equation}
    \sup_{\vw \in B_\gamma}  \omega(\vw) \le \gamma, \label{eq:power-iteration-condition-appendix}
\end{equation}
then 
\begin{itemize}
    \item The sequence $\{c_t\}$ increases monotonously and converges to $1$;
    \item There exists $\vw_*$ so that  $\lim_{t\rightarrow+\infty}\vw\iter{t} = \vw_*$. 
    \item $\vw_*$ is the maximal eigenvector of $A(\vw_*)$ and thus a fixed point of gradient update (Eqn.~\ref{eq:gradient-descent-one-node});
    \item For any $t$, $\|\vw\iter{t} - \vw\iter{0}\|_2 \le \frac{\sqrt{2(1+\gamma)(1-c_0)}}{1 - \sqrt{\gamma}}$. 
    \item $\|\vw_* - \vw\iter{0}\|_2 \le \frac{\sqrt{2(1+\gamma)(1-c_0)}}{1 - \sqrt{\gamma}}$. That is, $\vw_*$ is in the vicinity of the initial weight $\vw\iter{0}$. 
\end{itemize}
\end{lemma}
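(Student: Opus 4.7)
I will set up a single induction that jointly tracks (i) the alignment $c_t = \vw\iter{t}{}^\top \phi_1(A(\vw\iter{t}))$ and (ii) the distance $\|\vw\iter{t} - \vw\iter{0}\|_2$, using the three preceding lemmas as black boxes: the bound $1 - d_t \le \mu_t(1-c_t)$, the eigenvector-perturbation bound $|d_t - c_{t+1}| \le \nu_t (1-c_t)$, and the step-size bound $\|\delta\vw\iter{t}\|_2 \le \sqrt{2(1+\mu_t c_t)(1-c_t)}$. The core contraction comes from chaining the first two: $1 - c_{t+1} \le (1-d_t) + (d_t - c_{t+1}) \le (\mu_t + \nu_t)(1-c_t)$, which, \emph{as long as $\vw\iter{t} \in B_\gamma$ so the hypothesis $\mu_t + \nu_t \le \gamma$ is available}, gives the geometric decay $1 - c_{t+1} \le \gamma(1-c_t)$.

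\textbf{Induction.} The hypothesis at step $t$ is that $c_s > 0$ and $\vw\iter{s} \in B_\gamma$ for every $s \le t$, and that $1 - c_s \le \gamma^s(1-c_0)$. The base case $t=0$ is immediate. For the inductive step, since $\vw\iter{t} \in B_\gamma$ I apply assumption \eqref{eq:power-iteration-condition-appendix} to get $\mu_t + \nu_t \le \gamma$, and then the chained bound above yields $1 - c_{t+1} \le \gamma^{t+1}(1-c_0)$, and in particular $c_{t+1} > 0$ since $\gamma < 1$. It remains to verify $\vw\iter{t+1} \in B_\gamma$. Using $\mu_t c_t \le \mu_t + \nu_t \le \gamma$ inside the weight-difference bound,
\begin{equation}
    \|\delta\vw\iter{t}\|_2 \le \sqrt{2(1+\gamma)(1-c_t)} \le \sqrt{2(1+\gamma)(1-c_0)}\,(\sqrt{\gamma})^{t},
\end{equation}
so by the triangle inequality and the geometric series,
\begin{equation}
    \|\vw\iter{t+1} - \vw\iter{0}\|_2 \le \sum_{s=0}^{t} \|\delta\vw\iter{s}\|_2 \le \frac{\sqrt{2(1+\gamma)(1-c_0)}}{1 - \sqrt{\gamma}},
\end{equation}
which places $\vw\iter{t+1}$ in $B_\gamma$ and closes the induction.

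\textbf{Extracting the conclusions.} The same step-size estimate shows $\{\vw\iter{t}\}$ is Cauchy (the tails are bounded by a geometric series in $\sqrt\gamma$), so it converges to some limit $\vw_* \in B_\gamma$, giving items (2), (4), and (5). For (1), monotonicity of $c_t$ follows because $d_t \ge 1 - \mu_t(1-c_t) \ge 1 - \gamma(1-c_t) \ge c_t$ (using $\gamma < 1$ and $c_t \le 1$), while convergence $c_t \to 1$ follows from $1-c_t \le \gamma^t(1-c_0) \to 0$; passing from $d_t \to 1$ to $c_{t+1} \to 1$ uses $|d_t - c_{t+1}| \le \nu_t(1-c_t) \to 0$. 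For the fixed-point property (3), continuity of $A(\cdot)$ and of the maximal-eigenvector map at $\vw_*$ (valid because $\lambda_\gap(\vw_*) > 0$, which is forced by $\nu(\vw_*) < \infty$) together with $c_t \to 1$ give $\phi_1(A(\vw_*))^\top \vw_* = 1$; since both are unit vectors, $\vw_* = \phi_1(A(\vw_*))$, i.e., $A(\vw_*)\vw_* = \lambda_1(A(\vw_*))\vw_*$, which by Theorem~\ref{thm:wstability}(1) is a critical point of Eqn.~\ref{eq:gradient-descent-one-node}.

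\textbf{Main obstacle.} The delicate point is the circularity between the two invariants: the contraction $1 - c_{t+1} \le \gamma(1-c_t)$ is only available when $\vw\iter{t} \in B_\gamma$, but the radius of $B_\gamma$ was chosen precisely so that a geometric step-size sum fits inside it; consequently the two invariants must be propagated \emph{together} in a single induction, and the radius $\sqrt{2(1+\gamma)(1-c_0)}/(1-\sqrt\gamma)$ is tight with no slack. A secondary subtlety is justifying continuity of $\phi_1(A(\vw))$ at $\vw_*$ when reading off the fixed-point identity in the limit, which requires knowing a priori that the eigenvalue gap does not collapse along the trajectory; this is guaranteed by the finiteness of $\nu(\vw)$ inside $B_\gamma$, built into hypothesis \eqref{eq:power-iteration-condition-appendix}.
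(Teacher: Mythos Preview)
Your proposal is correct and follows essentially the same induction-based argument as the paper: jointly propagate $\vw\iter{t}\in B_\gamma$ and $1-c_t\le\gamma^t(1-c_0)$ via the chained bound $1-c_{t+1}\le(\mu_t+\nu_t)(1-c_t)\le\gamma(1-c_t)$, control step sizes geometrically, and conclude by Cauchy convergence. One small slip: your monotonicity line proves $d_t\ge c_t$ rather than $c_{t+1}\ge c_t$, but the correct inequality $c_{t+1}\ge 1-\gamma(1-c_t)\ge c_t$ follows immediately from the contraction you already established (this is exactly what the paper writes).
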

\begin{proof}
We first prove by induction that the following \emph{induction arguments} are true for any $t$: 
\begin{itemize}
    \item $c_{t+1} \ge c_t > 0$;
    \item $1 - c_t \le \gamma^t (1-c_0)$;
    \item $\vw\iter{t}$ is not far away from its initial value $\vw\iter{0}$:
    \begin{equation}
    \|\vw\iter{t} - \vw\iter{0}\|_2 \le \sqrt{2(1+\gamma)(1-c_0)}\sum_{t'=0}^{t-1}\gamma^{t'/2}
    \label{eq:w2init-bound-finite}
    \end{equation}
    which suggests that $\vw\iter{t} \in B_\gamma$.
\end{itemize}

\textbf{Base case ($t=1$)}. Since $1 \ge c_0 > 0$, $\mu(\vw) \ge 0$, and $A(\vw)$ is PD, applying Lemma~\ref{lemma:weight-diff} to $\|\vw\iter{1} - \vw\iter{0}\|_2$, it is clear that  
\begin{equation}
    \|\vw\iter{1} - \vw\iter{0}\|_2 = \|\delta \vw\iter{0}\|_2 \le \sqrt{2}\sqrt{(1 + \mu_0 c_0) (1-c_0)} \le \sqrt{2(1 + \gamma)(1 - c_0)} 
\end{equation}
Note that the last inequality is due to $\mu_0 \le \gamma$. Note that 
\begin{equation}
    1 - c_{1} = 1 - d_0 + d_0 - c_{1} \le 1 - d_t + |d_0 - c_{1}| \le (\mu_0 + \nu_0)(1-c_0) \le \gamma (1-c_0)
\end{equation}
and finally we have $c_1 \ge 1 - \gamma(1-c_0) \ge c_0 > 0$.  So the base case is satisfied. 

\textbf{Inductive step}. Assume for $t$, the induction argument is true and thus $\vw\iter{t}\in B_\gamma$. Therefore, by the condition, we know $\mu_t + \nu_t \le \gamma$. 

By Lemma~\ref{lemma:weight-diff}, we know that
\begin{equation}
     \|\vw\iter{t+1} - \vw\iter{t}\|_2 = \|\delta\vw\iter{t}\|_2 \le \sqrt{2(1 + \mu_t c_t)(1-c_t)} \le \sqrt{2(1+\gamma)(1-c_0)} \gamma^{t/2}
\end{equation}
Therefore, we know that $\vw\iter{t+1}$ also satisfies Eqn.~\ref{eq:w2init-bound-finite}:
\begin{eqnarray}
    \|\vw\iter{t+1} - \vw\iter{0}\|_2 &\le& \|\vw\iter{t} - \vw\iter{0}\|_2 + \|\delta\vw\iter{t}\|_2 \\
    &\le& \sqrt{2(1+\gamma)(1-c_0)}\left[\sum_{t'=0}^{t-1}\gamma^{t'/2} + \gamma^{t/2}\right] \\
    &=& \sqrt{2(1+\gamma)(1-c_0)} \sum_{t'=0}^t\gamma^{t'/2}
\end{eqnarray}
Also we have:
\begin{eqnarray}
    1 - c_{t+1} &=& 1 - d_t + d_t - c_{t+1} \le 1 - d_t + |d_t - c_{t+1}| \\
    &\le& (\mu_t + \nu_t)(1-c_t) \le \gamma (1 - c_t) \\
    &\le& \gamma^{t+1} (1 - c_0)
\end{eqnarray}
and thus we have $c_{t+1} \ge 1 - \gamma(1-c_t) \ge c_t > 0$.

Therefore, we have 
\begin{equation}
    1 - c_t \le \gamma^t (1 - c_0) \rightarrow 0
\end{equation}
thus $c_t$ is monotonously increasing to $1$. This means that:
\begin{equation}
\lim_{t\rightarrow +\infty} c_t = \lim_{t\rightarrow +\infty} \phi_1^\top(t)\vw\iter{t} \rightarrow 1
\end{equation}
Therefore, we can show that $\vw\iter{t}$ is also convergent, by checking how fast $\|\delta \vw\iter{t}\|_2$ decays:
\begin{equation}
    \|\delta \vw\iter{t}\|_2 \le \sqrt{2(1 + \mu_t c_t) (1-c_t)} \le \sqrt{2(1 + \gamma)(1-c_0)} \gamma^{t/2} \label{eq:delta-bound} 
\end{equation}
By Cauchy's convergence test, $\vw\iter{t} = \vw\iter{0} + \sum_{t'=0}^{t-1} \delta \vw\iter{t'}$ also converges. Let 
\begin{equation}
    \lim_{t\rightarrow +\infty}\vw\iter{t} = \vw_*
\end{equation}
This means that $A(\vw_*) \vw_* = \lambda_* \vw_*$ and thus $P_{\vw_*}^\perp A(\vw_*) \vw_* = 0$, i.e., $\vw_*$ is a fixed point of gradient update (Eqn.~\ref{eq:gradient-descent-one-node}). Finally, 
we have:
\begin{equation}
     \|\vw\iter{t} - \vw\iter{0}\|_2 \le \sqrt{2(1+\gamma)(1-c_0)}\sum_{t'=0}^{t-1}\gamma^{t'/2} \le  \frac{\sqrt{2(1+\gamma)(1-c_0)}}{1 - \sqrt{\gamma}}
\end{equation}
Since $\|\cdot\|_2$ is continuous, we have the conclusion. 
\end{proof}

\fi
\end{document}